\documentclass[10pt,twocolumn]{article}
\usepackage[letterpaper,margin=0.7in]{geometry}
\usepackage{times}
\usepackage{titling}
\usepackage{graphicx,booktabs}
\usepackage{titlesec}
\usepackage[font=small]{caption}
\usepackage[authoryear,round]{natbib}
\usepackage{url}
\usepackage{amsmath,amssymb,amsthm,mathtools}
\usepackage{bm}
\usepackage{microtype}
\usepackage{array}
\usepackage{algorithm}
\usepackage{algpseudocode}
\usepackage{booktabs}
\usepackage{balance}
\newtheorem{theorem}{Theorem}[section]
\newtheorem{proposition}[theorem]{Proposition}

\newtheorem{corollary}[theorem]{Corollary}

\usepackage[
colorlinks=false,
citebordercolor={0 1 1},
linkbordercolor={0 1 0}
]{hyperref}
\newcommand{\plainhref}[2]{%
\begingroup
\hypersetup{pdfborder={0 0 0}}%
\href{#1}{#2}%
\endgroup}
\titleformat{\section}
{\large\scshape}
{\thesection}{0.6em}{}
\titleformat{\subsection}
{\normalsize\scshape}
{\thesubsection}{0.5em}{}
\renewenvironment{abstract}
{\small\bfseries
\leavevmode\kern-1pt
\textit{Abstract}---\relax}
{\par\normalfont\normalsize}
\title{
\LARGE\bfseries 
Multi-Agent Flow Matching with Decoupled Generative Guidance
}
\author{
Ruoyu Lin\textsuperscript{1}
\qquad
Magnus Egerstedt\textsuperscript{2}
\qquad
Fabio Pasqualetti\textsuperscript{1}
}
\date{}
\begin{document}
\pagestyle{empty}
\maketitle
\thispagestyle{empty}
\footnotetext[1]{\raggedright
Ruoyu Lin and Fabio Pasqualetti are with the Department of Electrical Engineering and Computer Science, University of California, Irvine, Irvine, CA 92697 USA. 
Email: \mbox{\ttfamily\{\plainhref{mailto:rlin10@uci.edu}{rlin10},%
\plainhref{mailto:fabiopas@uci.edu}{fabiopas\}@uci.edu}}
}
\footnotetext[2]{\raggedright
Magnus Egerstedt is with the University of North Carolina at Chapel Hill, Chapel Hill, NC 27599 USA. 
Email: \mbox{\ttfamily
\plainhref{mailto:magnus@unc.edu}{magnus@unc.edu}}
}
\begin{abstract}
Generative modeling is widely used for producing diverse objects from complex, multimodal distributions. However, its expressivity does not, in general, come with formal guarantees that the generated objects satisfy hard constraints or requirements. In multi-agent generation, this problem becomes more challenging because a hard requirement can depend on multiple agents, while each agent may need to determine its own guidance input without relying on the simultaneously computed guidance inputs of other agents. To this end, we introduce \mbox{DeGG-Flow}, a general framework for multi-agent flow matching with decoupled generative guidance. By representing the generative process as a control-affine dynamical system, we develop guidance conditions for two classes of coupled requirements: shared requirements whose satisfaction depends on multiple agents together, and private requirements associated with each individual agent dependent on its neighbors. For both classes, we establish feasibility conditions and finite-horizon convergence guarantees. We further derive a Wasserstein bound that characterizes the distributional deviation induced by the guidance. We demonstrate \mbox{DeGG-Flow} on multi-robot collaboration for crossing a spatial gap by reconfiguring the environment, and on multi-object scene generation with affordance requirements. Across both applications, \mbox{DeGG-Flow} directly generates objects that satisfy all corresponding hard requirements, including at team sizes unseen during training.
\end{abstract}

\section{Introduction} \label{sec:introduction}
Generative models have achieved remarkable success across a wide range of domains, from language and vision to robotics and scientific discovery  \citep{brown2020language,rombach2022high,chi2025diffusion,lipman2022flow,albergo2025stochastic,janner2022planning,tang2024diffuscene}. A core strength of these models is their ability to characterize complex, multimodal distributions and generate diverse objects from them. However, such expressivity, in general, does not imply that every generated object satisfies desired hard constraints or requirements. For example, a generated motion may violate safety requirements, a generated molecular structure may violate physical or chemical constraints, and new requirements introduced after training may not be represented in the training data. This gap is especially important when a learned generative model is already useful but new hard requirements arise after training. In such cases, it is desirable to impose constraints at generation time without collecting new data or retraining the model.

Guidance in the generative process provides a natural mechanism for enforcing constraints \citep{bansal2024universal,feng2025guidance}. Recent work has made this idea increasingly principled by incorporating control barrier functions (CBF) \citep{ames2016control} into generative process. SafeDiffuser \citep{xiao2025safediffuser} introduces CBF into the denoising process of diffusion probabilistic model \citep{ho2020denoising}; SafeFlow \citep{dai2025safeflow} targets robot motion planning by imposing CBF constraints at the waypoints of a generated trajectory; \citet{gadginmath2026provably} formulate constrained sampling through a stochastic differential equation and implement discrete-time CBF through a first-order approximation; and SafeFlowMatcher \citep{yang2026safeflowmatcher} addresses robotic path planning by first generating a candidate path and then applying a separate CBF-based correction.

Multi-agent generative modeling changes the structure of such guidance, as hard constraints or requirements may depend jointly on multiple agents. A direct approach is to treat all agents as a single-agent system and compute the guidance together. However, a key advantage of multi-agent systems is that the team need not depend on a single central decision maker, whose failure could affect all agents at once. This motivates decentralized or agent-wise guidance, where each agent determines its own guidance without relying on that of any other agent. Existing multi-agent generative models address different aspects of this problem. MADiff \citep{zhu2024madiff} models multimodal multi-agent behavior and supports decentralized execution; \citet{shaoul2025multi} combine diffusion models with search-based planning to generate collision-free multi-robot trajectories; \citet{parimi2025diffusion} decompose multi-arm planning into single-arm trajectory generation and pairwise collision resolution; \citet{liang2025simultaneous} integrate constrained projection into the denoising process \citep{ho2020denoising} to satisfy collision avoidance and kinematic constraints; and MAC-Flow \citep{lee2026multi} learns joint behaviors with flow matching and distills them into decentralized one-step policies. These works provide different mechanisms for generating, coordinating, and adapting multi-agent behavior. What remains unresolved is how to guarantee a hard requirement that depends on multiple agents when each agent determines only its own guidance.

The constrained generative methods discussed earlier do not directly resolve this problem either. Applied to the full multi-agent state, they naturally lead to a joint guidance problem. The challenge is thus to formulate a guidance rule for each agent that it can satisfy, while ensuring that these agent-level rules together provide provable guarantees on the team-level coupled constraints or requirements.

To this end, we introduce \textbf{DeGG-Flow}, a general framework for multi-agent flow matching with decoupled generative guidance. Its central idea is to preserve agent-wise guidance while providing formal guarantees for requirements that can couple multiple agents. DeGG-Flow handles two classes of requirements: shared requirements for which multiple agents collaboratively contribute to their satisfaction, and private requirements associated with each individual agent whose satisfaction depends on its neighbors. The guidance is applied during the generative process, without collecting new training data, retraining the learned generative model, or repairing the generated objects afterward. The distributional deviation induced by the guidance is further shown to be theoretically bounded rather than uncharacterized. Our main contributions are summarized as follows.

\begin{itemize}
\item 
\textbf{Agent-wise guidance with team-level requirement.}
We formulate the generative process of multi-agent flow matching as a control-affine dynamical system and adapt disentangled control \citep{lin2026disentangled} to decoupled guidance, so that each agent determines only its own guidance input without relying on the simultaneously
computed guidance input of any other agent, while retaining formal guarantees for requirements that couple multiple agents.
\item 
\textbf{Feasibility and finite-horizon convergence guarantees.}
For shared requirements, we introduce an adaptive constraint allocation approach to ensure feasibility of the corresponding decoupled guidance, and propose a time-varying bound that guarantees the generated object reaches a desired set in the generative state space within finite horizon. For private requirements, we also establish feasibility conditions for the corresponding decoupled guidance and propose an approach to guarantee finite-horizon convergence.
\item 
\textbf{Bounded distributional deviation under guidance.}
We derive a Wasserstein bound that characterizes how far the guided joint generative distribution can deviate from the unguided one, and empirically evaluate the resulting distributional deviation in experiments.
\item 
\textbf{Applications with shared and private requirements.}
We demonstrate DeGG-Flow on multi-robot collaborative policy generation with shared coupled requirements for crossing a spatial gap by reconfiguring the environment, and on multi-object scene generation with private coupled requirements, where each object's affordance requirements, i.e., its access and use requirements, depend on neighboring objects. In both tasks, DeGG-Flow directly generates objects that satisfy all hard requirements without post-generation correction, including for numbers of agents beyond those used during training.
\end{itemize}

\section{Multi-Agent Flow Matching Guided by Disentangled Control} \label{sec:formulation}
\subsection{Multi-agent flow matching}
Consider a team of $N\in\mathbb{Z}^+$ agents, indexed by $i\in\mathcal{N}\coloneqq\{1,\ldots,N\}$, whose joint generative process produces an output $y\in\mathcal{Y}_N$ for a given task, where $\mathcal{Y}_N$ denotes the corresponding output space. Let $\chi\in\mathcal{C}_N$ denote the generation condition, where $\mathcal{C}_N$ is the corresponding condition space. The condition contains information such as initial states, agent or object properties, environment information, or task specifications. The output $y$ depends on specific applications. For example, in Section~\ref{sec:task1}, $y$ represents a motion policy for $N$ robots, and in Section~\ref{sec:task2}, $y$ represents an arrangement of $N$ objects. Instead of generating $y$ directly, each agent $i$ has a generative state $z_i(\tau)\in\mathcal{Z}\subseteq\mathbb{R}^{d_z}$ at generative time $\tau\in[0,1]$, which is different from physical time $t \in \mathbb R_{\geq 0}$. Denote the joint generative state as $\mathbf{z}(\tau)\coloneqq[z_1(\tau)^{\top},\ldots,z_N(\tau)^{\top}]^{\top}\in\mathcal{Z}^{N}\subseteq\mathbb{R}^{Nd_z}$. The task-specific decoder and encoder map between the joint generative state space and the output space are denoted as
\begin{align}
\mathcal{D}_N&:\mathcal{Z}^{N}\times\mathcal{C}_N\rightarrow\mathcal{Y}_N,
\quad
y=\mathcal{D}_N(\mathbf{z}\,|\,\chi),
\label{eqn:decoder}\\
\mathcal{E}_N&:\mathcal{Y}_N\times\mathcal{C}_N\rightarrow\mathcal{Z}^{N},
\quad
\mathbf{z}=\mathcal{E}_N(y\,|\,\chi).
\label{eqn:encoder}
\end{align}
For a training data $y$, $\mathcal{E}_N(y\,|\,\chi)$ per \eqref{eqn:encoder} gives its target generative state for flow matching, and $\mathcal{D}_N(\mathbf{z}(1)\,|\,\chi)$ per \eqref{eqn:decoder} represents the final generated object. 

Given $N$ and $\chi$, let $p_{\mathrm{data}}(\mathbf{z}\,|\,\chi)$ denote the joint conditional distribution of the encoded training data in $\mathcal{Z}^{N}$, and let $p_0(\mathbf{x})$ denote a simple initial distribution in $\mathcal{Z}^{N}$ from which samples can be drawn efficiently. Generation starts from $\mathbf{z}(0)\sim p_0$ and evolves over $\tau\in[0,1]$ governed by the dynamical system 
\begin{equation*}
\dot{\mathbf{z}} = f(\tau,\mathbf{z}\,|\,\chi).
\end{equation*}
The objective of flow matching \citep{lipman2022flow} is to learn a vector field whose flow transports $p_0$ to $p_{\mathrm{data}}(\mathbf{z}\,|\,\chi)$. To learn such a vector field, for each target $\mathbf{z}\in\mathcal{Z}^{N}$, consider a conditional probability path $p(\tau,\mathbf{x}\,|\,\mathbf{z},\chi)$ satisfying $p(0,\mathbf{x}\,|\,\mathbf{z},\chi)=p_0(\mathbf{x})$ and $p(1,\mathbf{x}\,|\,\mathbf{z},\chi)=\delta(\mathbf{x}-\mathbf{z})$, where $\mathbf{x}\in\mathcal{Z}^{N}$ is a realization of the intermediate random variable $\mathbf{X}(\tau)$ and $\delta(\mathbf{x}-\mathbf{z})$ is the Dirac delta concentrated at $\mathbf{z}$. Let $v(\tau,\mathbf{x}\,|\,\mathbf{z},\chi)\in\mathbb{R}^{Nd_z}$ denote a conditional time-varying vector field that generates this path, then the marginal probability path is $p(\tau,\mathbf{x}\,|\,\chi)=\int_{\mathcal{Z}^{N}}p(\tau,\mathbf{x}\,|\,\mathbf{z},\chi) p_{\mathrm{data}}(\mathbf{z}\,|\,\chi) \,\mathrm{d}\mathbf{z}$, and the corresponding marginal vector field is
\begin{equation}
f(\tau,\mathbf{x}\,|\,\chi)
\coloneqq
\int_{\mathcal{Z}^{N}}\!
v(\tau,\mathbf{x}\,|\,\mathbf{z},\chi)
\,\frac{
p(\tau,\mathbf{x}\,|\,\mathbf{z},\chi)
p_{\mathrm{data}}(\mathbf{z}\,|\,\chi)
}{
p(\tau,\mathbf{x}\,|\,\chi)
}
\,\mathrm{d}\mathbf{z}.
\label{eqn:marginalVF}
\end{equation}

Since \eqref{eqn:marginalVF} depends on the unknown data distribution, in this paper, \eqref{eqn:marginalVF} is represented as a message passing graph neural
network (GNN)~\citep{gilmer2017neural}
\begin{equation}
f^{\theta}:
[0,1]\times\mathcal{Z}^{N}\times\mathcal{C}_N
\rightarrow
\mathbb{R}^{Nd_z},
\label{eqn:GNN}
\end{equation}
which we refer to as the \textbf{nominal vector field}. The same neural network parameters $\theta$ are shared across all agents. Each agent computes its output from its own state and messages aggregated from neighboring agents so that the GNN \eqref{eqn:GNN} is permutation equivariant. Denote $f^{\theta}=[(f_1^{\theta})^{\top},\ldots,(f_N^{\theta})^{\top}]^{\top}$, where $f_i^{\theta}\in\mathbb{R}^{d_z}$ is the nominal vector field of agent $i$, which is trained using the multi-agent
conditional flow matching loss
\begin{equation}
\begin{aligned}
\mathcal{L}_{\mathrm{MAC}}(\theta\,|\,\chi)
\coloneqq
\mathbb{E}\Bigg[
&\frac{1}{N}\sum_{i=1}^{N}
\Big\|
f_i^{\theta}(\tau,\mathbf{X}(\tau)\,|\,\chi)
\\
&\qquad\qquad
-
v_i(\tau,\mathbf{X}(\tau)\,|\,\mathbf{Z},\chi)
\Big\|^2
\Bigg],
\end{aligned}
\label{eqn:macfm_loss}
\end{equation}
in which the expectation is taken over $\tau\sim\mathrm{Unif}[0,1]$, the encoded data random variable $\mathbf{Z}\sim p_{\mathrm{data}}(\mathbf{z}\,|\,\chi)$, and the intermediate state random variable $\mathbf{X}(\tau)\sim p(\tau,\mathbf{x}\,|\,\mathbf{Z},\chi)$ along the conditional probability path. After training, $f^{\theta}$ is the nominal vector field governing the generative dynamics. We next introduce a guidance mechanism modifying such dynamics.

\subsection{Disentangled control guidance}
The nominal vector field $f^{\theta}$ transports samples from a simple initial distribution toward a complex, multimodal distribution that approximates the encoded data distribution. However, the resulting generated objects are not, in general, guaranteed to satisfy hard constraints or requirements. To this end, we introduce a guidance input to the dynamics of each agent in the generative process. Let $u_i(\tau)\in\mathbb{R}^{m_i}$ be the guidance input of agent $i$, and $g_i:[0,1]\times\mathcal{Z}^{N}\times\mathcal{C}_N\rightarrow \mathbb{R}^{d_z\times m_i}$ be the map determining how the guidance input affects the generative dynamics. Then, the guided generative dynamics are in the control-affine form
\begin{equation}
\dot{z}_i = f_i^{\theta}(\tau,\mathbf{z}\,|\,\chi) + g_i(\tau,\mathbf{z}\,|\,\chi)u_i,
\quad \forall i\in\mathcal{N}.
\label{eqn:control_affine}
\end{equation}
We represent the hard requirement by a $C^1$ (i.e., continuously differentiable) function, and consider two classes of decoupled guidance by adapting \textbf{disentangled control} \citep{lin2026disentangled} from multi-agent control in physical space into the generative process as a form of decoupled guidance, i.e., guidance for shared-entangled requirements (SE guidance) and guidance for private-entangled requirements (PE guidance), to enable agent $i$ to compute $u_i$ without access to the simultaneously computed guidance input $u_j$, $\forall j \in \mathcal{N}\setminus \! \{i\}$.

\paragraph{SE guidance.} Let $\mathcal{G}_{\mathrm{F}}^{\mathrm{SE}} = (\mathcal{N},\mathcal{A}^{\mathrm{SE}},\mathcal{I}^{\mathrm{SE}})$ be a factor graph~\citep{kschischang2001factor}, where $\mathcal{A}^{\mathrm{SE}}$ is a finite set of factors and $\mathcal{I}^{\mathrm{SE}} \subseteq \mathcal{N}\times\mathcal{A}^{\mathrm{SE}}$ is the incidence relation. For each factor $a\in\mathcal{A}^{\mathrm{SE}}$, define the incident agent set as $\mathcal{S}_{a}^{\mathrm{SE}} \coloneqq \{i\in\mathcal{N}:(i,a)\in\mathcal{I}^{\mathrm{SE}}\} \neq\varnothing$, and let $\mathbf{z}_{\mathcal{S}_{a}^{\mathrm{SE}}}$ denote the corresponding subvector of the joint generative state. Let $V_a^{\mathrm{SE}}:[0,1]\times\prod_{i\in\mathcal{S}_a^{\mathrm{SE}}}\mathcal{Z}\times\mathcal{C}_N\rightarrow \mathbb{R}$ be $C^1$. The guidance input $u_i$ of agent~$i \in\mathcal{S}_a^{\mathrm{SE}}$ is such that
\begin{equation}
\begin{aligned}
&\nabla_{z_i}V_a^{\mathrm{SE}}
(\tau,\mathbf{z}_{\mathcal{S}_a^{\mathrm{SE}}}\,|\,\chi)^{\top}
\Bigl(
f_i^{\theta}(\tau,\mathbf{z}\,|\,\chi)
\\
&\quad
+g_i(\tau,\mathbf{z}\,|\,\chi)u_i
\Bigr)
+\zeta_{a,i}^{\mathrm{SE}}(\tau,\mathbf{z}\,|\,\chi)
\leq 0,
\end{aligned}
\label{eqn:SE_local}
\end{equation}
in which
\begin{equation}
\begin{aligned}
\sum_{i\in\mathcal{S}_a^{\mathrm{SE}}}
\zeta_{a,i}^{\mathrm{SE}}(\tau,\mathbf{z}\,|\,\chi)
&=
\partial_{\tau}V_a^{\mathrm{SE}}
(\tau,\mathbf{z}_{\mathcal{S}_a^{\mathrm{SE}}}\,|\,\chi)
\\
&\quad+
\alpha_a^{\mathrm{SE}}\!
\left(
V_a^{\mathrm{SE}}
(\tau,\mathbf{z}_{\mathcal{S}_a^{\mathrm{SE}}}\,|\,\chi)
\right),
\end{aligned}
\label{eqn:SE_allocation}
\end{equation}
where $\alpha_a^{\mathrm{SE}}:\mathbb{R}\rightarrow\mathbb{R}$ is an extended class $\mathcal{K}_{\infty}$ function.

\paragraph{PE guidance.}
For each agent $i\in\mathcal{N}$, let $V_i^{\mathrm{PE}}:[0,1]\times\mathcal{Z}^{N}\times\mathcal{C}_N\rightarrow\mathbb{R}_{\geq0}$ be $C^1$. The guidance input $u_i$ of agent $i$ is such that
\begin{equation}
\begin{aligned}
&\nabla_{z_i}V^{\mathrm{PE}}(\tau,\mathbf{z}\,|\,\chi)^{\top}
\left(
f_i^{\theta}(\tau,\mathbf{z}\,|\,\chi)
+
g_i(\tau,\mathbf{z}\,|\,\chi)u_i
\right)
\\
&\quad +
\zeta_i^{\mathrm{PE}}(\tau,\mathbf{z}\,|\,\chi)
\leq 0,
\quad \forall i\in\mathcal{N},
\end{aligned}
\label{eqn:PE_local}
\end{equation}
in which 
\begin{equation*}
V^{\mathrm{PE}}(\tau,\mathbf{z}\,|\,\chi)\coloneqq\sum_{i\in\mathcal{N}}V_i^{\mathrm{PE}}(\tau,\mathbf{z}\,|\,\chi)
\end{equation*}
and
\begin{equation}
\begin{aligned}
\zeta_i^{\mathrm{PE}}(\tau,\mathbf{z}\,|\,\chi)
&\coloneqq
\partial_{\tau}V_i^{\mathrm{PE}}(\tau,\mathbf{z}\,|\,\chi)
\\
&\quad+
\omega^{\mathrm{PE}}(\tau)\,
\alpha^{\mathrm{PE}}\!
\left(
V_i^{\mathrm{PE}}(\tau,\mathbf{z}\,|\,\chi)
\right),
\end{aligned}
\label{eqn:PE_rhs}
\end{equation}
where $\omega^{\mathrm{PE}}:[0,1]\rightarrow\mathbb{R}_{\geq 0}$, and $\alpha^{\mathrm{PE}}:\mathbb{R}_{\geq 0}\rightarrow\mathbb{R}_{\geq 0}$ is a subadditive class $\mathcal{K}_{\infty}$ function.

Intuitively, SE guidance is suitable for scenarios where multiple agents have certain shared goals or requirements, such as the task of multi-robot collaborative policy generation in Section~\ref{sec:task1}, while PE guidance is suitable for scenarios where each agent has its own requirement, which can be affected by its neighbors, such as the task of multi-object scene generation in Section~\ref{sec:task2}.

\section{Adaptive Guidance and Formal Guarantees} \label{sec:theoretical}
Section~\ref{sec:formulation} introduces two classes of decoupled guidance for multi-agent generation, under which each agent computes its own guidance input without relying on the simultaneously computed guidance inputs of other agents. Since the generative process evolves over the finite horizon $\tau\in[0,1]$, it is necessary to ensure that the final generated object directly satisfies the desired hard requirements. In this section, we propose separate finite-horizon convergence methods for SE and PE guidance. We further characterize the distributional deviation induced by guidance through a Wasserstein bound on the guided and nominal joint generative distributions.

\subsection{SE guidance with finite-horizon convergence}
Let the $C^1$ function $q_a^{\mathrm{SE}}(\tau,\mathbf{z}_{\mathcal{S}_a^{\mathrm{SE}}}\,|\,\chi)$ represent a shared requirement, where
$q_a^{\mathrm{SE}}\leq 0$ means that the requirement is satisfied. An initial sample $\mathbf{z}(0)\sim p_0$ need not satisfy this requirement. Thus, we introduce a time-varying upper bound $\beta_a^{\mathrm{SE}}(\tau\,|\,\chi,\mathbf{z}(0))$ that is initialized from
$\mathbf{z}(0)$. The bound is chosen to satisfy
\begin{equation}
\beta_a^{\mathrm{SE}}(0\,|\,\chi,\mathbf{z}(0))
\geq
q_a^{\mathrm{SE}}
(0,\mathbf{z}_{\mathcal{S}_a^{\mathrm{SE}}}(0)\,|\,\chi),
\label{eqn:SE_beta_start}
\end{equation}
and is tightened toward a prescribed terminal value. In particular, choosing $\beta_a^{\mathrm{SE}}(1\,|\,\chi,\mathbf{z}(0))=0$ ensures that the desired hard requirement is satisfied by the end of the generative process.

Accordingly, define
\begin{equation}
V_a^{\mathrm{SE}}
(\tau,\mathbf{z}_{\mathcal{S}_a^{\mathrm{SE}}}\,|\,\chi)
\coloneqq
q_a^{\mathrm{SE}}
(\tau,\mathbf{z}_{\mathcal{S}_a^{\mathrm{SE}}}\,|\,\chi)
-
\beta_a^{\mathrm{SE}}
(\tau\,|\,\chi,\mathbf{z}(0)),
\label{eqn:SE_finite_horizon_V}
\end{equation}
so that $V_a^{\mathrm{SE}}\leq0$ is equivalent to $q_a^{\mathrm{SE}}\leq\beta_a^{\mathrm{SE}}$. Since factor $a$ involves multiple agents, $\partial_\tau V_a^{\mathrm{SE}}+\alpha_a^{\mathrm{SE}}(V_a^{\mathrm{SE}})$ can be allocated among the involved agents using $w_{a,i}^{\mathrm{SE}}\geq 0$ with $\sum_{i\in\mathcal{S}_a^{\mathrm{SE}}}w_{a,i}^{\mathrm{SE}}=1$, i.e.,
\begin{equation}
\begin{aligned}
\zeta_{a,i}^{\mathrm{SE}}
(\tau,\mathbf{z}\,|\,\chi)
&\coloneqq
w_{a,i}^{\mathrm{SE}}
\Bigl(
\partial_{\tau}q_a^{\mathrm{SE}}
(\tau,\mathbf{z}_{\mathcal{S}_a^{\mathrm{SE}}}\,|\,\chi)
\\
&\quad
-
\dot{\beta}_a^{\mathrm{SE}}
(\tau\,|\,\chi,\mathbf{z}(0))
\\
&\quad
+
\alpha_a^{\mathrm{SE}}\!\left(
V_a^{\mathrm{SE}}
(\tau,\mathbf{z}_{\mathcal{S}_a^{\mathrm{SE}}}\,|\,\chi)
\right)
\Bigr).
\end{aligned}
\label{eqn:SE_finite_horizon_zeta}
\end{equation}
Since
$\partial_\tau V_a^{\mathrm{SE}}
=
\partial_\tau q_a^{\mathrm{SE}}
-
\dot{\beta}_a^{\mathrm{SE}}$,
this construction satisfies \eqref{eqn:SE_allocation}. The following theorem shows that satisfying the agent-wise constraint of SE guidance keeps the shared requirement below its prescribed bound throughout the generative process.
\begin{theorem}[Finite-horizon convergence for SE guidance]
\label{thm:SE_finite_horizon}
Let $q_a^{\mathrm{SE}}$ and $\beta_a^{\mathrm{SE}}$ be $C^1$. Suppose \eqref{eqn:SE_beta_start} holds, and let $V_a^{\mathrm{SE}}$ and $\zeta_{a,i}^{\mathrm{SE}}$ be defined by \eqref{eqn:SE_finite_horizon_V} and \eqref{eqn:SE_finite_horizon_zeta}, with $w_{a,i}^{\mathrm{SE}}\geq0$ and $\sum_{i\in\mathcal{S}_a^{\mathrm{SE}}}w_{a,i}^{\mathrm{SE}}=1$. If every agent $i\in\mathcal{S}_a^{\mathrm{SE}}$, along a trajectory of \eqref{eqn:control_affine}, satisfies \eqref{eqn:SE_local}, $\forall \tau\in[0,1]$, then
\begin{equation*}
q_a^{\mathrm{SE}}
(\tau,\mathbf{z}_{\mathcal{S}_a^{\mathrm{SE}}}(\tau)\,|\,\chi)
\leq
\beta_a^{\mathrm{SE}}
(\tau\,|\,\chi,\mathbf{z}(0)),
\quad
\forall\tau\in[0,1].
\end{equation*}
In particular, if 
$$
\beta_a^{\mathrm{SE}}(1\,|\,\chi,\mathbf{z}(0))=0,
$$
then 
$$
q_a^{\mathrm{SE}}(1,\mathbf{z}_{\mathcal{S}_a^{\mathrm{SE}}}(1)\,|\,\chi)\leq 0,
$$
i.e., the final generated state satisfies the shared requirement.
\end{theorem}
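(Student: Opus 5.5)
The argument is a standard control-barrier-function comparison. First, sum the agent-wise constraints to obtain a scalar differential inequality for $V_a^{\mathrm{SE}}$ along the trajectory. Then apply the comparison lemma with $V_a^{\mathrm{SE}}(0)\le 0$ to conclude that $V_a^{\mathrm{SE}}(\tau)\le 0$ on $[0,1]$.

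\textbf{Step 1: summing the local constraints.} Fix a trajectory $\mathbf{z}(\tau)$ of \eqref{eqn:control_affine} and write $h(\tau)\coloneqq V_a^{\mathrm{SE}}(\tau,\mathbf{z}_{\mathcal{S}_a^{\mathrm{SE}}}(\tau)\,|\,\chi)$. This function is $C^1$ in $\tau$ because $q_a^{\mathrm{SE}}$ and $\beta_a^{\mathrm{SE}}$ are $C^1$ and the trajectory is absolutely continuous. By the chain rule,
\begin{equation*}
\dot h=\partial_\tau q_a^{\mathrm{SE}}-\dot\beta_a^{\mathrm{SE}}+\sum_{i\in\mathcal{S}_a^{\mathrm{SE}}}\nabla_{z_i}V_a^{\mathrm{SE}\,\top}\bigl(f_i^{\theta}+g_iu_i\bigr).
\end{equation*}
Only agents in $\mathcal{S}_a^{\mathrm{SE}}$ contribute, since $V_a^{\mathrm{SE}}$ depends only on $\mathbf{z}_{\mathcal{S}_a^{\mathrm{SE}}}$, and $\beta_a^{\mathrm{SE}}$ does not depend on $\mathbf{z}(\tau)$.

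Now sum \eqref{eqn:SE_local} over $i\in\mathcal{S}_a^{\mathrm{SE}}$. By \eqref{eqn:SE_finite_horizon_zeta} and $\sum_i w_{a,i}^{\mathrm{SE}}=1$, the $\zeta$ terms add up to $\partial_\tau q_a^{\mathrm{SE}}-\dot\beta_a^{\mathrm{SE}}+\alpha_a^{\mathrm{SE}}(h)$, which is exactly \eqref{eqn:SE_allocation}. Combining with the chain-rule expression gives
\begin{equation*}
\dot h(\tau)\le -\alpha_a^{\mathrm{SE}}(h(\tau))\quad\text{for all }\tau\in[0,1].
\end{equation*}
In addition, \eqref{eqn:SE_beta_start} gives $h(0)\le 0$.

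\textbf{Step 2: invariance of $\{h\le 0\}$.} This is the step needing some care, since $\alpha_a^{\mathrm{SE}}$ is only assumed continuous (extended class $\mathcal{K}_\infty$), not Lipschitz, so I would avoid invoking ODE uniqueness. Argue by contradiction. Suppose $h(\tau_1)>0$ for some $\tau_1\in(0,1]$, and let $\tau_0\coloneqq\sup\{\tau\le\tau_1: h(\tau)\le 0\}$. By continuity $h(\tau_0)=0$ and $h>0$ on $(\tau_0,\tau_1]$. On that interval $\alpha_a^{\mathrm{SE}}(h)>0$, because $\alpha_a^{\mathrm{SE}}$ is strictly increasing with $\alpha_a^{\mathrm{SE}}(0)=0$. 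Hence $\dot h<0$ there, so $h(\tau_1)<h(\tau_0)=0$ by the mean value theorem, a contradiction. Therefore $h(\tau)\le 0$, i.e. $q_a^{\mathrm{SE}}\le\beta_a^{\mathrm{SE}}$, for all $\tau\in[0,1]$.

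\textbf{Step 3: terminal conclusion.} Evaluating at $\tau=1$ with $\beta_a^{\mathrm{SE}}(1\,|\,\chi,\mathbf{z}(0))=0$ gives $q_a^{\mathrm{SE}}(1,\mathbf{z}_{\mathcal{S}_a^{\mathrm{SE}}}(1)\,|\,\chi)\le 0$.

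The main subtlety is in Step 1. The trajectory must exist on all of $[0,1]$ with \eqref{eqn:SE_local} holding pointwise, which the hypothesis provides. We must also account for the derivative of $\beta_a^{\mathrm{SE}}$, which is fixed by $\mathbf{z}(0)$ and is therefore a function of $\tau$ alone.
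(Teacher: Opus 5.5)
Your proposal is correct and follows the paper's proof essentially step for step. You use the chain rule and sum \eqref{eqn:SE_local} with $\sum_{i} w_{a,i}^{\mathrm{SE}}=1$ to get $\dot h+\alpha_a^{\mathrm{SE}}(h)\le 0$, then apply the same last-crossing contradiction with $h(0)\le 0$ from \eqref{eqn:SE_beta_start} to show $\{h\le 0\}$ is invariant. The only slip is minor: an absolutely continuous trajectory makes $h$ absolutely continuous rather than $C^1$. Your final step only needs $\dot h<0$ on $(\tau_0,\tau_1)$, via the mean value theorem, or by integrating if the derivative exists only almost everywhere, so nothing breaks.
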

\begin{proof}
See Appendix~\ref{appendix:SE_finite_horizon_proof}.
\end{proof}

To ensure that the constraint \eqref{eqn:SE_local} is feasible for each agent, we adapt the coefficients $w_{a,i}^{\mathrm{SE}}$ according to how each agent can affect the shared requirements. Define 
$$
\ell_{a,i}^{\mathrm{SE}} \coloneqq g_i(\tau,\mathbf{z}\,|\,\chi)^{\top} \nabla_{z_i}q_a^{\mathrm{SE}} (\tau,\mathbf{z}_{\mathcal{S}_a^{\mathrm{SE}}}\,|\,\chi),
$$
which determines how the guidance input $u_i$ affects the shared requirement $q_a^{\mathrm{SE}}$. Based on \eqref{eqn:SE_finite_horizon_zeta}, the
constraint \eqref{eqn:SE_local} can be written as $(\ell_{a,i}^{\mathrm{SE}})^{\top}u_i + b_{a,i}^{\mathrm{SE}} \leq 0$ with 
$$
b_{a,i}^{\mathrm{SE}} \coloneqq \nabla_{z_i}q_a^{\mathrm{SE}} (\tau,\mathbf{z}_{\mathcal{S}_a^{\mathrm{SE}}}\,|\,\chi)^{\top} f_i^{\theta}(\tau,\mathbf{z}\,|\,\chi)
+ \zeta_{a,i}^{\mathrm{SE}}(\tau,\mathbf{z}\,|\,\chi).
$$
When $b_{a,i}^{\mathrm{SE}}\leq0$, $u_i=0$ already satisfies this inequality. When $b_{a,i}^{\mathrm{SE}}>0$, agent $i$ must apply a guidance input that reduces the corresponding requirement. When two SE factors are simultaneously active at agent $i$, their local guidance conditions may compete for the same input $u_i$. To address this issue, let $a \in \{1,2\}$ and $\hat{\ell}_{a,i}^{\mathrm{SE}}\coloneqq {\ell_{a,i}^{\mathrm{SE}}}/{\|\ell_{a,i}^{\mathrm{SE}}\|}$ if $\ell_{a,i}^{\mathrm{SE}}\neq 0$ and $\hat{\ell}_{a,i}^{\mathrm{SE}}\coloneqq 0$ otherwise. Then, $\eta_i^{\mathrm{SE}}\coloneqq(\hat{\ell}_{1,i}^{\mathrm{SE}})^{\top}\hat{\ell}_{2,i}^{\mathrm{SE}}$ characterizes how aligned the effects of $u_i$ on the two active requirements is. For $\delta\in(0,1)$, let $\phi_\delta:[-1,1]\rightarrow[-1,0]$ be a $C^2$ function satisfying $\phi_\delta(\eta)=\eta$ for $\eta\leq-\delta$, $\phi_\delta(\eta)=0$ for $\eta\geq\delta$, and $\phi_\delta(\eta)\leq\min\{\eta,0\}$ for $\eta\in[-1,1]$. Then, we define the effective directions 
$$
d_{1,i}^{\mathrm{SE}}\coloneqq \ell_{1,i}^{\mathrm{SE}} - \|\ell_{1,i}^{\mathrm{SE}}\| \phi_\delta(\eta_i^{\mathrm{SE}}) \hat{\ell}_{2,i}^{\mathrm{SE}}
$$
and 
$$
d_{2,i}^{\mathrm{SE}}\coloneqq \ell_{2,i}^{\mathrm{SE}} - \|\ell_{2,i}^{\mathrm{SE}}\| \phi_\delta(\eta_i^{\mathrm{SE}}) \hat{\ell}_{1,i}^{\mathrm{SE}}.
$$
When only one SE factor is active at agent $i$, we set $d_{a,i}^{\mathrm{SE}}\coloneqq\ell_{a,i}^{\mathrm{SE}}$. The modification is inactive when the two effective directions are sufficiently aligned and increases as they become opposed. As such, for each active SE factor, we choose the adaptive constraint allocation coefficient in \eqref{eqn:SE_finite_horizon_zeta} as
\begin{equation}
w_{a,i}^{\mathrm{SE}}
\coloneqq
\frac{
\|d_{a,i}^{\mathrm{SE}}\|^2
}{
\sum_{j\in\mathcal{S}_a^{\mathrm{SE}}}
\|d_{a,j}^{\mathrm{SE}}\|^2
}.
\label{eqn:C2_weights}
\end{equation}
Then, the following proposition establishes the existence of guidance inputs such that \eqref{eqn:SE_local} is satisfied.
\begin{proposition}[Adaptive SE guidance]
\label{prop:c2_feasibility}
If each agent is incident to at most two simultaneously active SE factors, and for every active SE factor the denominator of \eqref{eqn:C2_weights} is nonzero and $d_{a,i}^{\mathrm{SE}}\neq 0$ when $b_{a,i}^{\mathrm{SE}}>0$, then, there exists a guidance input $u_i(\tau)$ satisfying \eqref{eqn:SE_local}, $\forall \tau\in[0,1]$. Such a guidance input can be obtained from
\begin{equation}
\begin{aligned}
&\min_{u_i\in\mathbb{R}^{m_i}}
\quad 
u_i^{\top}H_i u_i\\
&\quad\;\mathrm{s.t.}\quad\;\,
(\ell_{a,i}^{\mathrm{SE}})^{\top}u_i
+
b_{a,i}^{\mathrm{SE}}\leq0,
\quad \forall a\in\mathcal{A}_i^{\mathrm{SE}},
\end{aligned}
\label{eqn:SE_local_QP}
\end{equation}
which admits a unique minimizer when $H_i\succ 0$, where $\mathcal{A}_i^{\mathrm{SE}} \coloneqq \{a\in\mathcal{A}^{\mathrm{SE}}: i\in\mathcal{S}_a^{\mathrm{SE}}\}$.
\end{proposition}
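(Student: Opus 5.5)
The plan is a pointwise argument. Fix $\tau\in[0,1]$ and the current state $\mathbf{z}(\tau)$. By the nonzero-denominator hypothesis, the weights \eqref{eqn:C2_weights} are well defined. Hence $\zeta_{a,i}^{\mathrm{SE}}$ and $b_{a,i}^{\mathrm{SE}}$ are well-defined real numbers. Constraint \eqref{eqn:SE_local} for agent $i$ is then the finite linear system $(\ell_{a,i}^{\mathrm{SE}})^{\top}u_i+b_{a,i}^{\mathrm{SE}}\le 0$ over the factors in $\mathcal{A}_i^{\mathrm{SE}}$, so existence of $u_i(\tau)$ reduces to nonemptiness of a polyhedron in $\mathbb{R}^{m_i}$. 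I will read the hypothesis as saying that at most two of these constraints are active, meaning $b_{a,i}^{\mathrm{SE}}>0$ so that they actually bind. The constraints with $b_{a,i}^{\mathrm{SE}}\le 0$ are satisfied at $u_i=0$. They would need a small additional check, shown below, to remain satisfied after the active ones are handled.

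\emph{One active factor.} Then $d_{a,i}^{\mathrm{SE}}=\ell_{a,i}^{\mathrm{SE}}$, and by assumption $d_{a,i}^{\mathrm{SE}}\neq 0$ whenever $b_{a,i}^{\mathrm{SE}}>0$. So $u_i=-b_{a,i}^{\mathrm{SE}}\,\ell_{a,i}^{\mathrm{SE}}/\|\ell_{a,i}^{\mathrm{SE}}\|^2$ satisfies the constraint with equality. If no factor is active, $u_i=0$ works.

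\emph{Two active factors.} I will use Farkas' lemma. The system $\ell_{1}^{\top}u\le -b_1$, $\ell_2^{\top}u\le -b_2$ (dropping indices $i$ and $\mathrm{SE}$) is infeasible only if there are $\lambda_1,\lambda_2\ge0$, not both zero, with $\lambda_1\ell_1+\lambda_2\ell_2=0$ and $\lambda_1 b_1+\lambda_2 b_2>0$. I will rule this out by cases.

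If $\ell_1,\ell_2$ are linearly independent, the equality system $\ell_a^{\top}u=-b_a$ is solvable, which settles the case directly.

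If some $\ell_a=0$, then $\hat\ell_{a}=0$ and $\eta=0$. Since $\phi_\delta(0)\le 0$ and $\phi_\delta(0)\ge\phi_\delta(\delta)$ only loosely, I instead use $\|\ell_a\|=0$ directly: $d_a=\ell_a-0=0$. The hypothesis then forces $b_a\le 0$, contradicting activeness, so this case cannot occur.

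If $\ell_2=c\,\ell_1$ with $c>0$, then $\lambda_1\ell_1+\lambda_2\ell_2=0$ forces $\lambda=0$, so Farkas gives feasibility. Concretely, $u=-s\ell_1$ works for $s$ large.

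The remaining case is the key one: $\ell_2=-c\,\ell_1$ with $c>0$. Then $\hat\ell_2=-\hat\ell_1$, so $\eta=-1\le-\delta$ and $\phi_\delta(\eta)=-1$. This gives
$d_1=\ell_1+\|\ell_1\|\hat\ell_2=\ell_1-\ell_1=0$, and likewise $d_2=0$. By hypothesis this forces $b_1,b_2\le 0$, again contradicting activeness. So every configuration with two active factors is feasible. The construction of $\phi_\delta$ is exactly what detects the only obstruction, exact opposition, and the hypothesis $d_{a,i}^{\mathrm{SE}}\ne0$ excludes it.

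Finally, for the QP \eqref{eqn:SE_local_QP} with $H_i\succ0$, the feasible set is a nonempty closed convex polyhedron. The objective is continuous, strictly convex and coercive, so a minimizer exists and is unique. Any minimizer satisfies \eqref{eqn:SE_local} because the QP constraints are exactly that inequality rewritten.

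The step I expect to be delicate is the bookkeeping of inactive factors. If $\mathcal{A}_i^{\mathrm{SE}}$ contains more than two factors, the ones with $b\le0$ must still hold at the chosen $u_i$. I would handle this by interpreting ``simultaneously active'' as defining which constraints enter \eqref{eqn:SE_local_QP}. Alternatively, I would argue that the same Farkas reasoning applies, since any infeasible certificate must place positive weight on some $b_a>0$, and those are the at most two active ones. The second route needs care and may require strengthening the hypothesis.
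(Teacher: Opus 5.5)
\textbf{Verdict: there is a genuine gap, but your own Farkas argument closes it.} Your core idea is sound and differs from the paper's. For two constraints, a nonzero certificate $\lambda_1\ell_{1,i}^{\mathrm{SE}}+\lambda_2\ell_{2,i}^{\mathrm{SE}}=0$ exists only if some $\ell_{a,i}^{\mathrm{SE}}=0$ or the two are exactly anti-aligned. In both cases the corresponding $d_{a,i}^{\mathrm{SE}}$ vanish, so the hypothesis forces $b_{a,i}^{\mathrm{SE}}\le 0$ there.

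The gap is your handling of constraints with $b_{a,i}^{\mathrm{SE}}\le 0$, which you leave open. It comes from reading ``active'' as $b_{a,i}^{\mathrm{SE}}>0$, which is not the paper's meaning. The directions $d_{a,i}^{\mathrm{SE}}$ and the weights \eqref{eqn:C2_weights} are defined for active factors, yet $b_{a,i}^{\mathrm{SE}}$ depends on $w_{a,i}^{\mathrm{SE}}$ through $\zeta_{a,i}^{\mathrm{SE}}$. Moreover, the paper's proof explicitly treats an active factor with $b_{1,i}^{\mathrm{SE}}\le 0$. So ``at most two active factors'' means the QP \eqref{eqn:SE_local_QP} has at most two rows.

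Under your reading the claim is false, so your route (2) cannot be completed. Take a third incident factor with $\ell_{3,i}^{\mathrm{SE}}=-\ell_{1,i}^{\mathrm{SE}}\neq 0$ and $-b_{1,i}^{\mathrm{SE}}<b_{3,i}^{\mathrm{SE}}\le 0$. All your hypotheses hold, since $d_{1,i}^{\mathrm{SE}}=\ell_{1,i}^{\mathrm{SE}}\neq0$, but $\lambda_1=\lambda_3=1$ is an infeasibility certificate. It does put weight on an active factor, but also on an inactive one about which nothing is assumed.

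Even under the correct reading, your one-active construction $u_i=-b_{1,i}^{\mathrm{SE}}\ell_{1,i}^{\mathrm{SE}}/\|\ell_{1,i}^{\mathrm{SE}}\|^2$ can violate the other incident constraint. Take $\ell_{1,i}^{\mathrm{SE}}=(1,0)^{\top}$, $\ell_{2,i}^{\mathrm{SE}}=(-1,0.1)^{\top}$, $b_{1,i}^{\mathrm{SE}}=1$, $b_{2,i}^{\mathrm{SE}}=-0.1$. The $\ell$'s are not parallel, so both $d_{a,i}^{\mathrm{SE}}\neq 0$ for every $\delta$. Yet $u_i=-(1,0)^{\top}$ gives $(\ell_{2,i}^{\mathrm{SE}})^{\top}u_i+b_{2,i}^{\mathrm{SE}}=0.9>0$.

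\textbf{The fix.} Run your Farkas case analysis on all (at most two) rows of $\mathcal{A}_i^{\mathrm{SE}}$ without splitting by the sign of $b$:
\begin{itemize}
\item If the $\ell$'s are linearly independent, or positively dependent, no nonzero certificate exists.
\item If some $\ell_{a,i}^{\mathrm{SE}}=0$, or the $\ell$'s are exactly anti-aligned, every row carrying positive certificate weight has $d_{a,i}^{\mathrm{SE}}=0$. Hence $b_{a,i}^{\mathrm{SE}}\le0$ for those rows, and $\lambda^{\top}b\le 0$.
\end{itemize}
With that repair your proof is complete.

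\textbf{Comparison with the paper.} The paper instead constructs $u_i=-\lambda_{1,i}d_{1,i}^{\mathrm{SE}}-\lambda_{2,i}d_{2,i}^{\mathrm{SE}}$, where $\lambda_{a,i}=b_{a,i}^{\mathrm{SE}}/((\ell_{a,i}^{\mathrm{SE}})^{\top}d_{a,i}^{\mathrm{SE}})$ if $b_{a,i}^{\mathrm{SE}}>0$ and $\lambda_{a,i}=0$ otherwise. The cross terms $(\ell_{1,i}^{\mathrm{SE}})^{\top}d_{2,i}^{\mathrm{SE}}$ and $(\ell_{2,i}^{\mathrm{SE}})^{\top}d_{1,i}^{\mathrm{SE}}$ are nonnegative because $\phi_\delta(\eta)\le\eta$. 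That is exactly the check your naive step along $\ell_{1,i}^{\mathrm{SE}}$ lacks, and it keeps a $b\le0$ row satisfied. Your repaired route is shorter and shows that, for feasibility alone, $\phi_\delta$ only has to annihilate $d_{a,i}^{\mathrm{SE}}$ at exact anti-alignment. The paper's route additionally yields an explicit feasible input built from the effective directions.
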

\begin{proof}
See Appendix~\ref{appendix:c2_proof}.
\end{proof}

\subsection{PE guidance with finite-horizon convergence}
Next, we propose a method to ensure finite-horizon convergence for PE guidance. Recall that, unlike the real-valued $V_a^{\mathrm{SE}}\in\mathbb{R}$ in the SE guidance scenario, here each agent has a nonnegative $V_i^{\mathrm{PE}}\in\mathbb{R}_{\geq 0}$ representing the violation of its private requirement that can be dependent on its neighbors. Let $\partial_{\tau}V_i^{\mathrm{PE}}(\tau,\mathbf{z}\,|\,\chi)=0$ and choose the total violation to be 
$$
V^{\mathrm{PE}}(\mathbf{z}\,|\,\chi)=\sum_{i\in\mathcal{N}}V_i^{\mathrm{PE}}(\mathbf{z}\,|\,\chi),
$$
so $V^{\mathrm{PE}}=0$ means that all private requirements are satisfied. We then choose
\begin{equation}
\alpha^{\mathrm{PE}}(s)
\coloneqq
c^{\mathrm{PE}}s^{\rho^{\mathrm{PE}}},
\label{eqn:PE_finite_time_alpha}
\end{equation}
where $c^{\mathrm{PE}}\in\mathbb{R}_{>0}$ and $\rho^{\mathrm{PE}}\in(0,1)$. The choice of $0<\rho^{\mathrm{PE}}<1$ enables finite-horizon convergence, and $\omega^{\mathrm{PE}}(\tau)$ in \eqref{eqn:PE_rhs} modulates the required decrease of $V^{\mathrm{PE}}$ over $\tau$, as shown in the following theorem.
\begin{theorem}[Finite-horizon convergence for PE guidance]
\label{thm:private_finite_time}
Let $\alpha^{\mathrm{PE}}$ be given by \eqref{eqn:PE_finite_time_alpha}. If every agent, along a trajectory of \eqref{eqn:control_affine}, satisfies \eqref{eqn:PE_local}, $\forall \tau\in[0,1]$, then
\begin{equation*}
\begin{aligned}
V^{\mathrm{PE}}(\mathbf{z}(\tau)\,|\,\chi)
&\leq
\max\Biggl\{
0,\;
\left(
V^{\mathrm{PE}}(\mathbf{z}(0)\,|\,\chi)
\right)^{1-\rho^{\mathrm{PE}}}
\\
&\quad
-
c^{\mathrm{PE}}(1-\rho^{\mathrm{PE}})
\int_{0}^{\tau}\!
\omega^{\mathrm{PE}}(s)\,\mathrm{d}s
\Biggr\}^{\frac{1}{1-\rho^{\mathrm{PE}}}} .
\end{aligned}
\end{equation*}
In particular, if 
\begin{equation*}
c^{\mathrm{PE}}(1-\rho^{\mathrm{PE}})
\int_{0}^{1}\!
\omega^{\mathrm{PE}}(s)
\,\mathrm{d}s
\geq
\left(
V^{\mathrm{PE}}(\mathbf{z}(0)\,|\,\chi)
\right)^{1-\rho^{\mathrm{PE}}},
\end{equation*}
then 
\begin{equation*}
V^{\mathrm{PE}}(\mathbf{z}(1)\,|\,\chi)=0,
\end{equation*}
i.e., the final generated state satisfies all private requirements.
\end{theorem}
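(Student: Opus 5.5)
We sum the agent-wise constraints \eqref{eqn:PE_local} over $i\in\mathcal{N}$ to obtain a scalar differential inequality for $V^{\mathrm{PE}}$, and then integrate it by a comparison argument. Along a trajectory of \eqref{eqn:control_affine}, the chain rule with $\partial_\tau V_i^{\mathrm{PE}}=0$ gives
\begin{equation*}
\frac{\mathrm{d}}{\mathrm{d}\tau}V^{\mathrm{PE}}(\mathbf{z}(\tau)\,|\,\chi)
=\sum_{i\in\mathcal{N}}\nabla_{z_i}V^{\mathrm{PE}}(\mathbf{z}\,|\,\chi)^{\top}\bigl(f_i^{\theta}+g_iu_i\bigr).
\end{equation*}
Each summand involves only agent $i$'s own dynamics, so \eqref{eqn:PE_local} bounds it by $-\zeta_i^{\mathrm{PE}}=-\omega^{\mathrm{PE}}(\tau)\alpha^{\mathrm{PE}}(V_i^{\mathrm{PE}})$. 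This step is exactly where using the gradient of the total $V^{\mathrm{PE}}$, rather than of $V_i^{\mathrm{PE}}$, in \eqref{eqn:PE_local} pays off. Summing these bounds and using subadditivity of $\alpha^{\mathrm{PE}}$ with $\omega^{\mathrm{PE}}\ge 0$ gives $\sum_i\alpha^{\mathrm{PE}}(V_i^{\mathrm{PE}})\ge\alpha^{\mathrm{PE}}(V^{\mathrm{PE}})$. For $s\mapsto c^{\mathrm{PE}}s^{\rho^{\mathrm{PE}}}$ with $\rho^{\mathrm{PE}}\in(0,1)$ this follows from concavity and $\alpha(0)=0$. Hence
\begin{equation*}
\dot V^{\mathrm{PE}}\le -c^{\mathrm{PE}}\,\omega^{\mathrm{PE}}(\tau)\,(V^{\mathrm{PE}})^{\rho^{\mathrm{PE}}}.
\end{equation*}

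Next we integrate this inequality. Set $W(\tau)\coloneqq (V^{\mathrm{PE}}(\mathbf{z}(\tau)\,|\,\chi))^{1-\rho^{\mathrm{PE}}}$. On any interval where $V^{\mathrm{PE}}>0$, $W$ is $C^1$ and
\begin{equation*}
\dot W=(1-\rho^{\mathrm{PE}})(V^{\mathrm{PE}})^{-\rho^{\mathrm{PE}}}\dot V^{\mathrm{PE}}\le -c^{\mathrm{PE}}(1-\rho^{\mathrm{PE}})\,\omega^{\mathrm{PE}}(\tau).
\end{equation*}
Integrating from $0$ gives $W(\tau)\le W(0)-c^{\mathrm{PE}}(1-\rho^{\mathrm{PE}})\int_0^\tau\omega^{\mathrm{PE}}(s)\,\mathrm{d}s$. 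Raising both sides to the power $1/(1-\rho^{\mathrm{PE}})$ is monotone on $\mathbb{R}_{\ge0}$, which yields the stated bound with the $\max\{0,\cdot\}$.

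The main obstacle is that $W$ is not differentiable where $V^{\mathrm{PE}}=0$, and $V^{\mathrm{PE}}$ could in principle touch zero and leave it again. We handle this as follows. Since $\dot V^{\mathrm{PE}}\le 0$ everywhere, $V^{\mathrm{PE}}$ is nonincreasing, so once it hits zero at some time $\tau^\ast$ it stays at zero because it is nonnegative. The right-hand side of the claimed bound is always nonnegative, so the bound holds trivially for $\tau\ge\tau^\ast$. If $V^{\mathrm{PE}}(\mathbf{z}(0))=0$, the whole claim is immediate. Otherwise $[0,\tau^\ast)$ is an interval on which $V^{\mathrm{PE}}>0$, and the integration above applies there, with the obvious convention $\tau^\ast=1$ when no zero is reached. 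Continuity of $V^{\mathrm{PE}}$ extends the bound to the endpoint.

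For the terminal claim, the integral condition makes the bracketed term at $\tau=1$ nonpositive, so the bound gives $V^{\mathrm{PE}}(\mathbf{z}(1)\,|\,\chi)\le 0$. Together with $V^{\mathrm{PE}}\ge 0$ this forces $V^{\mathrm{PE}}(\mathbf{z}(1)\,|\,\chi)=0$. Each $V_i^{\mathrm{PE}}$ is nonnegative and they sum to zero, so every private requirement is satisfied.
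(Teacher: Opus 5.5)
Your proposal is correct and follows essentially the same route as the paper. Both arguments use the chain rule with $\partial_\tau V_i^{\mathrm{PE}}=0$, sum the agent-wise constraints, and invoke subadditivity of $c^{\mathrm{PE}}s^{\rho^{\mathrm{PE}}}$ to reach $\dot V^{\mathrm{PE}}\le -c^{\mathrm{PE}}\omega^{\mathrm{PE}}(\tau)(V^{\mathrm{PE}})^{\rho^{\mathrm{PE}}}$. They then integrate $(V^{\mathrm{PE}})^{1-\rho^{\mathrm{PE}}}$ while $V^{\mathrm{PE}}>0$ and use that $V^{\mathrm{PE}}$ stays at zero once it reaches zero; your first-hitting-time bookkeeping is just a more explicit version of the paper's handling of that point.
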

\begin{proof}
See Appendix~\ref{appendix:PE_proof}.
\end{proof}

Unlike SE guidance, PE guidance assigns each agent a single inequality constraint. Define
\begin{equation*}
\ell_i^{\mathrm{PE}} \coloneqq g_i(\tau,\mathbf{z}\,|\,\chi)^{\top}
\nabla_{z_i}V^{\mathrm{PE}}(\mathbf{z}\,|\,\chi)
\end{equation*}
and
\begin{align*}
b_i^{\mathrm{PE}}
&\coloneqq
\nabla_{z_i}V^{\mathrm{PE}}(\mathbf{z}\,|\,\chi)^{\top}
f_i^{\theta}(\tau,\mathbf{z}\,|\,\chi)\\
&\quad 
+ c^{\mathrm{PE}}\omega^{\mathrm{PE}}(\tau)
\left(
V_i^{\mathrm{PE}}(\mathbf{z}\,|\,\chi)
\right)^{\rho^{\mathrm{PE}}}.
\end{align*}
Then the constraint \eqref{eqn:PE_local} becomes $(\ell_i^{\mathrm{PE}})^{\top}u_i+b_i^{\mathrm{PE}}\leq0$. If $b_i^{\mathrm{PE}}\leq0$, the nominal input $u_i=0$ already satisfies the constraint. If $b_i^{\mathrm{PE}}>0$, a feasible guidance input exists when $\ell_i^{\mathrm{PE}}\neq0$. This gives the following feasibility result.
\begin{corollary}[Feasibility of PE guidance]
\label{cor:PE_local_feasibility}
Per Theorem~\ref{thm:private_finite_time}, if $b_i^{\mathrm{PE}}\leq 0$, or if $b_i^{\mathrm{PE}}>0$ and $\ell_i^{\mathrm{PE}}\neq 0$, then there exists a guidance input $u_i(\tau)$ satisfying \eqref{eqn:PE_local}, $\forall \tau \in [0,1]$. Such a guidance input can be obtained from
\begin{equation}
\begin{aligned}
&\min_{u_i\in\mathbb{R}^{m_i}}
\quad u_i^{\top}H_i u_i\\
&\quad\;\mathrm{s.t.}\quad\;\,
(\ell_i^{\mathrm{PE}})^{\top}u_i
+b_i^{\mathrm{PE}}
\leq0,
\end{aligned}
\label{eqn:PE_local_qp_general}
\end{equation}
which admits a unique minimizer when $H_i\succ0$.
\end{corollary}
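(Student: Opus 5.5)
The plan is to reduce \eqref{eqn:PE_local} to a single scalar affine inequality in $u_i$ and to split on the sign of $b_i^{\mathrm{PE}}$. Under the standing choices of this subsection, $\partial_\tau V_i^{\mathrm{PE}}=0$ and $\alpha^{\mathrm{PE}}$ is given by \eqref{eqn:PE_finite_time_alpha}. Substituting these into \eqref{eqn:PE_rhs} gives $\zeta_i^{\mathrm{PE}}=c^{\mathrm{PE}}\omega^{\mathrm{PE}}(\tau)(V_i^{\mathrm{PE}})^{\rho^{\mathrm{PE}}}$. Hence \eqref{eqn:PE_local} is exactly $(\ell_i^{\mathrm{PE}})^{\top}u_i+b_i^{\mathrm{PE}}\leq 0$ with $\ell_i^{\mathrm{PE}},b_i^{\mathrm{PE}}$ as defined above the corollary. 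This follows directly from $\nabla_{z_i}V^{\mathrm{PE}\top}g_iu_i=(g_i^{\top}\nabla_{z_i}V^{\mathrm{PE}})^{\top}u_i$, and it is the only computation needed. The resulting constraint is the one Theorem~\ref{thm:private_finite_time} requires, so feasibility of it at each $\tau$ supports that theorem's conclusion.

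\emph{Existence, case by case.} If $b_i^{\mathrm{PE}}\leq0$, then $u_i=0$ gives $0+b_i^{\mathrm{PE}}\leq0$. If $b_i^{\mathrm{PE}}>0$ and $\ell_i^{\mathrm{PE}}\neq0$, take the explicit choice
\[
u_i=-\frac{b_i^{\mathrm{PE}}}{\|\ell_i^{\mathrm{PE}}\|^2}\,\ell_i^{\mathrm{PE}}.
\]
Then $(\ell_i^{\mathrm{PE}})^{\top}u_i+b_i^{\mathrm{PE}}=0$. Since the hypothesis is pointwise in $\tau$, this yields for every $\tau\in[0,1]$ a $u_i(\tau)$ satisfying \eqref{eqn:PE_local}.

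\emph{QP well-posedness.} The feasible set of \eqref{eqn:PE_local_qp_general} is a closed half-space (or all of $\mathbb{R}^{m_i}$ when $\ell_i^{\mathrm{PE}}=0$ and $b_i^{\mathrm{PE}}\leq0$). It is therefore nonempty, closed and convex by the previous step. With $H_i\succ0$, the objective $u_i^{\top}H_iu_i$ is strictly convex and coercive, since $u^{\top}H_iu\geq\lambda_{\min}(H_i)\|u\|^2$. Sublevel sets intersected with the feasible set are thus compact and nonempty, so a minimizer exists, and strict convexity over a convex set gives uniqueness. One may also write the minimizer in closed form as $u_i^\star=-\max\{0,b_i^{\mathrm{PE}}\}\,H_i^{-1}\ell_i^{\mathrm{PE}}/((\ell_i^{\mathrm{PE}})^{\top}H_i^{-1}\ell_i^{\mathrm{PE}})$ via KKT, though this is not needed.

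\emph{Main obstacle.} There is no real obstacle here; the only care needed is in the algebraic identification of \eqref{eqn:PE_local} with the affine constraint. One point to note is that the left-hand side uses the gradient of the total $V^{\mathrm{PE}}$, whereas $\zeta_i^{\mathrm{PE}}$ uses only the agent's own $V_i^{\mathrm{PE}}$. Whether $u_i$ is measurable or continuous in $\tau$ is not claimed by the corollary, so pointwise existence suffices.
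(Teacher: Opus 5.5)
Your proposal is correct and follows the paper's proof. The paper uses the same case split on the sign of $b_i^{\mathrm{PE}}$, with $u_i=0$ in the first case and an explicit witness that makes the constraint active in the second. The differences are minor: the paper's witness is the $H_i^{-1}$-weighted $-b_i^{\mathrm{PE}}H_i^{-1}\ell_i^{\mathrm{PE}}/((\ell_i^{\mathrm{PE}})^{\top}H_i^{-1}\ell_i^{\mathrm{PE}})$, which is in fact the QP minimizer, whereas your Euclidean witness does not need $H_i\succ0$, and you also spell out the coercivity and strict-convexity argument for existence and uniqueness of the minimizer that the paper leaves implicit. One small fix: your closed-form $u_i^\star$ should be read as $0$ whenever $b_i^{\mathrm{PE}}\le 0$, to avoid a $0/0$ when $\ell_i^{\mathrm{PE}}=0$.
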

\begin{proof}
See Appendix~\ref{appendix:PE_feasibility_proof}.
\end{proof}

\subsection{Distributional deviation}
For generated objects to satisfy certain hard constraints or reach a desired set in the generative space, the nominal vector field is modified by SE or PE guidance. We then establish a Wasserstein bound to characterize the resulting deviation from the nominal joint generative distribution. The bound relates this deviation to the magnitude of the guidance correction, providing a theoretical characterization of such intervention. Specifically, consider nominal and guided joint generative state trajectories initialized from the same random sample, i.e.,
\begin{equation*}
\frac{\mathrm{d}}{\mathrm{d}\tau}\mathbf{X}^{\mathrm{nom}}(\tau)=f^{\theta}(\tau,\mathbf{X}^{\mathrm{nom}}(\tau)\,|\,\chi)
\end{equation*}
and 
\begin{equation*}
\frac{\mathrm{d}}{\mathrm{d}\tau}\mathbf{X}^{\mathrm{gui}}(\tau) =f^{\theta}(\tau,\mathbf{X}^{\mathrm{gui}}(\tau)\,|\,\chi)+\Gamma(\tau\,|\,\xi)
\end{equation*}
with $\mathbf{X}^{\mathrm{nom}}(0)=\mathbf{X}^{\mathrm{gui}}(0)=\xi\sim p_0$, where the joint guidance correction is denoted by 
\begin{equation}
\label{eqn:joint_guidance_correction}
\begin{aligned}
\Gamma(\tau\,|\,\xi)
\coloneqq &
\Bigl[
  (g_1(\tau,\mathbf{X}^{\mathrm{gui}}(\tau)\,|\,\chi)
  u_1(\tau))^{\top},
  \ldots, \\
&\quad
  (g_N(\tau,\mathbf{X}^{\mathrm{gui}}(\tau)\,|\,\chi)
  u_N(\tau))^{\top}
\Bigr]^{\top}.
\end{aligned}
\end{equation}
Let $\mu_1$ and $\nu_1$ denote the probability distributions of $\mathbf{X}^{\mathrm{nom}}(1)$ and $\mathbf{X}^{\mathrm{gui}}(1)$, respectively.
\begin{theorem}[Wasserstein bound on guidance-induced distributional deviation]
\label{thm:wasserstein}
Assume $\mu_1$ and $\nu_1$ have finite second moments and, for all $\tau\in[0,1]$ and $\mathbf{y},\mathbf{z} \in \mathcal{Z}^N$, 
$$
\left\|f^{\theta}(\tau,\mathbf{y}\,|\,\chi)-f^{\theta}(\tau,\mathbf{z}\,|\,\chi)\right\|\leq L(\tau)\left\|\mathbf{y}-\mathbf{z}\right\|,
$$
where $L:[0,1]\rightarrow\mathbb{R}_{\geq0}$ is integrable. If 
$$
\int_0^1 \! \sqrt{\int_{\mathcal{Z}^{N}}\left\|\Gamma(s\,|\,\xi)\right\|^2 p_0(\xi)\,\mathrm{d}\xi}\,\mathrm{d}s<\infty,
$$
where $\Gamma(s\,|\,\xi)$ per \eqref{eqn:joint_guidance_correction} is the guidance correction corresponding to the initial sample $\xi\sim p_0$, then
\begin{equation}
\begin{aligned}
W_2(\mu_1,\nu_1)
\leq &
\int_0^1 \!
\exp\!\left(
\int_s^1 \! 
L(r)
\,\mathrm{d}r
\right) \\
&\quad \;\;\cdot \,
\sqrt{
\int_{\mathcal{Z}^{N}}\!
\left\|\Gamma(s\,|\,\xi)\right\|^2
p_0(\xi)
\,\mathrm{d}\xi
}
\;\mathrm{d}s.
\end{aligned}
\label{eqn:wasserstein_bound}
\end{equation}
In addition, given $\chi$, assume the decoder satisfies 
$$
\left\|\mathcal{D}_N(\mathbf{y}\,|\,\chi)-\mathcal{D}_N(\mathbf{z}\,|\,\chi)\right\|\leq L_D\left\|\mathbf{y}-\mathbf{z}\right\|,
$$
$\forall \mathbf{y}, \mathbf{z}\in \mathcal{Z}^N$, for some $L_D\in\mathbb{R}_{\geq 0}$. Let $\bar{\mu}_1$ and $\bar{\nu}_1$ denote the probability distributions of $\mathcal{D}_N(\mathbf{X}^{\mathrm{nom}}(1)\,|\,\chi)$ and $\mathcal{D}_N(\mathbf{X}^{\mathrm{gui}}(1)\,|\,\chi)$, respectively. Then 
\begin{equation}
W_2(\bar{\mu}_1,\bar{\nu}_1)
\leq
L_D W_2(\mu_1,\nu_1).
\label{eqn:decoder_wasserstein_bound}
\end{equation}
\end{theorem}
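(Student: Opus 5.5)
The plan is a synchronous coupling. Run the nominal and guided trajectories from the same initial sample $\xi\sim p_0$. The pair $(\mathbf{X}^{\mathrm{nom}}(1),\mathbf{X}^{\mathrm{gui}}(1))$ has marginals $\mu_1$ and $\nu_1$, so it is an admissible coupling, and
\begin{equation*}
W_2(\mu_1,\nu_1)\leq \sqrt{\mathbb{E}_{\xi}\|\mathbf{X}^{\mathrm{gui}}(1)-\mathbf{X}^{\mathrm{nom}}(1)\|^2}.
\end{equation*}
The finite second moments ensure $W_2$ is well defined. It therefore suffices to bound the $L^2(p_0)$ norm of the pathwise error $e(\tau)\coloneqq\mathbf{X}^{\mathrm{gui}}(\tau)-\mathbf{X}^{\mathrm{nom}}(\tau)$ at $\tau=1$.

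\textbf{Step 1: pathwise estimate.} For fixed $\xi$ we have $e(0)=0$ and
\begin{equation*}
\dot e=f^{\theta}(\tau,\mathbf{X}^{\mathrm{gui}}\,|\,\chi)-f^{\theta}(\tau,\mathbf{X}^{\mathrm{nom}}\,|\,\chi)+\Gamma(\tau\,|\,\xi).
\end{equation*}
Write this as an integral equation and apply the Lipschitz assumption to get
\begin{equation*}
\|e(\tau)\|\leq\int_0^\tau L(r)\|e(r)\|\,\mathrm{d}r+\int_0^\tau\|\Gamma(s\,|\,\xi)\|\,\mathrm{d}s.
\end{equation*}
Using a variation-of-constants form of Grönwall (equivalently, $\frac{\mathrm{d}}{\mathrm{d}\tau}\|e\|\leq L\|e\|+\|\Gamma\|$ in the upper Dini derivative sense, so no differentiability at $e=0$ is needed) yields
\begin{equation*}
\|e(1)\|\leq\int_0^1 \exp\!\Bigl(\int_s^1L(r)\,\mathrm{d}r\Bigr)\|\Gamma(s\,|\,\xi)\|\,\mathrm{d}s.
\end{equation*}
This holds for every $\xi$, with $L$ integrable so that the exponential factor is finite.

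\textbf{Step 2: pass to $L^2(p_0)$.} This is the main step. Taking the $L^2(p_0)$ norm of both sides, apply Minkowski's integral inequality, i.e.\ the $L^2$ norm of an integral is at most the integral of $L^2$ norms. Because the exponential weight is deterministic, it factors out, giving
\begin{equation*}
\Bigl(\mathbb{E}\|e(1)\|^2\Bigr)^{1/2}\leq\int_0^1 e^{\int_s^1L}\Bigl(\int\|\Gamma(s\,|\,\xi)\|^2p_0(\xi)\,\mathrm{d}\xi\Bigr)^{1/2}\mathrm{d}s.
\end{equation*}
The integrability hypothesis on $s\mapsto\|\Gamma(s\,|\,\cdot)\|_{L^2(p_0)}$, multiplied by the bounded factor $e^{\int_0^1L}$, makes the right side finite. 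Applying Minkowski also requires measurability of $(s,\xi)\mapsto\Gamma(s\,|\,\xi)$. I would assume this from well-posedness of the guided ODE, with inputs given by the QPs of Proposition~\ref{prop:c2_feasibility} and Corollary~\ref{cor:PE_local_feasibility}, and note it as a standing regularity assumption. Combining this with the coupling bound gives \eqref{eqn:wasserstein_bound}.

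\textbf{Step 3: decoder bound.} Push the same coupling through $\mathcal{D}_N(\cdot\,|\,\chi)$. The pair $(\mathcal{D}_N(\mathbf{X}^{\mathrm{nom}}(1)),\mathcal{D}_N(\mathbf{X}^{\mathrm{gui}}(1)))$ couples $\bar\mu_1$ and $\bar\nu_1$. For the sharp inequality, do not use the synchronous coupling here; use an optimal coupling $\pi$ of $(\mu_1,\nu_1)$, which exists under finite second moments. Its pushforward under $\mathcal{D}_N\times\mathcal{D}_N$ couples $\bar\mu_1$ and $\bar\nu_1$, and the Lipschitz bound gives
\begin{equation*}
\int\|\mathcal{D}_N(\mathbf{y})-\mathcal{D}_N(\mathbf{z})\|^2\,\mathrm{d}\pi\leq L_D^2\,W_2(\mu_1,\nu_1)^2.
\end{equation*}
Taking square roots yields \eqref{eqn:decoder_wasserstein_bound}.
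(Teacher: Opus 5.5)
Your proposal is correct and follows the paper's proof essentially step for step: the synchronous coupling from the same $\xi$, the pathwise Gr\"onwall estimate, Minkowski's integral inequality to pass to $L^2(p_0)$, and a pushforward coupling for the decoder bound. The only difference is cosmetic and occurs in the decoder step. You invoke existence of an optimal coupling of $(\mu_1,\nu_1)$, whereas the paper takes an arbitrary $\pi\in\Pi(\mu_1,\nu_1)$, bounds the pushforward cost by $L_D^2$ times its cost, and then passes to the infimum, which avoids needing that existence result.
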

\begin{proof}
See Appendix~\ref{appendix:wasserstein_proof}.
\end{proof}

Empirically evaluation of the distributional deviation between the nominal vector field and the guided one is in Appendix~\ref{app:task2_results}. 

\subsection{Algorithm}
After establishing the finite-horizon convergence guarantees for SE and PE guidance, Algorithm~\ref{alg:guided_sampling} summarizes the decoupled guidance procedure. The nominal vector field \(f_i^\theta\) may be provided by a pretrained model, or learned using \eqref{eqn:macfm_loss} that can be augmented with task-specific training losses. Such augmentation affects only the learned nominal vector field and is independent of the SE or PE guidance applied during the generative process. Note that the proposed decoupled guidance requires neither retraining nor a particular neural network architecture for the nominal vector field.

\begin{algorithm}[H]
\caption{Multi-agent flow matching with SE/PE guidance}
\label{alg:guided_sampling}
\begin{algorithmic}[1]

\Require Condition $\chi$, decoder $\mathcal D_N$, and either a pretrained $f_i^\theta$ or training data with encoder $\mathcal E_N$.

\If{$f_i^\theta$ is unavailable}
\State
$\displaystyle
\theta \leftarrow \arg\min_{\theta} \mathcal L_{\mathrm{MAC}}(\theta\,|\,\chi)$ using \eqref{eqn:macfm_loss}
\EndIf

\State $\mathbf z(0) \sim p_0$.

\If{SE guidance}
\State 
Specify $q_a^{\mathrm{SE}}$, $\beta_a^{\mathrm{SE}}$, $\alpha_a^{\mathrm{SE}}$, and $w_{a,i}^{\mathrm{SE}}$, and construct $V_a^{\mathrm{SE}}$ and $\zeta_{a,i}^{\mathrm{SE}}$ using \eqref{eqn:SE_finite_horizon_V}--\eqref{eqn:SE_finite_horizon_zeta}.
\ForAll{$i\in\mathcal N$}
\State
$u_i^*(\tau)\leftarrow$ solution of \eqref{eqn:SE_local_QP}.
\EndFor
\ElsIf{PE guidance}
\State
Choose $\alpha^{\mathrm{PE}}$ and $\omega^{\mathrm{PE}}$ according to Theorem~\ref{thm:private_finite_time}.
\ForAll{$i\in\mathcal N$}
\State
$u_i^*(\tau)\leftarrow$ solution of \eqref{eqn:PE_local_qp_general}.
\EndFor
\EndIf

\State
Evolve $\mathbf z(\tau)$ over $\tau\in[0,1]$ according to \eqref{eqn:control_affine} with $u_i=u_i^*$.

\State \Return $\mathcal D_N(\mathbf z(1)\,|\,\chi)$.

\end{algorithmic}
\end{algorithm}

\begin{figure*}[t]
\centering
\includegraphics[width=0.63\linewidth]{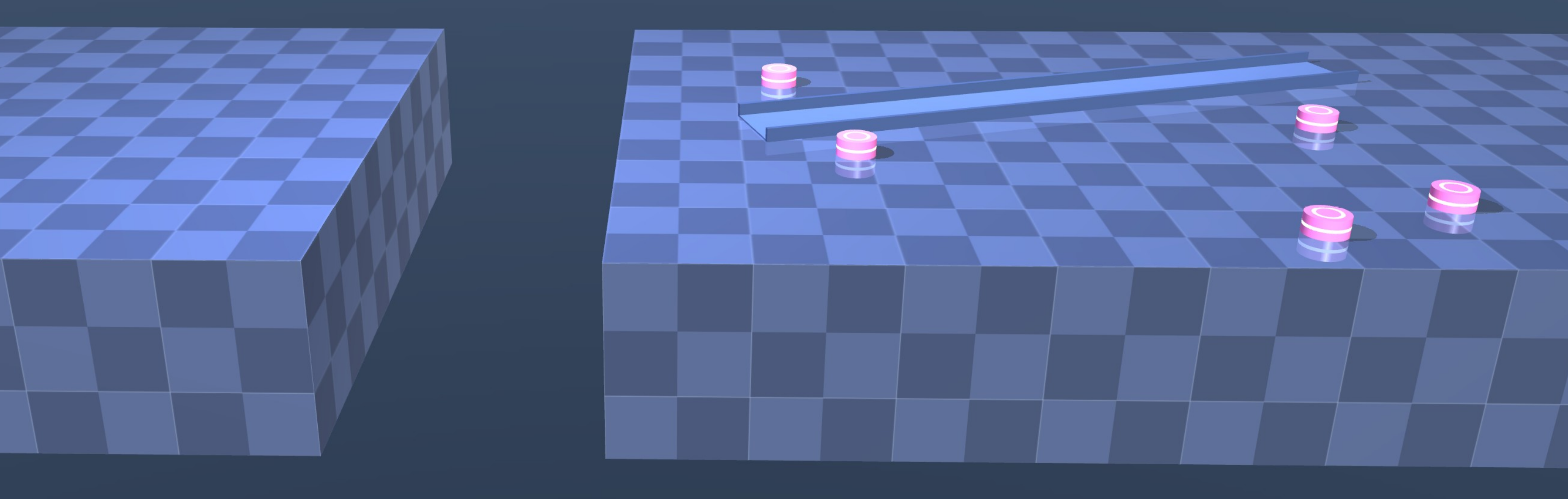}

\vspace{1mm}

\includegraphics[width=0.63\linewidth]{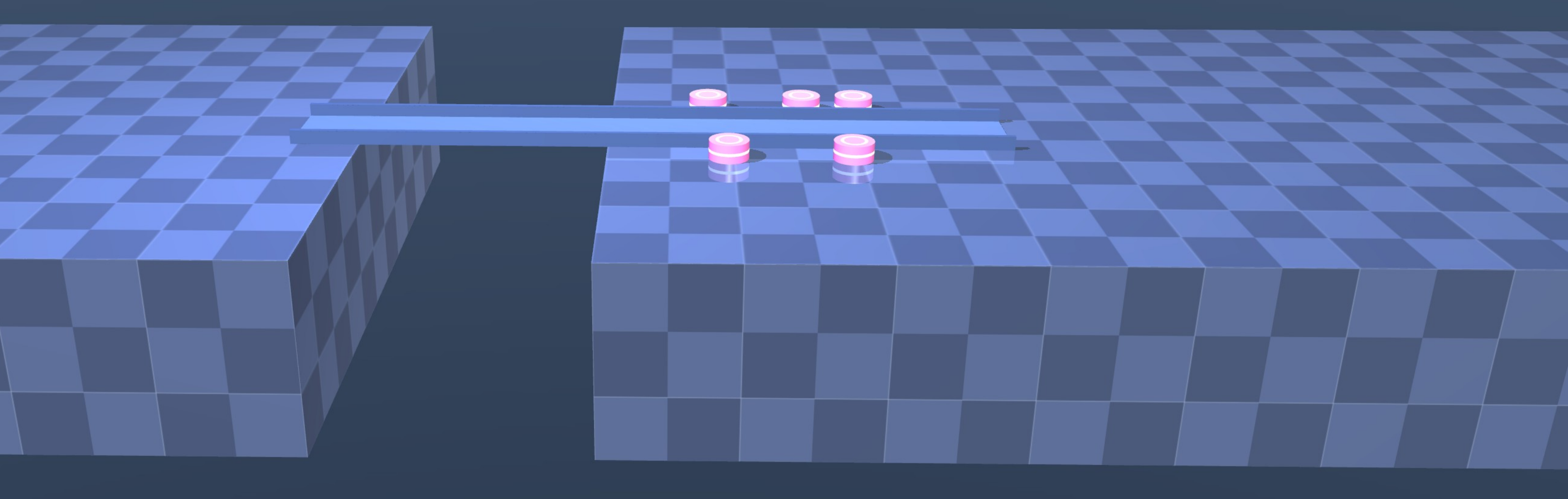}

\vspace{1mm}

\includegraphics[width=0.63\linewidth]{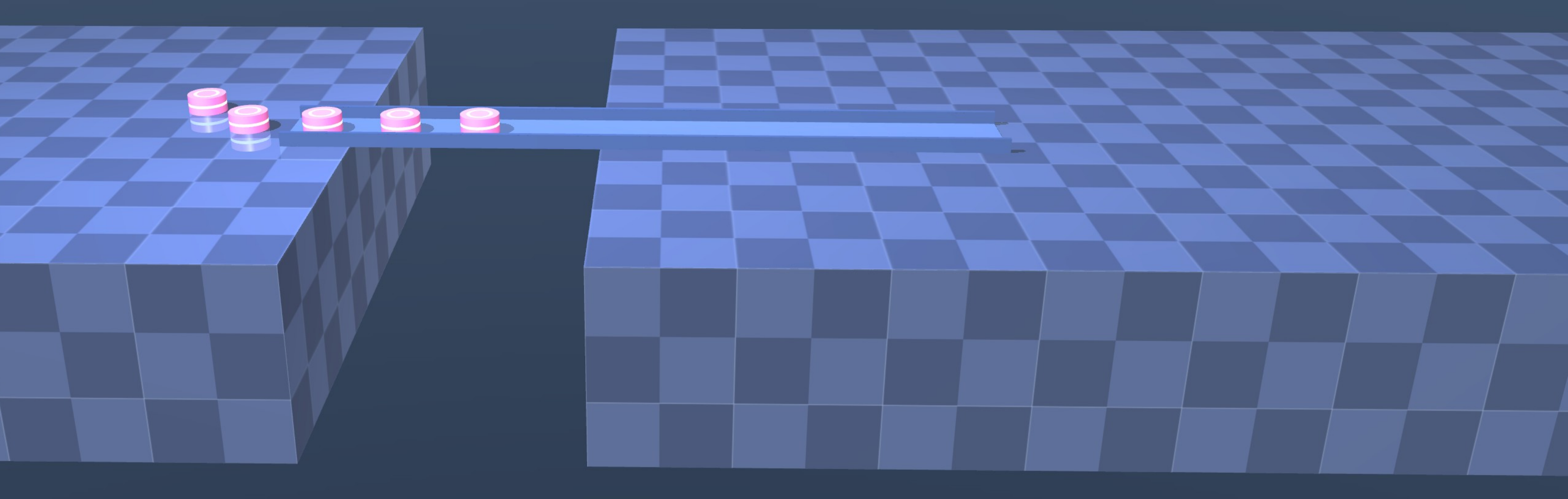}

\caption{Snapshots of an $N=5$ complete process of Task~1 with SE guidance. Top: the initial configuration. Middle: successful bridge construction. Bottom: spatial gap crossing.}
\label{fig:task1_rollout}
\end{figure*}

\begin{figure*}[t]
\centering
\includegraphics[width=0.25\linewidth]{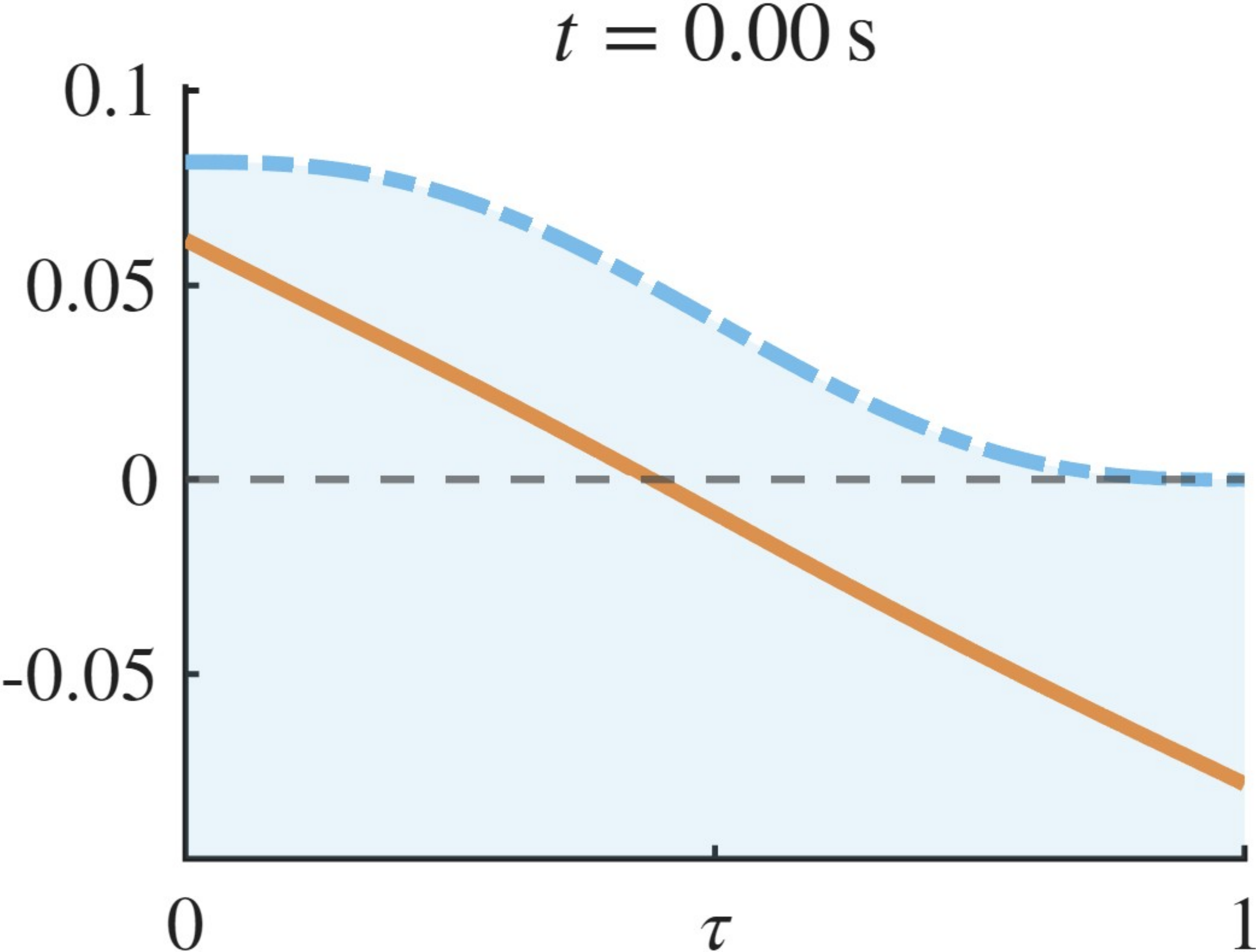}
\hspace{0.02\linewidth}
\includegraphics[width=0.25\linewidth]{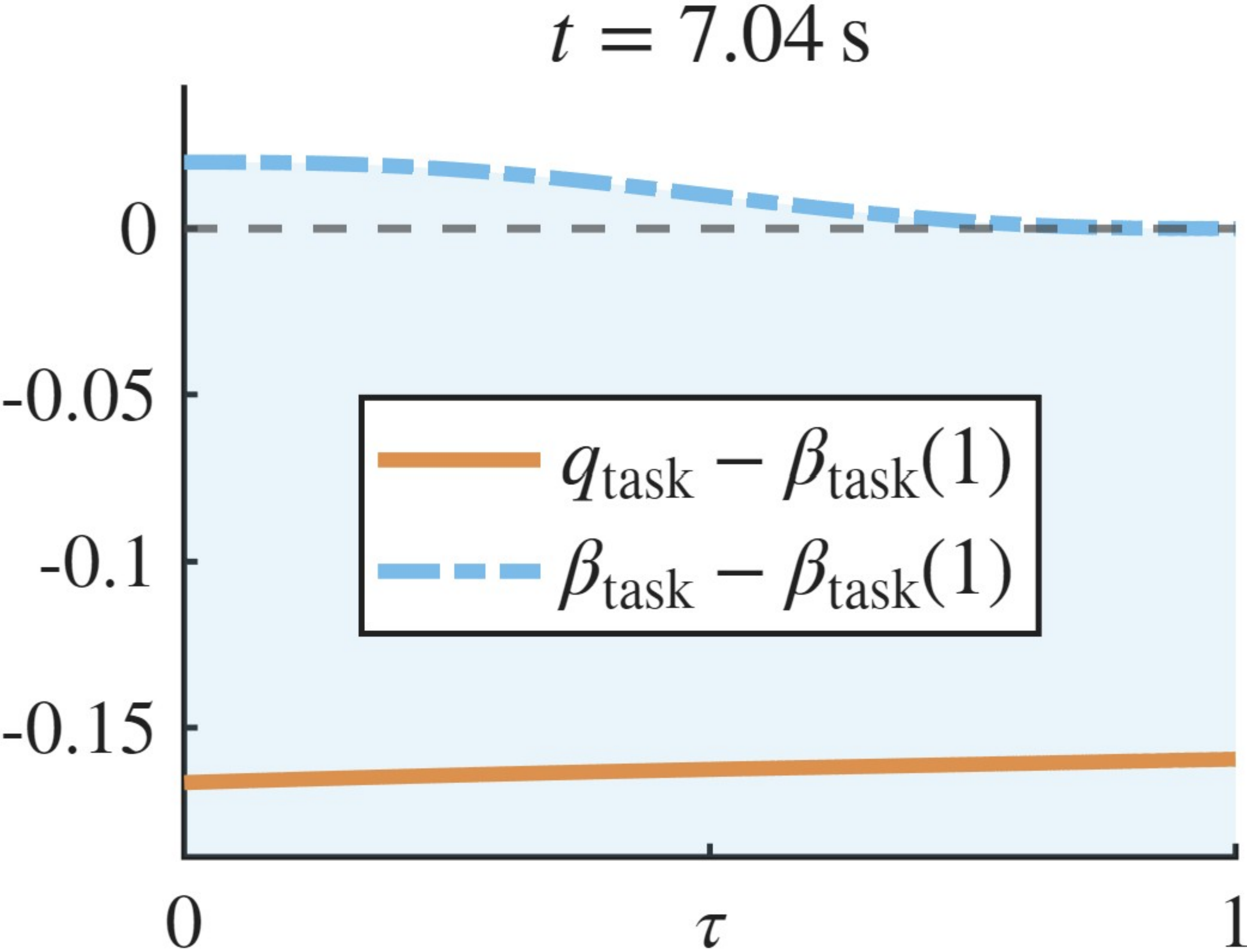}
\hspace{0.02\linewidth}
\includegraphics[width=0.25\linewidth]{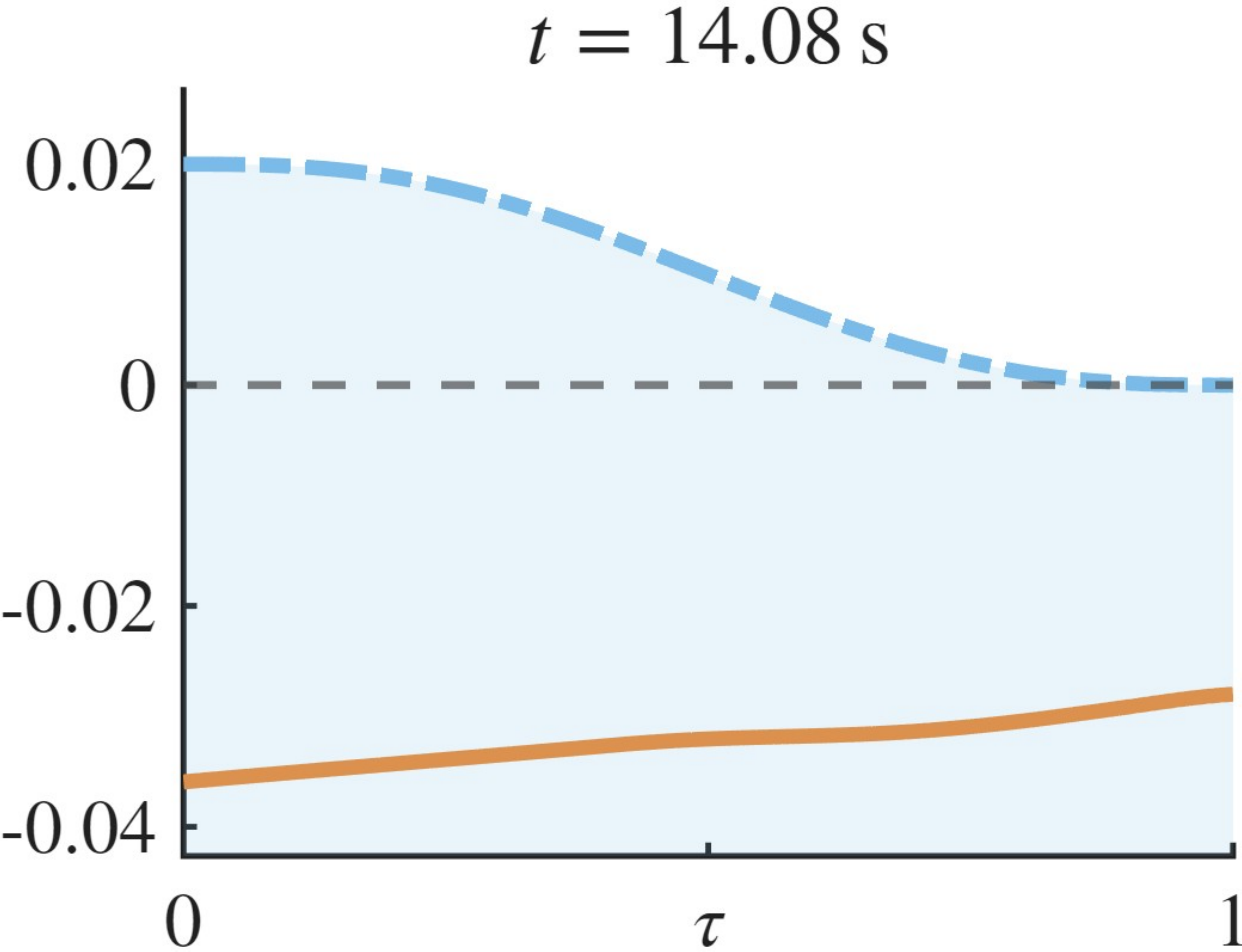}
\caption{Evolution of $q_{\rm task}$ and its time-varying upper bound $\beta_{\rm task}$ during generative processes for policies generated at early (left), middle (center), and late (right) physical times in the manipulation stage of the example shown in Figure~\ref{fig:task1_rollout}.}
\label{fig:task1_task_sampling}
\end{figure*}

\section{Applications}
In this section, we apply DeGG-Flow to two tasks with shared and private coupled requirements, respectively. Both applications are visualized in MuJoCo~\citep{todorov2012mujoco}.

\subsection{Task 1: Multi-Robot Policy Generation Using SE Guidance} \label{sec:task1}
Motivated by disaster response and search-and-rescue tasks, we study how multiple robots can collaborate and exploit tools in the environment to accomplish tasks beyond the capability of an individual robot. In Task~1, a team of robots faces a spatial gap that cannot be crossed directly, and collaboratively uses a plank to construct a traversable bridge. Since all robots act on the same rigid body, their generated motions must remain mutually compatible while satisfying shared task and safety requirements. Therefore, SE guidance is suitable for Task~1.

The generated object is a motion policy specifying the robot contact point velocities on the plank. The condition $\chi$ contains the current robot and plank states and environment information. Only the first few actions of each generated policy are executed before a new policy is generated. Task~1 contains two SE factors: $q_{\rm task}(\mathbf z\,|\,\chi)$, which encodes task progress at the plank pose reached after the executed actions, and $q_{\rm safe}(\mathbf z\,|\,\chi)$, which evaluates safety along these actions. The detailed settings are provided in Appendix~\ref{appendix:task1_details}. 

For visualization, each complete process consists of three stages. The plank pose, robot initial and goal positions, and robot-plank contact points are randomly sampled subject to workspace constraints. First, the robots move to the contact points. Then, the learned multi-agent motion policy generates the collaborative motions for the robots to manipulate the plank to form a bridge over the spatial gap, with or without SE guidance. The success rates in Table~\ref{tab:task1_results} evaluate only this manipulation stage. After the bridge is successfully constructed, the robots move to the goals through the bridge. 

For evaluation, every robot uses the same trained GNN representing the nominal vector field $f_i^\theta$ in \eqref{eqn:control_affine}. We refer to the generative process with $u_i=0$ as \emph{Nominal}, and to the same nominal vector field with SE guidance $g_i u_i^*$, where $u_i^*$ is obtained from \eqref{eqn:SE_local_QP}, as \emph{Guided}. Nominal and Guided use the same initial physical condition, initial sample $\mathbf z(0)$, conditioning information $\chi$, and decoder $\mathcal D_N$. The nominal vector field is trained with $N\in\{4,5,6\}$, and we evaluate 50 manipulation trials for each $N\in\{4,5,6,7,8\}$, with $N=7,8$ evaluating generalization to team sizes unseen in the training data. 
\begin{table}[t]
\caption{Success rates for Task~1}
\label{tab:task1_results}
\centering
\small
\setlength{\tabcolsep}{10pt}
\begin{tabular}{lcc}
\toprule
$N$ & Nominal & Guided\\
\midrule
$4$ & $46/50$ (92\%) & $50/50$ (100\%)\\
$5$ & $45/50$ (90\%) & $50/50$ (100\%)\\
$6$ & $43/50$ (86\%) & $50/50$ (100\%)\\
$7$ (unseen) & $42/50$ (84\%) & $50/50$ (100\%)\\
$8$ (unseen) & $42/50$ (84\%) & $50/50$ (100\%)\\
\midrule
Overall & $218/250$ (87.2\%) & $250/250$ (100\%)\\
\bottomrule
\end{tabular}
\end{table}

As shown in Table~\ref{tab:task1_results}, Nominal fails in 32 trials, whereas Guided successfully constructs a traversable bridge in all 250 trials, including the 100 trials with unseen team sizes of $N=7,8$. Figure~\ref{fig:task1_rollout} visualizes an $N=5$ Guided example. These results show that SE guidance can recover failures of the nominal generative policy while preserving successful collaboration at team sizes beyond those used during training. Additional results, including an $N=7$ rescue example, and multi-seed training and validation curves, are provided in Appendix~\ref{app:task1_results}.

Figure~\ref{fig:task1_task_sampling} shows $q_{\rm task}(\mathbf z(\tau)\,|\,\chi)$ and $\beta_{\rm task}(\tau)$ along $\tau$ in the generative process for policies generated at early, middle, and late physical times in the manipulation stage of the $N=5$ Guided example shown in Figure~\ref{fig:task1_rollout}. For visualization only, both curves in each plot are shifted by the same constant $\beta_{\rm task}(1)$, and $q_{\rm task}-\beta_{\rm task}$ is unchanged. The shaded region corresponds to $q_{\rm task}(\mathbf z(\tau)\,|\,\chi)\leq \beta_{\rm task}(\tau)$. As shown in Figure~\ref{fig:task1_task_sampling}, early in physical time, $q_{\rm task}$ starts close to its upper bound $\beta_{\rm task}$ and then moves farther below it during the generative process. This is because $\beta_{\rm task,start}$ is initialized just above the task factor value of the initial sample, and $\beta_{\rm task,end}=q_{\rm task}^{\rm cur}-0.05$ requires the generated policy to reduce the task factor by at least $0.05$ relative to its value at the current plank pose. The SE guidance therefore progressively separates $q_{\rm task}$ from the contracting upper bound while improving the task accomplishment. At the middle physical time, $q_{\rm task}$ stays below $\beta_{\rm task}$ even though it increases during the generative process. This is because the SE guidance does not require $q_{\rm task}$ to decrease monotonically. Instead, it only requires $q_{\rm task}\leq \beta_{\rm task}$, allowing the nominal vector field to dominate the generated policy. Across the three physical times, the observed relation $q_{\rm task}(\mathbf z(\tau)\,|\,\chi)\le\beta_{\rm task}(\tau)$ is consistent with Theorem~\ref{thm:SE_finite_horizon}.

\subsection{Task 2: Multi-Object Scene Generation Using PE Guidance} \label{sec:task2}
Section~\ref{sec:task1} demonstrates SE guidance in multi-robot collaboration. We next consider PE guidance in multi-object scene generation. A useful scene must satisfy not only geometric validity, such as collision avoidance and workspace containment, but also how individual objects are intended to be accessed or used. Each object therefore has its own affordance requirement, while whether that requirement is satisfied can depend on neighboring objects. Thus, PE guidance is suitable for Task~2.

Task~2 generates a tabletop arrangement of $N$ objects. For object $i\in\mathcal N$, denote its pose by
$g_{O_i}=(R_i,p_i)\in\mathbb{SE}(2)$, and let $\mathcal D_N(\mathbf z\,|\,\chi)=\{g_{O_i}\}_{i\in\mathcal N}$. The condition $\chi$ specifies the selected objects from a library, their geometry, the tabletop, and their affordance requirements. For each object, one or more nearby regions are specified and should be unobstructed when the object is used. We refer to these as \emph{affordance regions}. For object $i$, let $h_{ik}(\mathbf z\,|\,\chi)$ denote its $k$th geometric requirement, where $h_{ik}\geq0$ means that the requirement is satisfied. These requirements include object separation, unobstructed affordance regions, and containment of both objects and affordance regions within the tabletop. Let $r_{ik}=-h_{ik}$ and choose a buffer $\delta^{\mathrm{PE}}>0$. We define $V^{\mathrm{PE}} = \sum_{i=1}^{N}V_i^{\mathrm{PE}}$ with $V_i^{\mathrm{PE}}(\mathbf z\,|\,\chi)=\sum_k\frac{1}{2}\max\!\left\{r_{ik}(\mathbf z\,|\,\chi)+\delta^{\mathrm{PE}},0\right\}^{2}$. Detailed settings are provided in Appendix~\ref{appendix:task2_details}.

For evaluation, the nominal vector field is trained on scenes with $N\in\{4,5,6\}$, and we evaluate 50 cases for each $N\in\{4,5,6,7,8\}$. \emph{Natural Nominal} evaluates scenes generated by the nominal vector field under the default affordance requirements. \emph{Private Nominal} evaluates the same nominal scenes after the affordance requirements are changed, while \emph{Guided} starts from the same initial samples and incorporates PE guidance based on the changed requirements. As shown in Table~\ref{tab:task2_success}, the nominal success rate decreases as the number of objects increases, and Private Nominal further shows that a scene valid under the default requirements can become invalid when the desired access or use of an object changes. In contrast, Guided satisfies all requirements in all 250 cases, including the 100 cases with team sizes $N=7,8$ unseen in the training data. Figure~\ref{fig:task2_personalization} shows an $N=7$ example in which PE guidance adapts a nominal scene from a right-handed to a left-handed laptop use requirement without retraining the generative model. These results demonstrate that PE guidance can result in generated objects satisfying private requirements, including at team sizes beyond those used during training. Additional results, including an $N=8$ example with multiple simultaneous violations resulting from the nominal vector field, results across different affordance requirements, and multi-seed training and validation curves are provided in Appendix~\ref{app:task2_results}.
\begin{table*}[t]
\caption{Success rates for Task~2}
\label{tab:task2_success}
\centering
\small
\setlength{\tabcolsep}{10pt}
\begin{tabular}{lccc}
\toprule
$N$ & Natural Nominal & Private Nominal & Guided\\
\midrule
$4$ & $44/50$ (88\%) & $36/50$ (72\%) & $50/50$ (100\%)\\
$5$ & $35/50$ (70\%) & $31/50$ (62\%) & $50/50$ (100\%)\\
$6$ & $32/50$ (64\%) & $26/50$ (52\%) & $50/50$ (100\%)\\
$7$ (unseen) & $16/50$ (32\%) & $13/50$ (26\%) & $50/50$ (100\%)\\
$8$ (unseen) & $8/50$ (16\%) & $6/50$ (12\%) & $50/50$ (100\%)\\
\midrule
Overall & $135/250$ (54.0\%) & $112/250$ (44.8\%) & $250/250$ (100\%)\\
\bottomrule
\end{tabular}
\end{table*}
\begin{figure*}[t]
\centering
\includegraphics[width=0.325\linewidth]{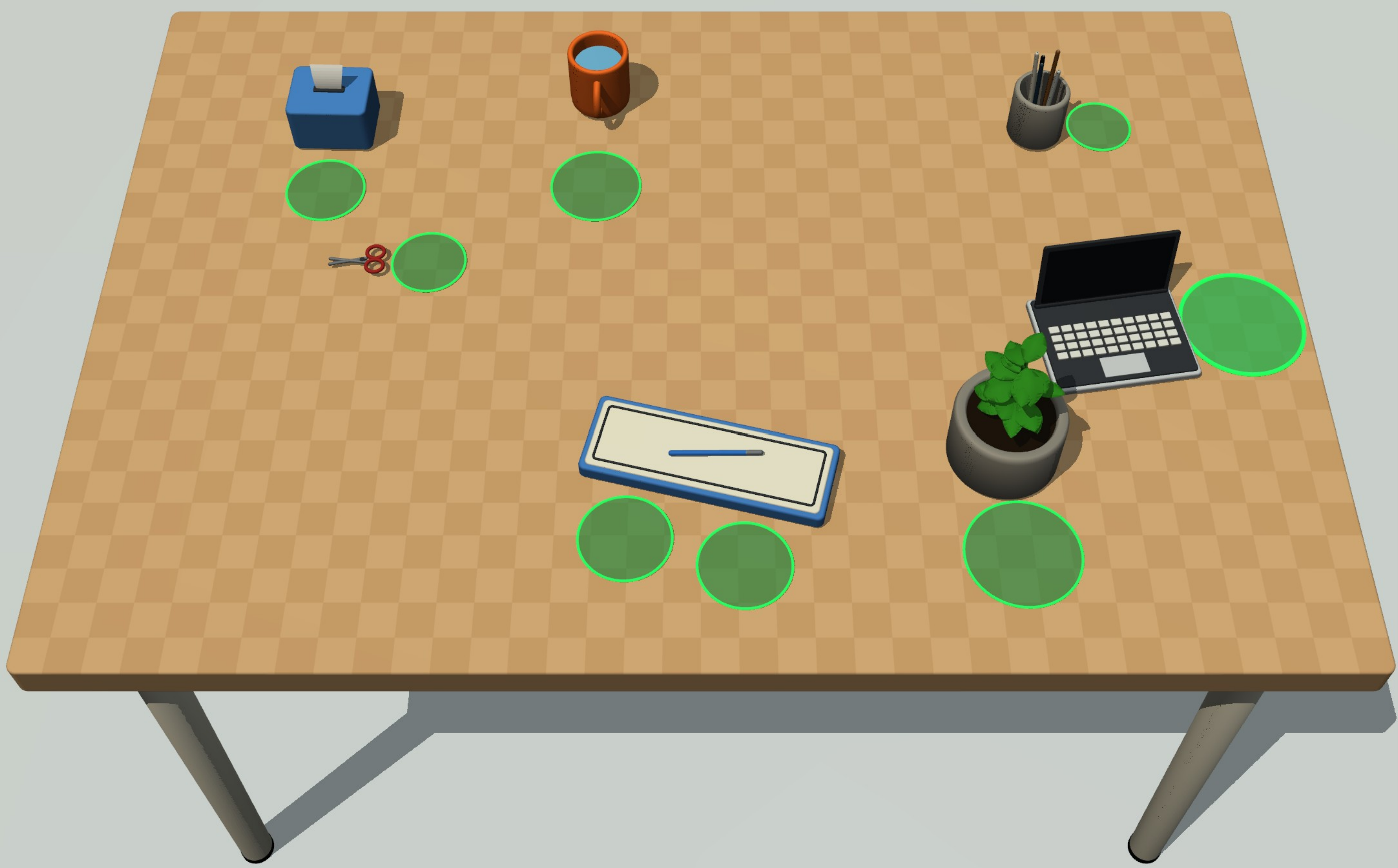}\hfill
\includegraphics[width=0.325\linewidth]{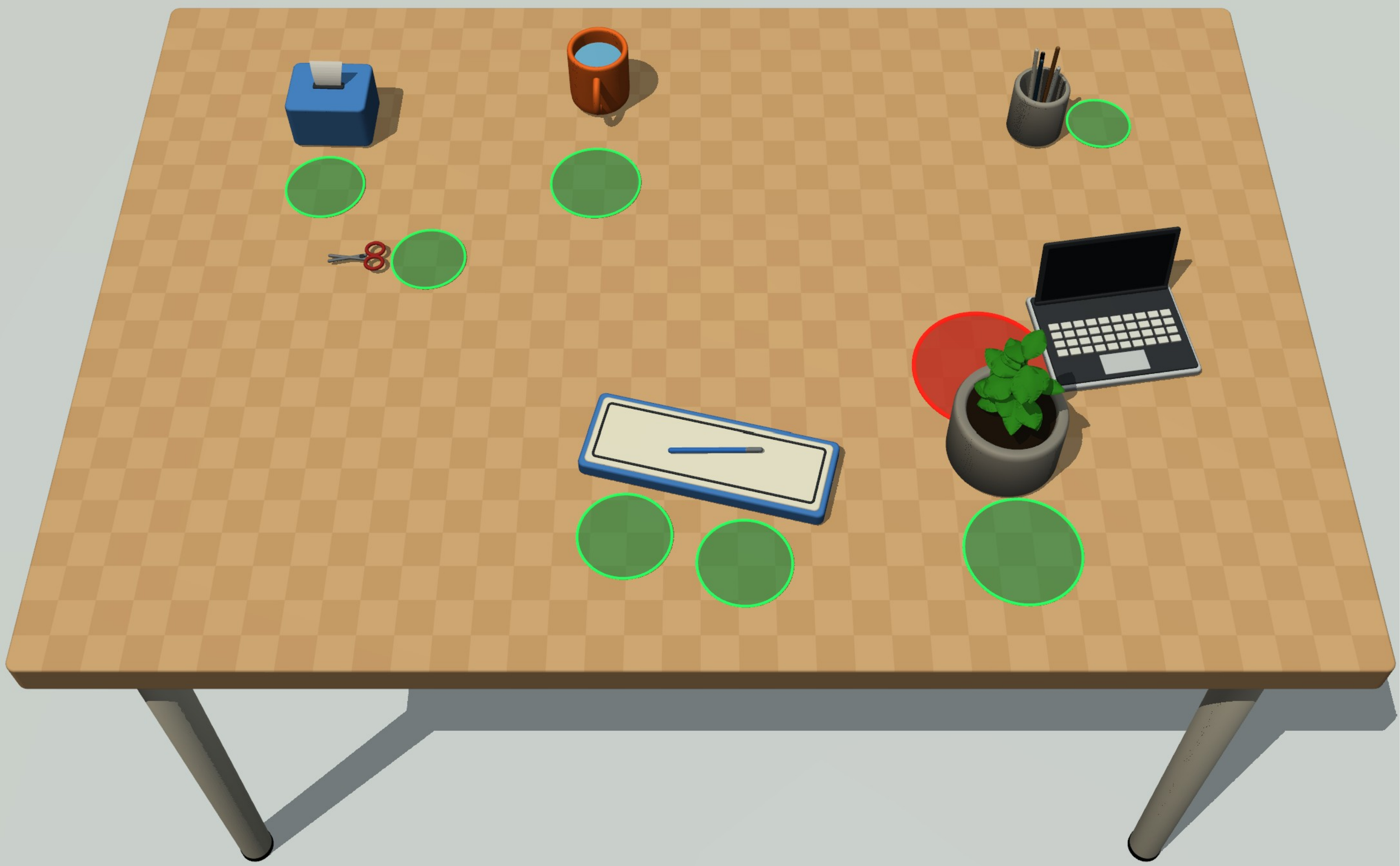}\hfill
\includegraphics[width=0.325\linewidth]{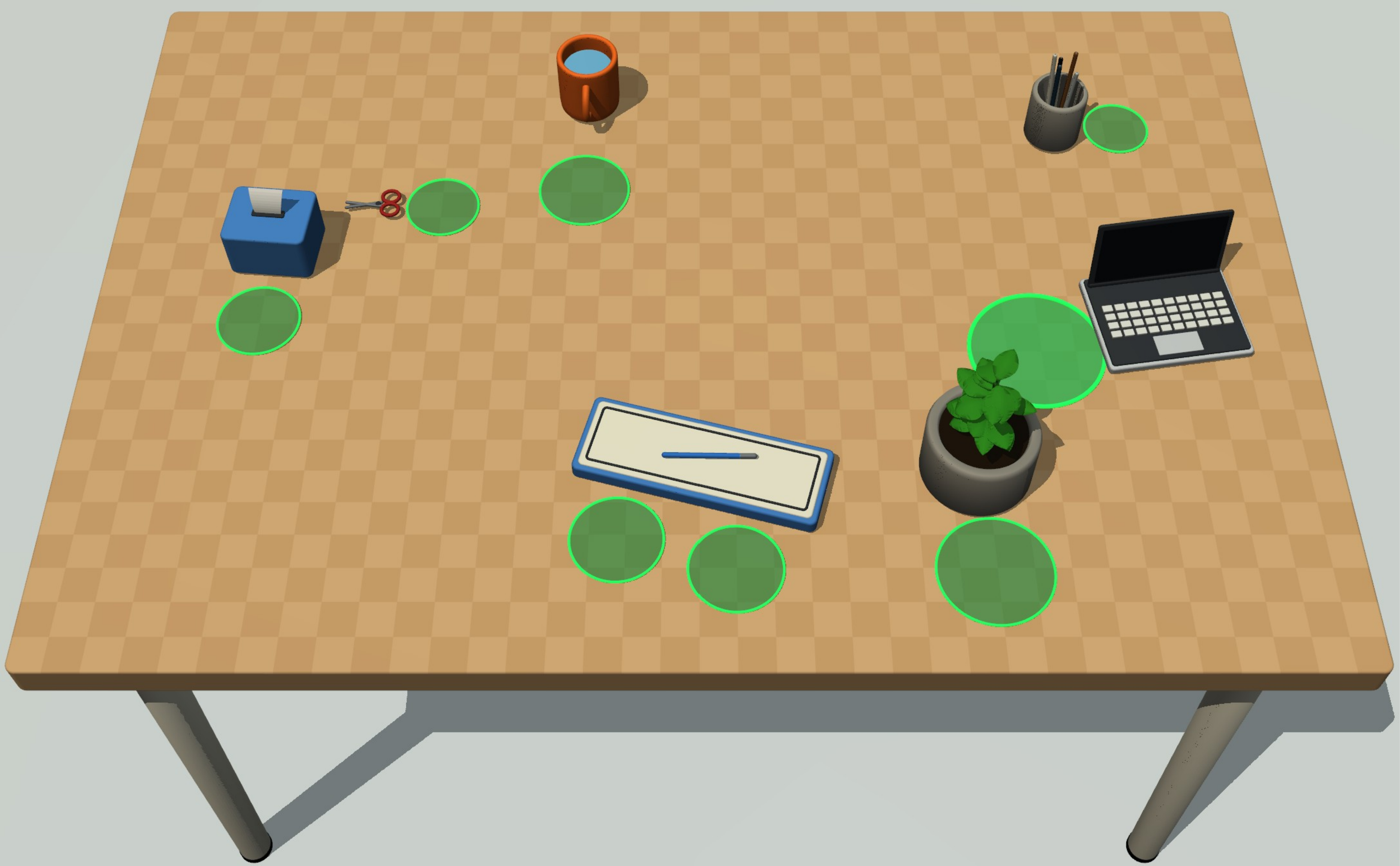}
\caption{An $N=7$ example for Task~2. Green circles represent the objects' affordance regions that are unobstructed, while the red circle represents one that is obstructed. Left: the scene generated using the nominal vector field satisfies the default right-handed laptop use requirement. Center: the same scene evaluated for a left-handed user, where the required mouse operating region on the left side of the laptop is occupied by a potted plant. Right: PE guidance generates a scene satisfying the left-handed use requirement.}
\label{fig:task2_personalization}
\end{figure*}

Figure~\ref{fig:task2_sampling} shows the evolution of $V^{\mathrm{PE}}$ during the generative process for the $N=7$ example corresponding to Figure~\ref{fig:task2_personalization}. Both trajectories start from the same initial Gaussian sample and are evaluated under the same left-handed laptop use requirement, which requires the mouse operating region on the left side of the laptop to be unobstructed. Without guidance, this requirement is violated at the end of the generative process, whereas PE guidance guarantees that $V^{\mathrm{PE}}$ converges to zero and generates a scene satisfying the new requirement without retraining, which is consistent with Theorem~\ref{thm:private_finite_time}.
\begin{figure}[t]
\centering
\includegraphics[width=0.55\linewidth]{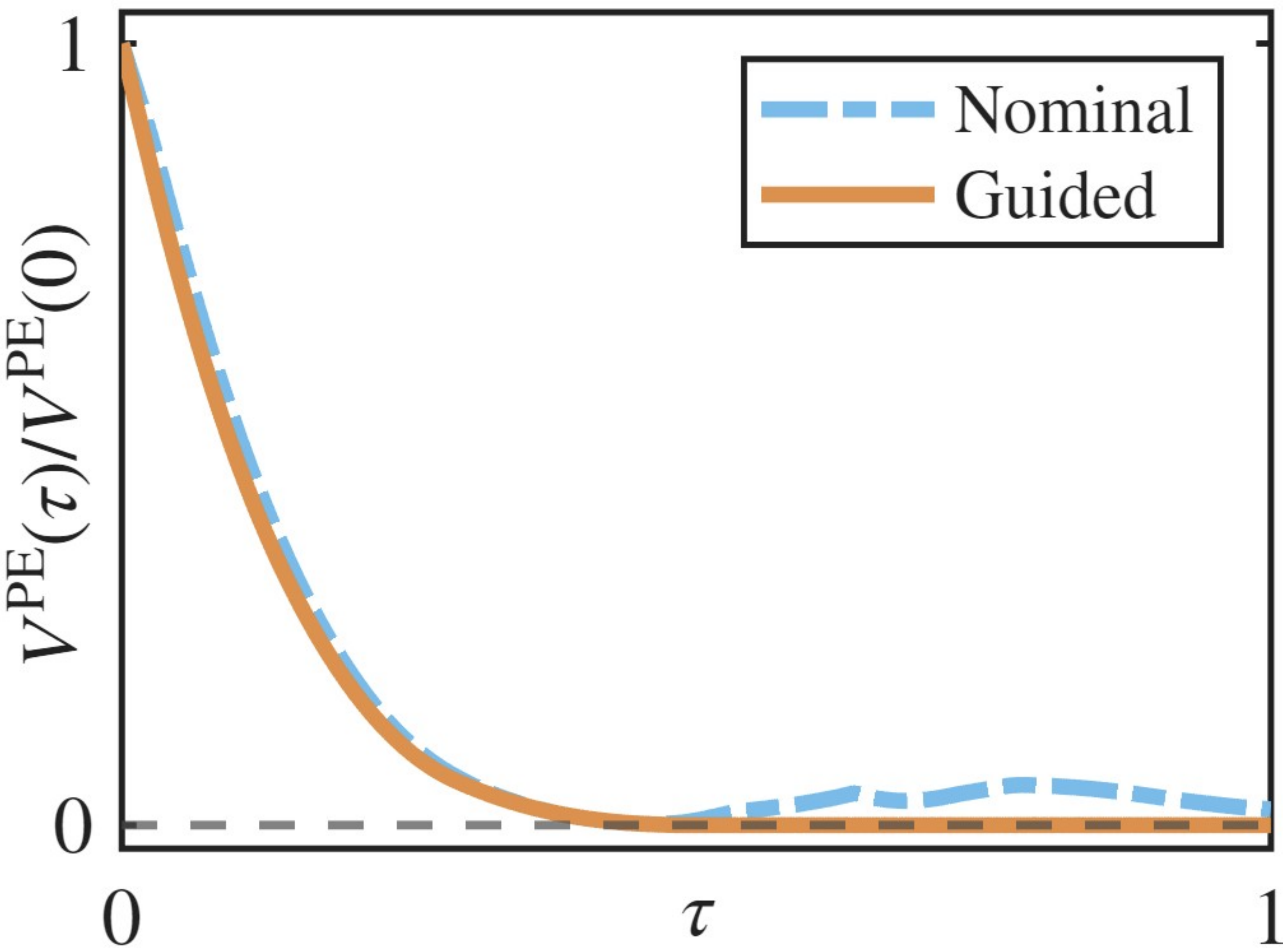}
\caption{Evolution of $V^{\mathrm{PE}}$ during the generative process for the $N=7$ example in Figure~\ref{fig:task2_personalization}.}
\label{fig:task2_sampling}
\end{figure}

\section{Conclusion}
This paper proposes DeGG-Flow, a general framework for addressing coupled multi-agent requirements through decoupled generative guidance, without applying a separate correction stage after generation. Each agent determines its own guidance without relying on the simultaneously computed guidance of any other agents, so the team need not rely on a central decision maker. We establish feasibility and finite-horizon guarantees for both SE and PE guidance, and demonstrate the effectiveness of DeGG-Flow in multi-robot collaboration and multi-object scene generation, including numbers of agents beyond those used in training. More broadly, DeGG-Flow allows a pretrained generative model to accommodate new hard requirements or constraints that are unseen in its training data, without the need of collecting new data or retraining the model. Future work will investigate discrete-time theoretical guarantees and large-scale multi-agent systems in real world.

\bibliographystyle{plainnat}
\bibliography{references}

\appendix
\section{Proofs} \label{appendix:Proofs}

\subsection{Proof of Theorem~\ref{thm:SE_finite_horizon}} \label{appendix:SE_finite_horizon_proof}
For any $i\in\mathcal{S}_a^{\mathrm{SE}}$, \eqref{eqn:SE_finite_horizon_V} gives 
\begin{equation*}
\nabla_{z_i} V_a^{\mathrm{SE}} (\tau,\mathbf{z}_{\mathcal{S}_a^{\mathrm{SE}}}\,|\,\chi) = \nabla_{z_i} q_a^{\mathrm{SE}} (\tau,\mathbf{z}_{\mathcal{S}_a^{\mathrm{SE}}}\,|\,\chi).
\end{equation*}
Along a trajectory of \eqref{eqn:control_affine},
\begin{equation}
\begin{aligned}
&\frac{\mathrm{d}}{\mathrm{d}\tau}
V_a^{\mathrm{SE}}
(\tau,\mathbf{z}_{\mathcal{S}_a^{\mathrm{SE}}}(\tau)\,|\,\chi)
\\
=\,&
\partial_{\tau}
q_a^{\mathrm{SE}}
(\tau,\mathbf{z}_{\mathcal{S}_a^{\mathrm{SE}}}(\tau)\,|\,\chi)
-
\dot{\beta}_a^{\mathrm{SE}}
(\tau\,|\,\chi,\mathbf{z}(0))
\\
&+
\sum_{i\in\mathcal{S}_a^{\mathrm{SE}}}
\nabla_{z_i}
q_a^{\mathrm{SE}}
(\tau,\mathbf{z}_{\mathcal{S}_a^{\mathrm{SE}}}(\tau)\,|\,\chi)^{\top}
\\
&\quad \quad\quad\quad 
\cdot
\left(
f_i^{\theta}(\tau,\mathbf{z}(\tau)\,|\,\chi)
+
g_i(\tau,\mathbf{z}(\tau)\,|\,\chi)u_i(\tau)
\right).
\end{aligned}
\label{eqn:app_SE_chain_rule}
\end{equation}
According to \eqref{eqn:SE_finite_horizon_zeta}, \eqref{eqn:SE_local} becomes
\begin{equation}
\begin{aligned}
&\nabla_{z_i}
q_a^{\mathrm{SE}}
(\tau,\mathbf{z}_{\mathcal{S}_a^{\mathrm{SE}}}(\tau)\,|\,\chi)^{\top}
\\
&\quad \cdot
\left(
f_i^{\theta}(\tau,\mathbf{z}(\tau)\,|\,\chi)
+
g_i(\tau,\mathbf{z}(\tau)\,|\,\chi)u_i(\tau)
\right)
\\
+\,&
w_{a,i}^{\mathrm{SE}}
\Bigl(
\partial_{\tau}
q_a^{\mathrm{SE}}
(\tau,\mathbf{z}_{\mathcal{S}_a^{\mathrm{SE}}}(\tau)\,|\,\chi)
-
\dot{\beta}_a^{\mathrm{SE}}
(\tau\,|\,\chi,\mathbf{z}(0))
\\
&\quad\quad\quad
+
\alpha_a^{\mathrm{SE}}\!
\left(
V_a^{\mathrm{SE}}
(\tau,\mathbf{z}_{\mathcal{S}_a^{\mathrm{SE}}}(\tau)\,|\,\chi)
\right)
\Bigr)
\leq 0.
\end{aligned}
\label{eqn:app_SE_local_expanded}
\end{equation}
Summing \eqref{eqn:app_SE_local_expanded} over $i\in\mathcal{S}_a^{\mathrm{SE}}$ and using $w_{a,i}^{\mathrm{SE}}\geq0$ such that $\sum_{i\in\mathcal{S}_a^{\mathrm{SE}}} w_{a,i}^{\mathrm{SE}} =1$ yields
\begin{equation}
\begin{aligned}
&\quad
\sum_{i\in\mathcal{S}_a^{\mathrm{SE}}}
\nabla_{z_i}
q_a^{\mathrm{SE}}
(\tau,\mathbf{z}_{\mathcal{S}_a^{\mathrm{SE}}}(\tau)\,|\,\chi)^{\top}
\\
&\quad\quad \quad\; \cdot
\left(
f_i^{\theta}(\tau,\mathbf{z}(\tau)\,|\,\chi)
+
g_i(\tau,\mathbf{z}(\tau)\,|\,\chi)u_i(\tau)
\right)
\\
&\quad +
\partial_{\tau}
q_a^{\mathrm{SE}}
(\tau,\mathbf{z}_{\mathcal{S}_a^{\mathrm{SE}}}(\tau)\,|\,\chi)
-
\dot{\beta}_a^{\mathrm{SE}}
(\tau\,|\,\chi,\mathbf{z}(0))
\\
&\quad +
\alpha_a^{\mathrm{SE}}\!
\left(
V_a^{\mathrm{SE}}
(\tau,\mathbf{z}_{\mathcal{S}_a^{\mathrm{SE}}}(\tau)\,|\,\chi)
\right)
\leq 0.
\end{aligned}
\label{eqn:app_SE_sum}
\end{equation}
According to \eqref{eqn:app_SE_sum} and \eqref{eqn:app_SE_chain_rule}, we have
\begin{equation}
\frac{\mathrm{d}}{\mathrm{d}\tau}
V_a^{\mathrm{SE}}
(\tau,\mathbf{z}_{\mathcal{S}_a^{\mathrm{SE}}}(\tau)\,|\,\chi)
+
\alpha_a^{\mathrm{SE}}\!\left(
V_a^{\mathrm{SE}}
(\tau,\mathbf{z}_{\mathcal{S}_a^{\mathrm{SE}}}(\tau)\,|\,\chi)
\right)
\leq0.
\label{eqn:app_SE_global_decay}
\end{equation}
Since $\alpha_a^{\mathrm{SE}}$ is an extended class-$\mathcal{K}_{\infty}$ function, we have $\alpha_a^{\mathrm{SE}}(0)=0$ and $\alpha_a^{\mathrm{SE}}(s)\in \mathbb{R}_{\geq 0}$, $\forall s\in \mathbb{R}_{\geq 0}$. Suppose $V_a^{\mathrm{SE}}(0,\mathbf{z}_{\mathcal{S}_a^{\mathrm{SE}}}(0)\,|\,\chi)\leq 0$ but there exists some $\tau_1\in(0,1]$ such that $V_a^{\mathrm{SE}} (\tau_1,\mathbf{z}_{\mathcal{S}_a^{\mathrm{SE}}}(\tau_1)\,|\,\chi)\in \mathbb{R}_{> 0}$. By continuity, let $\tau_0<\tau_1$ be the last time before $\tau_1$ at which $V_a^{\mathrm{SE}}(\tau_0,\mathbf{z}_{\mathcal{S}_a^{\mathrm{SE}}}(\tau_0)\,|\,\chi)=0$. Then, for every $\tau\in(\tau_0,\tau_1]$, $V_a^{\mathrm{SE}} (\tau,\mathbf{z}_{\mathcal{S}_a^{\mathrm{SE}}}(\tau)\,|\,\chi)\in \mathbb{R}_{> 0}$, $\alpha_a^{\mathrm{SE}}\!\left(V_a^{\mathrm{SE}}(\tau,\mathbf{z}_{\mathcal{S}_a^{\mathrm{SE}}}(\tau)\,|\,\chi)\right)\in \mathbb{R}_{> 0}$, and $\frac{\mathrm{d}}{\mathrm{d}\tau}V_a^{\mathrm{SE}}(\tau,\mathbf{z}_{\mathcal{S}_a^{\mathrm{SE}}}(\tau)\,|\,\chi)<0$ by \eqref{eqn:app_SE_global_decay}, which contradicts $V_a^{\mathrm{SE}}(\tau_0,\mathbf{z}_{\mathcal{S}_a^{\mathrm{SE}}}(\tau_0)\,|\,\chi)=0$ and $V_a^{\mathrm{SE}}(\tau_1,\mathbf{z}_{\mathcal{S}_a^{\mathrm{SE}}}(\tau_1)\,|\,\chi)\in \mathbb{R}_{\geq 0}$. Therefore,
\begin{equation*}
V_a^{\mathrm{SE}}
(\tau,\mathbf{z}_{\mathcal{S}_a^{\mathrm{SE}}}(\tau)\,|\,\chi)
\leq0,
\quad
\forall\tau\in[0,1],
\end{equation*}
and \eqref{eqn:SE_finite_horizon_V} at $\tau=1$ gives $q_a^{\mathrm{SE}}(1,\mathbf{z}_{\mathcal{S}_a^{\mathrm{SE}}}(1)\,|\,\chi)\leq\beta_a^{\mathrm{SE}}(1\,|\,\chi,\mathbf{z}(0))$.

\subsection{Proof of Proposition~\ref{prop:c2_feasibility}} \label{appendix:c2_proof}
In this paper, we choose
\begin{equation*}
\phi_\delta(\eta)
=
\begin{cases}
\eta, 
&\eta\leq-\delta,\\
{\delta} \left({\eta}/{\delta}-1\right)^3 \left({\eta}/{\delta}+3\right)/{16},
&|\eta|<\delta,\\
0, 
&\eta\geq\delta,
\end{cases}
\end{equation*}
which satisfies that $\phi_\delta\in C^2$, $\phi_\delta(\eta)\in[-1,0]$, and $\phi_\delta(\eta)\leq\min\{\eta,0\}$, $\forall \eta\in[-1,1]$.

Given an agnet $i$, assume it is incident to two active SE factors, indexed by $1$ and $2$, at any $\tau \in [0,1]$. Since $\phi_\delta(\eta)\leq\eta$, then we have
\begin{align*}
(\ell_{2,i}^{\mathrm{SE}})^{\top}d_{1,i}^{\mathrm{SE}}
&=
\left\|\ell_{1,i}^{\mathrm{SE}}\right\|
\left\|\ell_{2,i}^{\mathrm{SE}}\right\|
\left(
\eta_i^{\mathrm{SE}}
-
\phi_\delta(\eta_i^{\mathrm{SE}})
\right)
\geq 0,\\
(\ell_{1,i}^{\mathrm{SE}})^{\top}d_{2,i}^{\mathrm{SE}}
&=
\left\|\ell_{1,i}^{\mathrm{SE}}\right\|
\left\|\ell_{2,i}^{\mathrm{SE}}\right\|
\left(
\eta_i^{\mathrm{SE}}
-
\phi_\delta(\eta_i^{\mathrm{SE}})
\right)
\geq 0,\\
(\ell_{1,i}^{\mathrm{SE}})^{\top}d_{1,i}^{\mathrm{SE}}
&=
\left\|\ell_{1,i}^{\mathrm{SE}}\right\|^2
\left(
1-
\eta_i^{\mathrm{SE}}
\phi_\delta(\eta_i^{\mathrm{SE}})
\right)
\geq 0,\\
(\ell_{2,i}^{\mathrm{SE}})^{\top}d_{2,i}^{\mathrm{SE}}
&=
\left\|\ell_{2,i}^{\mathrm{SE}}\right\|^2
\left(
1-
\eta_i^{\mathrm{SE}}
\phi_\delta(\eta_i^{\mathrm{SE}})
\right)
\geq 0,
\end{align*}
where the last two quantities are strictly positive whenever the corresponding $d_{a,i}^{\mathrm{SE}}\neq 0$ since $\eta_i^{\mathrm{SE}}\in[-1,1]$ and
$\phi_\delta(\eta_i^{\mathrm{SE}})\in[-1,0]$. Additionally, for every active SE factor $a$, \eqref{eqn:C2_weights} satisfies $w_{a,i}^{\mathrm{SE}}\geq0$ and $\sum_{i\in\mathcal{S}_a^{\mathrm{SE}}} w_{a,i}^{\mathrm{SE}}=1$. Also, consider the SE inequalities
\begin{equation}
(\ell_{a,i}^{\mathrm{SE}})^{\top}u_i + b_{a,i}^{\mathrm{SE}} \leq 0, \quad \forall a\in\{1,2\}.
\label{eqn:proof_c2_SE_inequalities}
\end{equation}
Define $\lambda_{a,i} \coloneqq b_{a,i}^{\mathrm{SE}}/((\ell_{a,i}^{\mathrm{SE}})^{\top}d_{a,i}^{\mathrm{SE}})$ when $b_{a,i}^{\mathrm{SE}}>0$, and $\lambda_{a,i}\coloneqq0$ otherwise. One can observe that $(\ell_{a,i}^{\mathrm{SE}})^{\top}d_{a,i}^{\mathrm{SE}}>0$ whenever $b_{a,i}^{\mathrm{SE}}>0$ since $d_{a,i}^{\mathrm{SE}}\neq0$ in this case. Now choose
\begin{equation*}
u_i = -\lambda_{1,i}d_{1,i}^{\mathrm{SE}} -\lambda_{2,i}d_{2,i}^{\mathrm{SE}}.
\end{equation*}
Then, for $a=1$,
\begin{equation}
(\ell_{1,i}^{\mathrm{SE}})^{\top}u_i
+b_{1,i}^{\mathrm{SE}}
=
-\lambda_{1,i}
(\ell_{1,i}^{\mathrm{SE}})^{\top}d_{1,i}^{\mathrm{SE}}
-\lambda_{2,i}
(\ell_{1,i}^{\mathrm{SE}})^{\top}d_{2,i}^{\mathrm{SE}}
+b_{1,i}^{\mathrm{SE}}.
\label{eqn:proof2_inequality_u_i}
\end{equation}
If $b_{1,i}^{\mathrm{SE}} >0$, the first and last terms on the right-hand side of \eqref{eqn:proof2_inequality_u_i} cancel, and the remaining term is nonpositive. If $b_{1,i}^{\mathrm{SE}}\leq 0$, then $\lambda_{1,i}=0$ and the left-hand side of \eqref{eqn:proof2_inequality_u_i} is nonpositive. The same analysis holds for $a=2$. Hence, \eqref{eqn:proof_c2_SE_inequalities} are satisfied. If only one SE factor is active, then $d_{a,i}^{\mathrm{SE}}=\ell_{a,i}^{\mathrm{SE}}$, and the same construction with a single coefficient $\lambda_{a,i}$ also gives a feasible $u_i$. Therefore, the feasible set of \eqref{eqn:SE_local_QP} is nonempty.

\subsection{Proof of Theorem~\ref{thm:private_finite_time}} \label{appendix:PE_proof}
Since $\partial_{\tau}V_i^{\mathrm{PE}}(\tau,\mathbf{z}\,|\,\chi)=0$, $\forall i\in\mathcal{N}$, we have
\begin{equation}
\begin{aligned}
&\frac{\mathrm{d}}{\mathrm{d}\tau}
V^{\mathrm{PE}}(\mathbf{z}(\tau)\,|\,\chi)
\\
={}&
\sum_{i\in\mathcal{N}}
\nabla_{z_i}V^{\mathrm{PE}}
(\mathbf{z}(\tau)\,|\,\chi)^{\top}
\\
&\quad \quad \cdot
\left(
f_i^{\theta}(\tau,\mathbf{z}(\tau)\,|\,\chi)
+
g_i(\tau,\mathbf{z}(\tau)\,|\,\chi)u_i(\tau)
\right).
\end{aligned}
\label{eqn:app_PE_chain_rule}
\end{equation}
Using \eqref{eqn:PE_finite_time_alpha} in \eqref{eqn:PE_rhs}, summing \eqref{eqn:PE_local} over $i\in\mathcal{N}$, and applying \eqref{eqn:app_PE_chain_rule} yield
\begin{equation*}
\frac{\mathrm{d}}{\mathrm{d}\tau}
V^{\mathrm{PE}}(\mathbf{z}(\tau)\,|\,\chi)
+
c^{\mathrm{PE}}\omega^{\mathrm{PE}}(\tau)
\sum_{i\in\mathcal{N}}
\left(
V_i^{\mathrm{PE}}(\mathbf{z}(\tau)\,|\,\chi)
\right)^{\rho^{\mathrm{PE}}}
\leq 0.
\end{equation*}
Since $\rho^{\mathrm{PE}}\in(0,1)$, $\alpha^{\mathrm{PE}}(s)=c^{\mathrm{PE}}s^{\rho^{\mathrm{PE}}}$ is subadditive on $\mathbb{R}_{\geq0}$. Hence,
\begin{equation*}
\alpha^{\mathrm{PE}}\!\left(
V^{\mathrm{PE}}(\mathbf{z}(\tau)\,|\,\chi)
\right)
\leq
\sum_{i\in\mathcal{N}}
\alpha^{\mathrm{PE}}\!\left(
V_i^{\mathrm{PE}}(\mathbf{z}(\tau)\,|\,\chi)
\right),
\end{equation*}
and thus
\begin{equation}
\frac{\mathrm{d}}{\mathrm{d}\tau}
V^{\mathrm{PE}}(\mathbf{z}(\tau)\,|\,\chi)
\leq
-
c^{\mathrm{PE}}\omega^{\mathrm{PE}}(\tau)
\left(
V^{\mathrm{PE}}(\mathbf{z}(\tau)\,|\,\chi)
\right)^{\rho^{\mathrm{PE}}},
\label{eqn:app_PE_scalar_decay}
\end{equation}
which adapts the finite-time Lyapunov inequality~\citep{bhat2000finite} to the generative process using the time-varying weight $\omega^{\mathrm{PE}}(\tau)$. On any interval over which $V^{\mathrm{PE}}(\mathbf{z}(\tau)\,|\,\chi)>0$, multiplying \eqref{eqn:app_PE_scalar_decay} by $(1-\rho^{\mathrm{PE}})
\left(V^{\mathrm{PE}}(\mathbf{z}(\tau)\,|\,\chi)\right)^{-\rho^{\mathrm{PE}}}$ leads to
\begin{equation*}
\frac{\mathrm{d}}{\mathrm{d}\tau}
\left(
V^{\mathrm{PE}}(\mathbf{z}(\tau)\,|\,\chi)
\right)^{1-\rho^{\mathrm{PE}}}
\leq
-
c^{\mathrm{PE}}
(1-\rho^{\mathrm{PE}})
\,\omega^{\mathrm{PE}}(\tau).
\end{equation*}
Integrating from $0$ to $\tau$ while $V^{\mathrm{PE}}(\mathbf{z}(\tau)\,|\,\chi)>0$ results in
\begin{equation*}
\begin{aligned}
&
\left(
V^{\mathrm{PE}}(\mathbf{z}(\tau)\,|\,\chi)
\right)^{1-\rho^{\mathrm{PE}}}
\\
\leq\;&
\left(
V^{\mathrm{PE}}(\mathbf{z}(0)\,|\,\chi)
\right)^{1-\rho^{\mathrm{PE}}}
-
c^{\mathrm{PE}}
(1-\rho^{\mathrm{PE}})
\int_{0}^{\tau}\!
\omega^{\mathrm{PE}}(s)
\,\mathrm{d}s.
\end{aligned}
\end{equation*}
Additionally, \eqref{eqn:app_PE_scalar_decay} and $V^{\mathrm{PE}}\geq0$ imply that, for any $\bar{\tau}\in[0,1]$,
\begin{equation*}
V^{\mathrm{PE}}(\mathbf{z}(\bar{\tau})\,|\,\chi)=0
\;\;\Longrightarrow\;\;
V^{\mathrm{PE}}(\mathbf{z}(\tau)\,|\,\chi)=0,
\;
\forall\tau\in[\bar{\tau},1].
\end{equation*}
Therefore,
\begin{equation*}
\begin{aligned}
V^{\mathrm{PE}}(\mathbf{z}(\tau)\,|\,\chi)
&\leq
\max
\biggl\{
0,\;
\left(
V^{\mathrm{PE}}(\mathbf{z}(0)\,|\,\chi)
\right)^{1-\rho^{\mathrm{PE}}}
\\
&\qquad
-
c^{\mathrm{PE}}
(1-\rho^{\mathrm{PE}})
\int_{0}^{\tau}\!
\omega^{\mathrm{PE}}(s)
\,\mathrm{d}s
\biggr\}^{\frac{1}{1-\rho^{\mathrm{PE}}}},
\end{aligned}
\end{equation*}
and evaluating the upper bound at $\tau=1$ implies $V^{\mathrm{PE}}(\mathbf{z}(1)\,|\,\chi)=0$.

\subsection{Proof of Corollary~\ref{cor:PE_local_feasibility}} \label{appendix:PE_feasibility_proof}
If $b_i^{\mathrm{PE}}\leq0$, then $u_i=0$ satisfies the inequality constraint in \eqref{eqn:PE_local_qp_general}. If $b_i^{\mathrm{PE}}>0$ and $\ell_i^{\mathrm{PE}}\neq0$, choose
\begin{equation*}
u_i = -\frac{b_i^{\mathrm{PE}}}{(\ell_i^{\mathrm{PE}})^{\top}H_i^{-1}\ell_i^{\mathrm{PE}}} H_i^{-1}\ell_i^{\mathrm{PE}}.
\end{equation*}
Since $H_i\succ0$ and $\ell_i^{\mathrm{PE}}\neq0$, we have $(\ell_i^{\mathrm{PE}})^{\top}H_i^{-1}\ell_i^{\mathrm{PE}}>0$, and thus $(\ell_i^{\mathrm{PE}})^{\top}u_i+b_i^{\mathrm{PE}}=0$. Therefore, the feasible set of \eqref{eqn:PE_local_qp_general} is nonempty.

\subsection{Proof of Theorem~\ref{thm:wasserstein}} \label{appendix:wasserstein_proof}
For an initial sample $\xi\sim p_0$, define 
$$
\Delta(\tau\,|\,\xi) \coloneqq \mathbf{X}^{\mathrm{gui}}(\tau\,|\,\xi)-\mathbf{X}^{\mathrm{nom}}(\tau\,|\,\xi).
$$
Since the nominal and guided trajectories start from the same initial sample, $\Delta(0\,|\,\xi)=0$. Then, we can obtain
\begin{equation*}
\begin{aligned}
\Delta(\tau\,|\,\xi)
=
\int_0^\tau \!
\Bigl(
&f^{\theta}
(s,\mathbf{X}^{\mathrm{gui}}(s\,|\,\xi)\,|\,\chi)
\\
&-
f^{\theta}
(s,\mathbf{X}^{\mathrm{nom}}(s\,|\,\xi)\,|\,\chi)
+
\Gamma(s\,|\,\xi)
\Bigr)
\,\mathrm{d}s.
\end{aligned}
\end{equation*}
Since $\left\|f^{\theta}(\tau,\mathbf{y}\,|\,\chi)-f^{\theta}(\tau,\mathbf{z}\,|\,\chi)\right\|\leq L(\tau)\left\|\mathbf{y}-\mathbf{z}\right\|$, $\forall \tau\in[0,1]$, $\forall \mathbf y,\mathbf z\in\mathcal Z^N$, we have
\begin{equation*}
\left\|\Delta(\tau\,|\,\xi)\right\|
\leq
\int_0^\tau \!
L(s)\left\|\Delta(s\,|\,\xi)\right\|
\,\mathrm{d}s
+
\int_0^\tau \!
\left\|\Gamma(s\,|\,\xi)\right\|
\,\mathrm{d}s.
\end{equation*}
Then, applying Gr\"onwall's inequality leads to
\begin{equation*}
\left\|\Delta(1\,|\,\xi)\right\|
\leq
\int_0^1
\exp\!\left(
\int_s^1 L(r)\,\mathrm{d}r
\right)
\left\|\Gamma(s\,|\,\xi)\right\|
\,\mathrm{d}s.
\end{equation*}
Thus, using Minkowski's inequality, we can obtain
\begin{equation}
\begin{aligned}
&\sqrt{
\int_{\mathcal{Z}^N}\!
\left\|
\Delta(1\,|\,\xi)
\right\|^2
p_0(\xi)
\,\mathrm{d}\xi
}
\\
\leq\;&
\int_0^1\!
\exp\!
\left(
\int_s^1 \!
L(r)
\,\mathrm{d}r
\right)
\sqrt{
\int_{\mathcal{Z}^N}\!
\left\|\Gamma(s\,|\,\xi)\right\|^2
p_0(\xi)
\,\mathrm{d}\xi
}
\;\mathrm{d}s.
\end{aligned}
\label{eqn:L2_Minkowski_bound}
\end{equation}
Since the Wasserstein distance between $\mu$ and $\nu$ with finite second moments is
\begin{equation}
\label{eqn:Wasserstein_def}
W_2^2(\mu,\nu)
\coloneqq
\inf_{\pi\in\Pi(\mu,\nu)}
\int \!
\left\|\mathbf{x}-\mathbf{y}\right\|^2
\,\mathrm{d}\pi(\mathbf{x},\mathbf{y}),
\end{equation}
where $\Pi(\mu,\nu)$ denotes the set of probability measures $\pi$ on $\mathcal{Z}^N\times\mathcal{Z}^N$ whose first and second marginal distributions are $\mu$ and $\nu$, respectively, we have
\begin{equation*}
\begin{aligned}
W_2^2(\mu_1,\nu_1)
\leq &
\int_{\mathcal{Z}^N}\!
\left\|
\mathbf{X}^{\mathrm{gui}}(1\,|\,\xi)
-
\mathbf{X}^{\mathrm{nom}}(1\,|\,\xi)
\right\|^2
p_0(\xi)
\,\mathrm{d}\xi \\
&\;=
\int_{\mathcal{Z}^N}\!
\left\|\Delta(1\,|\,\xi)\right\|^2
p_0(\xi)
\,\mathrm{d}\xi.
\end{aligned}
\end{equation*}
Using\eqref{eqn:L2_Minkowski_bound}, \eqref{eqn:wasserstein_bound} can be obtained.

In addition, for a $\pi\in\Pi(\mu_1,\nu_1)$, let $(\mathbf{Z}^{\mathrm{nom}},\mathbf{Z}^{\mathrm{gui}})$ have joint probability distribution $\pi$. Then $\mathcal{D}_N(\mathbf{Z}^{\mathrm{nom}}\,|\,\chi)$ and $\mathcal{D}_N(\mathbf{Z}^{\mathrm{gui}}\,|\,\chi)$ have probability distributions $\bar{\mu}_1$ and $\bar{\nu}_1$, respectively. Thus, according to \eqref{eqn:Wasserstein_def} and $\left\|\mathcal{D}_N(\mathbf{y}\,|\,\chi)-\mathcal{D}_N(\mathbf{z}\,|\,\chi)\right\|\leq L_D\left\|\mathbf{y}-\mathbf{z}\right\|$, $\forall \mathbf y,\mathbf z\in\mathcal Z^N$, we can obtain
\begin{equation*}
\begin{aligned}
W_2^2(\bar{\mu}_1,\bar{\nu}_1)
&\leq
\int \!
\left\|
\mathcal{D}_N(\mathbf{x}\,|\,\chi)
-
\mathcal{D}_N(\mathbf{y}\,|\,\chi)
\right\|^2
\,\mathrm{d}\pi(\mathbf{x},\mathbf{y})\\
&\leq
L_D^2
\int \!
\left\|\mathbf{x}-\mathbf{y}\right\|^2
\,\mathrm{d}\pi(\mathbf{x},\mathbf{y}).
\end{aligned}
\end{equation*}
Since $\pi\in\Pi(\mu_1,\nu_1)$ is arbitrary, we have
\begin{equation*}
W_2^2(\bar{\mu}_1,\bar{\nu}_1)
\leq
L_D^2 W_2^2(\mu_1,\nu_1),
\end{equation*}
which gives
\eqref{eqn:decoder_wasserstein_bound}.

\section{Task 1: Detailed Settings and Additional Results} \label{appendix:task1_details}
\subsection{Representation, conditioning, network structure, and training}
\label{app:task1_model}

Denote the plank pose as $g_B\in\mathbb{SE}(2)$, with orientation $R_B\in\mathbb{SO}(2)$ and position $p_B\in\mathbb{R}^2$. Its body twist is
\[
\xi_B^b
=
(g_B^{-1}\dot g_B)^\vee
=
[(v_B^b)^\top,\omega_B^b]^\top
\in\mathbb{R}^3,
\]
where $(\cdot)^\vee:\mathfrak{se}(2)\rightarrow\mathbb{R}^3$. 

For robot $i$, let $r_i=[r_{i,x},r_{i,y}]^\top\in\mathbb{R}^2$ denote its contact point in the body frame of the plank and define $r_i^\perp \coloneqq [-r_{i,y},\,r_{i,x}]^\top$. The velocity of the contact point is 
$$
c_i^b = J_B(r_i)\xi_B^b,
$$
where $J_B(r_i) \coloneqq [I_2,r_i^\perp] \in\mathbb{R}^{2\times3}$.

The object generated in Task~1 is a motion policy over a finite physical horizon. Conditioned on $\chi$, the decoder maps the joint generative state to a sequence of robot contact-point velocities,
\[
\mathcal D_N(\mathbf z\,|\,\chi)
=
\{c_{i,t}^b\}_{i\in\mathcal N,\,t\in\mathcal T},
\]
where $\mathcal T\coloneqq\{1,\ldots,T\}$ and $c_{i,t}^b = J_B(r_i)\xi_{B,t}^b(\mathbf z\,|\,\chi)$. Only the first $T_{\rm exec}$ actions of the generated policy are executed before a new policy is generated.

Let $\mathcal K\coloneqq\{1,\ldots,K\}$. For each robot, the generative state $z_i\in\mathbb R^{24}$ is decoded into $K=8$ three-dimensional vectors $\eta_{i,k}\in\mathbb R^3$, $\forall k \in \mathcal{K}$. Define $M_i\coloneqq J_B(r_i)^\top J_B(r_i)$ and $G\coloneqq\sum_{i=1}^{N}M_i$, with $G\succ0$. Let $\mathcal T\coloneqq\{1,\ldots,T\}$ with $T=20$, and let $B_{\rm spl}\in\mathbb R^{T\times K}$ be the B-spline matrix. Before a new policy is generated, a $T$-step body twist sequence $\{\bar\xi_{B,t}^b\}_{t\in\mathcal T}$ is constructed with the previously generated policy and the current motion. Then, the decoder $\mathcal D_N(\mathbf z\,|\,\chi)=\{c_{i,t}^b\}_{i\in\mathcal N,t\in\mathcal T}$ gives 
$$
\xi_{B,t}^b=\bar\xi_{B,t}^b+\sum_{k=1}^{K}(B_{\rm spl})_{t,k} (G^{-1}\sum_{i=1}^{N}M_i\eta_{i,k}),
$$
and $c_{i,t}^b=J_B(r_i)\xi_{B,t}^b$, $\forall i\in\mathcal N, t\in\mathcal T$. Only the first $T_{\rm exec}=4$ actions of the generated $T$-step policy are executed before a new policy is generated. Before the next initial sample is drawn from a Gaussian distribution, the previous policy is shifted forward by these four actions, with its final value repeated to keep length $T$, and its first five values interpolated between the current motion and the shifted sequence. The resulting sequence $\{\bar\xi_{B,t}^b\}_{t\in\mathcal T}$ is included in the next condition $\chi$.

For Task~1, the condition $\chi$ contains only quantities known before a new policy is generated and is characterized by three vectors for each robot $i$. Specifically, the vector $\lambda_i\in\mathbb R^{23}$ represents robot $i$, the plank, and their locations relative to the workspace geometry. For each other robot $j\neq i$, the vector $\pi_{ij}\in\mathbb R^8$ represents the position and velocity of robot $j$ relative to robot $i$. The vector $c_i\in\mathbb R^{45}$ represents robot $i$'s contact location on the plank, the desired plank pose, and the continuation of its contact point velocities from the previously generated policy. In particular, let $p_i^w,v_i^w\in\mathbb R^2$ be the current position and velocity of robot $i$ in the world frame. Denote the current plank pose as $g_B=(R_B,p_B)$ with $p_B=[x_B,y_B]^\top$, and let $\theta_B$ denote the angle associated with $R_B$. Let $v_B^w\in\mathbb R^2$ and $\omega_B^b\in\mathbb R$ be the current translational and angular velocities of the plank. The corresponding quantities in the body frame of the plank are $\Delta p_i^b\coloneqq R_B^\top(p_i^w-p_B)$, $v_i^b\coloneqq R_B^\top v_i^w$, and $v_B^b\coloneqq R_B^\top v_B^w$. Let $[X_{\min},X_{\max}]\times[Y_{\min},Y_{\max}]$ quantify the workspace, define $X\coloneqq(X_{\max}-X_{\min})/2$ and $Y\coloneqq(Y_{\max}-Y_{\min})/2$, and let $x_{\rm gap}^L$ and $x_{\rm gap}^R$ be the two $x$-coordinates of the gap edges. We denote by $x_{\rm goal}^{\min}$ the minimum $x$-coordinate of the region from which robot goals are sampled. The vector $e_B \in \mathbb R^{11}$ consists of the plank orientation and its position relative to the workspace, i.e., 
\begin{equation*}
\begin{aligned}
e_B := \Bigg[&
\cos(\theta_B),\,
-\sin(\theta_B),\,
\sin(\theta_B),\,
\cos(\theta_B),\\
&
\frac{X_{\min}-x_B}{X},\,
\frac{x_{\rm gap}^{L}-x_B}{X},\,
\frac{x_{\rm gap}^{R}-x_B}{X},\,
\frac{x_{\rm goal}^{\min}-x_B}{X},\\
&
\frac{X_{\max}-x_B}{X},\,
\frac{Y_{\min}-y_B}{Y},\,
\frac{Y_{\max}-y_B}{Y}
\Bigg]^{\top}.
\end{aligned}
\end{equation*}

Denote $o_i\in\{0,1\}^3$, which indicates if robot $i$ is on the initial side ground, on the plank, or on the opposite side ground. Let $a_i\in\{0,1\}$ indicate whether robot $i$ contacts with the plank, and let $b_B\in\{0,1\}$ indicate whether bridge construction has succeeded. Additionally, 
\begin{equation*}
\begin{aligned}
\lambda_i=
\Biggl[
&\left(
\operatorname{diag}\!\left(\frac{1}{X},\frac{1}{Y}\right)
\Delta p_i^b
\right)^\top,
\\
&\left(\frac{v_i^b}{0.86}\right)^\top,
o_i^\top,
\left(\frac{v_B^b}{0.44}\right)^\top,
\\
&\frac{\omega_B^b}{0.32},
b_B,
a_i,
e_B^\top
\Biggr]^\top
\in \mathbb{R}^{23}.
\end{aligned}
\end{equation*}
For every $j\neq i$, denote the relative position and velocity $\Delta p_{ij}^b\coloneqq R_B^\top(p_j^w-p_i^w)$ and $\Delta v_{ij}^b\coloneqq R_B^\top(v_j^w-v_i^w)$ in the body frame of the plank, then
\begin{equation*}
\begin{aligned}
\pi_{ij}=
\Biggl[
&\left(
\operatorname{diag}\!\left(\frac{1}{X},\frac{1}{Y}\right)
\Delta p_{ij}^b
\right)^\top,
\\
&\left(\frac{\Delta v_{ij}^b}{0.86}\right)^\top,
o_j^\top,
a_j
\Biggr]^\top
\in \mathbb{R}^8.
\end{aligned}
\end{equation*}
Let $L_B$ be the plank length, and the desired plank pose be $g_B^{\rm goal}=(R_g(\theta_g),p_g)$. Denote $R_B^\top(p_g-p_B)=[\Delta x_g^b,\Delta y_g^b]^\top$, and let $\Delta\theta_g\in(-\pi,\pi]$ be the shortest signed angular difference from $\theta_B$ to $\theta_g$. Let 
$$
C_i^{\rm cont} = 
\left[
J_B(r_i)\bar\xi_{B,1}^b, \ldots, J_B(r_i)\bar\xi_{B,T}^b
\right]^\top \in\mathbb R^{T\times2}.
$$
For each of the $40$ entries of $C_i^{\rm cont}$, the corresponding training set mean is subtracted and the result is divided by the corresponding training set standard deviation. Let $\widetilde C_i^{\rm cont}\in\mathbb R^{T\times2}$ denote the resulting matrix, then
\begin{equation*}
\begin{aligned}
c_i=
\Biggl[
&\frac{r_{i,x}}{L_B/2},
\operatorname{sgn}(r_{i,y}),
\frac{\Delta x_g^b}{L_B},
\\
&\frac{\Delta y_g^b}{Y},
\frac{\Delta\theta_g}{\pi/3},
\operatorname{vec}(\widetilde C_i^{\rm cont})^\top
\Biggr]^\top
\in\mathbb R^{45},
\end{aligned}
\end{equation*}
where $\operatorname{sgn}(s)=1$ for $s>0$ and $\operatorname{sgn}(s)=-1$ for $s<0$, and $\operatorname{vec}(\cdot)$ stacks the entries of a matrix into a vector.

For each $j\neq i$, the multilayer perceptron (MLP) $\psi_{\rm msg}$, with layer dimensions $77\rightarrow160\rightarrow160$, is applied to $z_j$, $c_j$, and $\pi_{ij}$ to form the message vector
\begin{equation*}
m_{ij}=\psi_{\rm msg}\!\left([z_j^\top,c_j^\top,\pi_{ij}^\top]^\top\right)\in\mathbb R^{160}.
\end{equation*}
The message vectors from all other robots are averaged as 
$$
\bar{m}_i
\coloneqq
\frac{1}{N-1}
\sum_{j\neq i}m_{ij}.
$$ 
The MLP $\psi_{\rm ag}$ shared by each agent, with layer dimensions $258\rightarrow256\rightarrow256\rightarrow256\rightarrow24$, is used to represent
\begin{equation*}
\begin{aligned}
f_i^\theta(\tau,\mathbf z\,|\,\chi)
=
\psi_{\rm ag}\!\Bigl(
&[z_i^\top,c_i^\top,\lambda_i^\top,\bar m_i^\top,
\\
&(N-1)/7,\varphi(\tau)^\top]^\top
\Bigr)
\in\mathbb R^{24},
\end{aligned}
\end{equation*}
in which 
$$
\varphi(\tau)=[\tau,\sin(\pi\tau),\cos(\pi\tau),\sin(2\pi\tau),\cos(2\pi\tau)]^\top\in\mathbb R^5.
$$

The nominal vector field is trained and validated on $N\in\{4,5,6\}$. For Task~1, we instantiate the conditional probability path by choosing $p_0=\mathcal N(0,I)$. For an encoded data sample $\mathbf Z\sim p_{\rm data}(\mathbf z\,|\,\chi)$, we draw $\mathbf X_0\sim p_0$ and choose 
$$
\mathbf X(\tau) = (1-\tau)\mathbf X_0+\tau\mathbf Z, \quad \forall \tau\sim\mathrm{Unif}[0,1]
$$
Then, the corresponding conditional vector field is
$$
v(\tau,\mathbf X(\tau)\,|\,\mathbf Z,\chi)=\mathbf Z-\mathbf X_0.
$$
At $\tau$ in the generative process, the estimated endpoint is 
$$
\widehat{\mathbf Z} = \mathbf X(\tau) + (1-\tau) f^\theta(\tau,\mathbf X(\tau)\,|\,\chi).
$$ 
Let $\widehat c_{i,t}^b$ and $\widehat\xi_{B,t}^b$ denote the velocity of the contact point and the body twist of the plank obtained by decoding $\widehat{\mathbf Z}$, and let $c_{i,t}^{b,*}$ and $\xi_{B,t}^{b,*}$ denote the corresponding quantities from the trajectory used to construct $\mathbf Z$. Let $\sigma_c\in\mathbb R^2$ contain the training set scales for the two components of the contact point velocity, and let $\sigma_\xi\in\mathbb R^3$ contain the training set scales for the three components of the body twist of the plank. Define $\rho_t=2.5$ for $t=1,\ldots,4$ and $\rho_t=1$ otherwise. The loss function $\mathcal L_{\rm MAC}$ for Task~1 is augmented as
\begin{equation*}
\mathcal L_{\rm train}
=
\mathcal L_{\rm MAC}
+
0.16\,\mathcal L_{\rm motion}
+
0.45\,\mathcal L_{\rm twist},
\end{equation*}
in which
\begin{equation*}
\begin{aligned}
\mathcal L_{\rm motion}
&=
\frac{1}{2N\sum_{t=1}^{T}\rho_t}
\sum_{i=1}^{N}\sum_{t=1}^{T}
\rho_t
\\
&\quad \quad\cdot\,
\left\|
\operatorname{diag}(\sigma_c)^{-1}
\left(
\widehat c_{i,t}^b-c_{i,t}^{b,*}
\right)
\right\|^2
\end{aligned}
\end{equation*}
and
\begin{equation*}
\mathcal L_{\rm twist}
=
\frac{1}{3\sum_{t=1}^{T}\rho_t}
\sum_{t=1}^{T}
\rho_t
\left\|
\operatorname{diag}(\sigma_\xi)^{-1}
\left(
\widehat\xi_{B,t}^b-\xi_{B,t}^{b,*}
\right)
\right\|^2.
\end{equation*}
In addition, training and validation curves over three independent seeds are provided in Appendix~\ref{app:task1_results}.

\subsection{Task and safety factors} \label{app:task1_factors}

The task and safety factors are constructed from geometric margins. Each margin is positive when its corresponding requirement is satisfied. Specifically, the workspace is 
$$
[X_{\min},X_{\max}]\times[Y_{\min},Y_{\max}] =[-7,6.8]\times[-3,3],
$$ 
and the spatial gap takes 
$$
x\in[x_{\rm gap}^{\min},x_{\rm gap}^{\max}]=[0,2].
$$ 
The plank length and width are $L_B=5.8$ and $W_B=0.72$, each robot is a disk with radius $r_R=0.16$, the required plank support on each side of the gap is $d_{\rm sup}=0.55$, and the traversable region for the robot centers must extend at least $d_{\rm cor}=0.05$ beyond each gap edge. Let $g_{B,t}=(R_{B,t},p_{B,t})$, $t=1,\ldots,T_{\rm exec}$, denote the
plank poses predicted after the $T_{\rm exec}=4$ actions that will be executed, with $p_{B,t}=[x_t,y_t]^\top$ and orientation angle $\theta_t$. Define $|\cos(\theta)|_\varepsilon\coloneqq \sqrt{\cos^2(\theta) + \varepsilon^2}$ and $|\sin(\theta)|_\varepsilon\coloneqq \sqrt{\sin^2(\theta) + \varepsilon^2}$ with $\varepsilon=10^{-6}$. The distance from the plank center to its outermost point along the $x$-axis of the world frame is
$$
h_x(\theta)\coloneqq
\tfrac12\!\left(L_B|\cos(\theta)|_\varepsilon+W_B|\sin(\theta)|_\varepsilon\right).
$$ 
The projection onto the same axis of the usable half-length of the plank for a robot center is 
$$
h_c(\theta)\coloneqq \left(L_B/2-r_R-0.075\right)|\cos(\theta)|_\varepsilon.
$$ 
At predicted step $t$, let the four margins be 
\begin{equation*}
\begin{aligned}
m_{1,t}^{\rm task}&=x_{\rm gap}^{\min}-x_t+h_x(\theta_t)-d_{\rm sup},\\
m_{2,t}^{\rm task}&=x_t+h_x(\theta_t)-x_{\rm gap}^{\max}-d_{\rm sup},\\
m_{3,t}^{\rm task}&=x_{\rm gap}^{\min}-x_t+h_c(\theta_t)-d_{\rm cor},\\
m_{4,t}^{\rm task}&=x_t+h_c(\theta_t)-x_{\rm gap}^{\max}-d_{\rm cor}.
\end{aligned}
\end{equation*}
and let $m_{5,t}^{\rm task},\ldots,m_{M_{\rm task},t}^{\rm task}$ denote the remaining geometric margins included in the task factor, where $M_{\rm task}$ is the total number of task margins. The task factor evaluates all of these margins at the final predicted plank pose as
\begin{equation*}
\begin{aligned}
q_{\rm task}(\mathbf z\,|\,\chi)
=
\frac{1}{80}
\log
\Biggl(
&\sum_{k=1}^{2}
\exp\!\left[
-80\,\frac{m_{k,T_{\rm exec}}^{\rm task}}{0.55}
\right]
\\
&+
\sum_{k=3}^{M_{\rm task}}
\exp\!\left[
-80\,\frac{m_{k,T_{\rm exec}}^{\rm task}}{0.2}
\right]
\Biggr).
\end{aligned}
\end{equation*}

Let $m_{t,1}^{\rm safe},\ldots,m_{t,M_{\rm safe}}^{\rm safe}$ denote the geometric margins at predicted step $t\in\{1,\ldots,T_{\rm exec}\}$, where $M_{\rm safe}$ is the number of safety margins evaluated at each step. These margins measure the distances of the plank from the upper and lower workspace boundaries using a $0.2$ buffer, the distances of the robot disks from the gap and from the upper and lower workspace boundaries using a $0.1$ buffer, and whether the plank has sufficient support on the side of the workspace where robots are initiated or has reached sufficient support on the opposite side. The safety factor is then
\begin{equation*}
q_{\rm safe}(\mathbf z\,|\,\chi)
=
\frac{1}{60}
\log\!\left(
\sum_{t=1}^{T_{\rm exec}}
\sum_{k=1}^{M_{\rm safe}}
\exp\!\left[
-60\,\frac{m_{t,k}^{\rm safe}}{0.1}
\right]
\right).
\end{equation*}

The success and failure are evaluated using the exact geometry (without buffers) instead of the factors above. A trial is successful if the robots can move the plank such that it provides at least $d_{\rm sup}=0.55$ of support on both sides of the gap and the region traversable by the robot centers extends beyond both gap edges as required. A failure is counted if the plank loses support on the initial side before reaching the opposite side, leaves the workspace, or causes any robot to cross the workspace boundary or enter the spatial gap.

For Task~1, we choose $g_i=I_{24}$, $H_i=I_{24}$, and $\alpha_a^{\rm SE}(s)=s$. For either factor, $V_a^{\rm SE}(\tau,\mathbf z\,|\,\chi)=q_a(\mathbf z\,|\,\chi)-\beta_a(\tau)$. Hence, $\beta_a(\tau)$ is the upper bound imposed on $q_a$ during the generative process, and $V_a^{\rm SE}\le0$ is equivalent to $q_a(\mathbf z\,|\,\chi)\le\beta_a(\tau)$. Choose
\begin{equation*}
\beta_a(\tau)
=
\beta_{a,\rm end}
+
\left(
\beta_{a,\rm start}-\beta_{a,\rm end}
\right)
(1-6\tau^5+15\tau^4-10\tau^3).
\end{equation*}
At the beginning of each policy generation, let $g_B^{\rm cur}$ denote the plank pose and let $q_{\rm task}^{\rm cur}$ denote the task factor evaluated at
$g_B^{\rm cur}$ instead of the predicted terminal plank pose. Let $\mathbf z_0\coloneqq\mathbf z(0)$ denote the Gaussian initial state during the generative process. Specifically, 
$$
\beta_{\rm task,end}=q_{\rm task}^{\rm cur}-0.05,
$$
$$
\beta_{\rm task,start}=\max\!\left\{q_{\rm task}(\mathbf z_0\,|\,\chi)+0.02,\,\beta_{\rm task,end}+0.02\right\},
$$
$$
\beta_{\rm safe,end}=0,
$$ 
$$
\beta_{\rm safe,start}=q_{\rm safe}(\mathbf z_0\,|\,\chi)+0.02.
$$
Therefore, at the end of each generative process, the task factor is required to satisfy $q_{\rm task}\le q_{\rm task}^{\rm cur}-0.05$, and the safety factor is required to satisfy $q_{\rm safe}\leq 0$. Note that $\beta_{task,\rm start}$ is calculated once before the generative process and is fixed throughout the generative process so $\beta_{\rm task}(\tau)$ is $C^1$ in $\tau$. In addition, the adaptive constraint allocation coefficients $w_{a,i}^{\rm SE}$ are calculated using \eqref{eqn:C2_weights} with $\delta=0.02$. Each robot then solves the decoupled guidance using \eqref{eqn:SE_local_QP} with one constraint corresponding to the task factor and one corresponding to the safety factor.

\subsection{Additional results for Task 1} \label{app:task1_results}
Figure~\ref{fig:task1_rescue} shows an $N=7$ trial in which the two policies, Nominal and Guided, start from the same physical configuration and initial sample. The policy generated by the nominal vector field drives the plank out of the workspace, whereas the policy generated with SE guidance avoids this failure and successfully constructs the bridge. Figure~\ref{fig:task1_safe_sampling} shows the evolution of $q_{\rm safe}$ and its time-varying upper bound $\beta_{\rm safe}$ during a generative process with SE guidance right before the policies generated by the nominal vector field drive the plank out of the workspace. During the generative process, $q_{\rm safe}(\mathbf z(\tau)\,|\,\chi)$ stays below $\beta_{\rm safe}(\tau)$ as the latter goes to
$\beta_{\rm safe}(1)=0$, with the generated policy satisfying $q_{\rm safe}(\mathbf z(1)\,|\,\chi)<0$.
\begin{figure}[t]
\centering
\includegraphics[width=\linewidth]{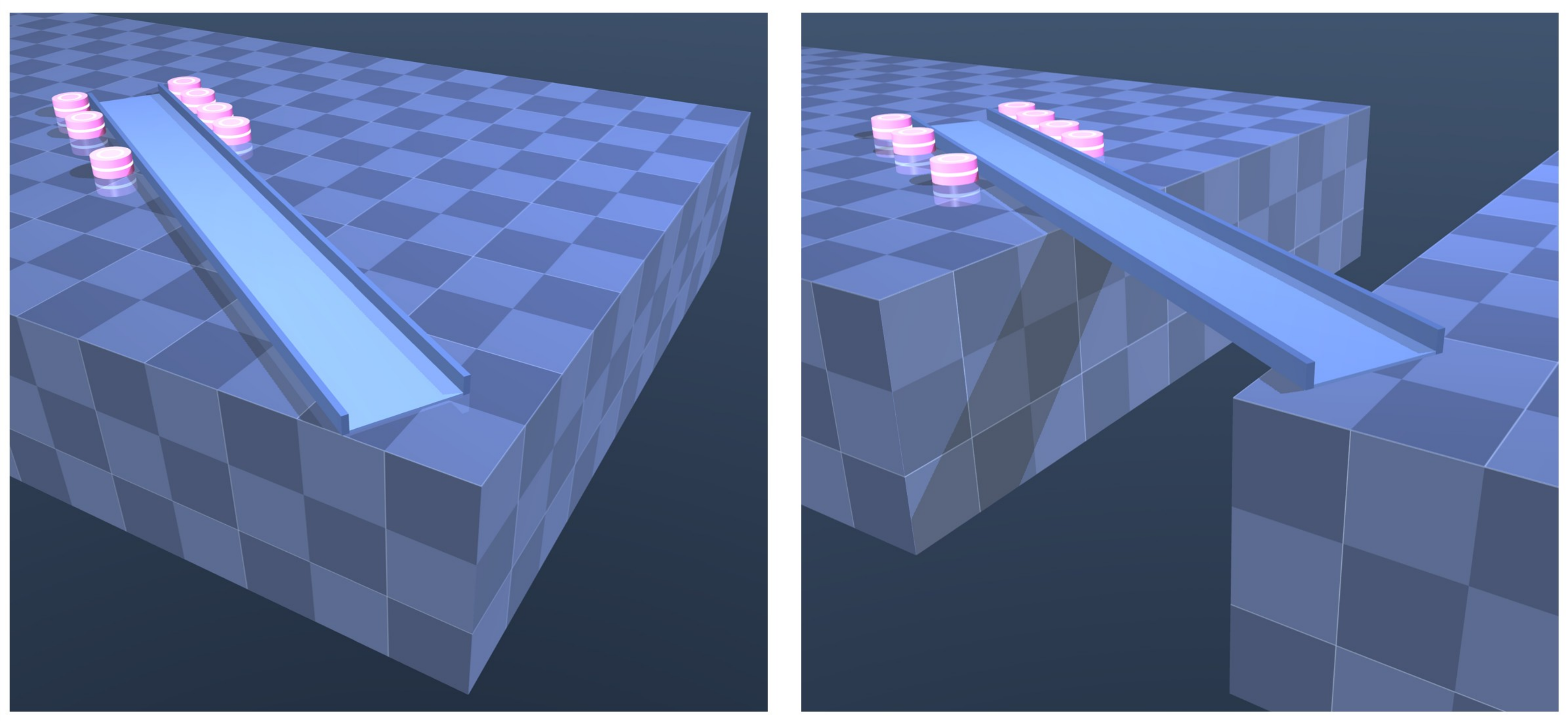}
\caption{An $N=7$ trial of Task~1. Left: the policies generated by the nominal vector field are driving the plank out of the workspace. Right: the policies with SE guidance lead to successful bridge construction.}
\label{fig:task1_rescue}
\end{figure}
\begin{figure}[t]
\centering
\includegraphics[width=0.53\linewidth]{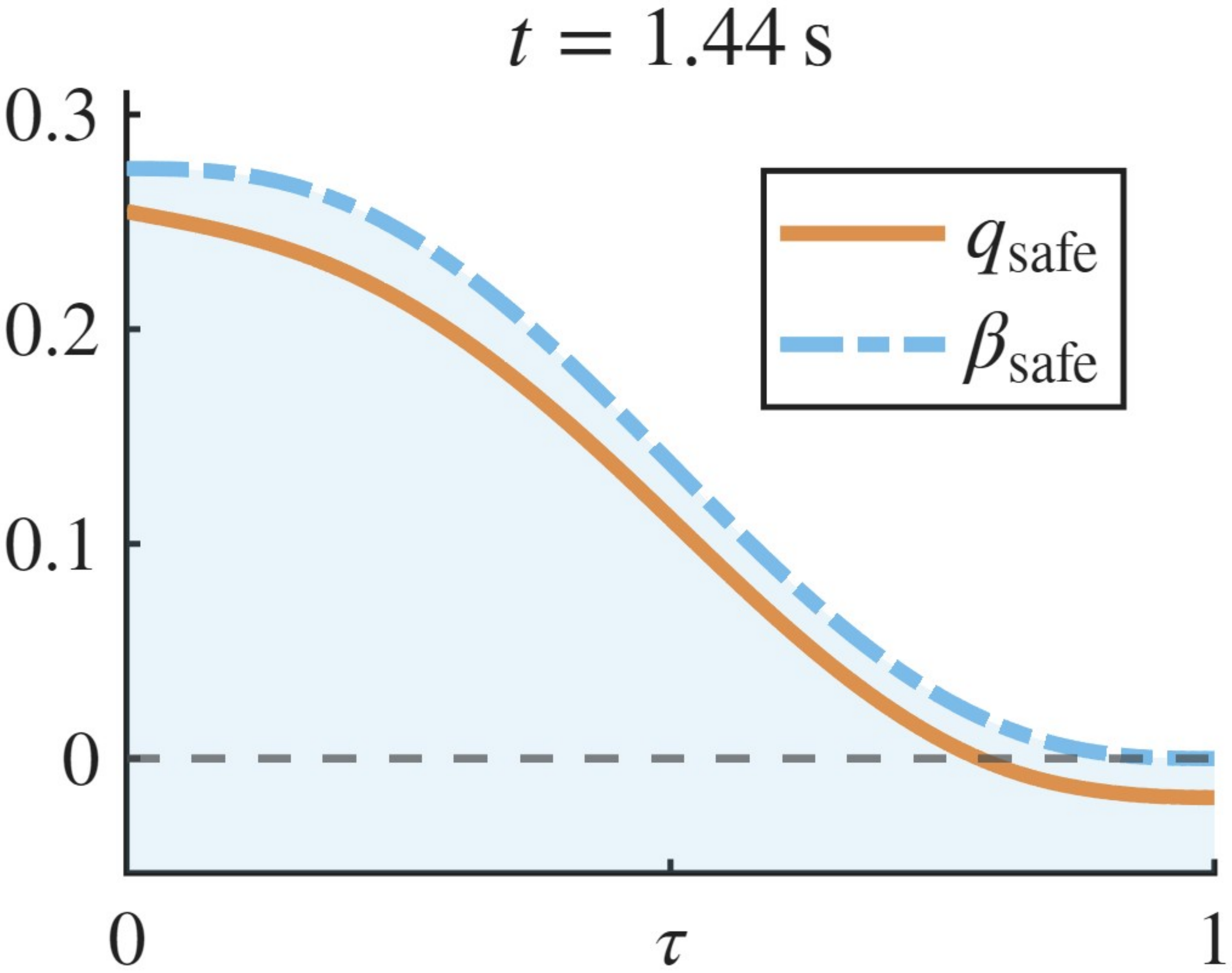}
\caption{Evolution of $q_{\rm safe}$ and its time-varying upper bound $\beta_{\rm safe}$ during a generative process with SE guidance at a physical time right before the policies generated by the nominal vector field drive the plank out of the workspace in an $N=7$ trial.}
\label{fig:task1_safe_sampling}
\end{figure}

Figure~\ref{fig:task1_learning} shows the training and validation losses over three independent training seeds. The lines show the mean across seeds, and the shaded regions show $\pm$ one sample standard deviation. 
\begin{figure}[t]
\centering
\includegraphics[width=0.8\linewidth]{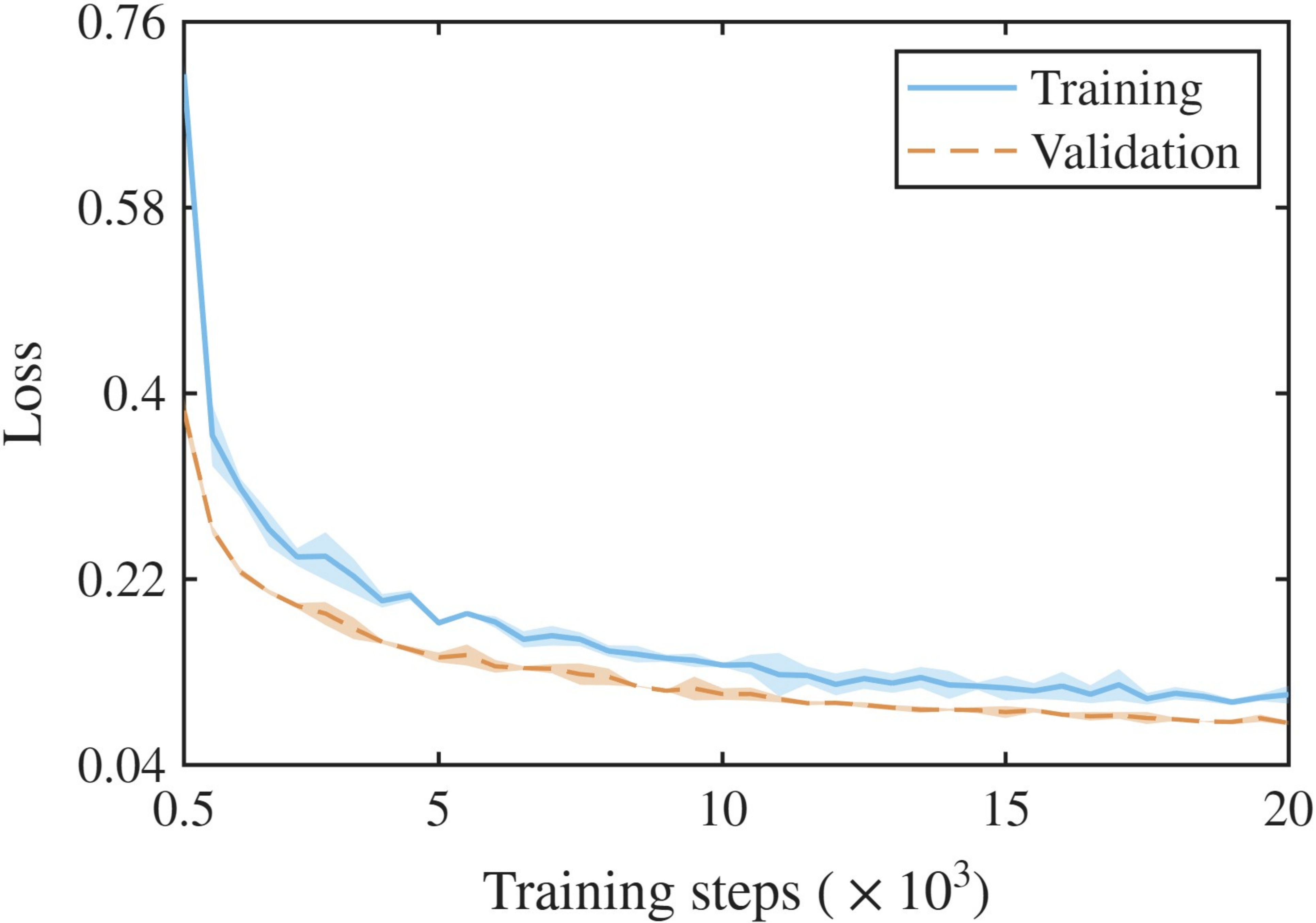}
\caption{Training and validation losses over three independent seeds for the GNN-represented nominal vector field in Task~1. Shaded regions represent $\pm$ one sample standard deviation.}
\label{fig:task1_learning}
\end{figure}

\section{Task 2: Detailed Settings and Additional Results} \label{appendix:task2_details}

\subsection{Detailed settings for Task~2}
Task~2 considers tabletop scenes with a library of 12 object types. For each scene, $N$ different objects are randomly selected from the library. Each object has a fixed geometric shape and one or more nearby regions that should remain unobstructed when the object is used, which are referred to as affordance regions. A user-specified affordance requirement can change the location, size, or direction of these regions relative to the object. An object can therefore be required to remain accessible from a different side, to have more free space around it, or to satisfy combinations of these changes. The complete object library, geometric representations, affordance-region definitions, and ranges used to generate modified requirements are given below.

For Task~2, choose $g_i=I_3$, $\delta^{\mathrm{PE}}=0.02$,
$$
\omega^{\mathrm{PE}}(\tau)=0.1+1.8\tau,
$$
$$
c^{\mathrm{PE}}=
\frac{1.15}{(1-\rho^{\mathrm{PE}})}
\max\!\left\{V^{\mathrm{PE}}(0),10^{-30}\right\}^{1-\rho^{\mathrm{PE}}},
$$
and $\rho^{\mathrm{PE}}=0.75$. The tabletop is centered at the origin and has size $1.40\times0.94$. For each object $i$, $z_i\in\mathbb R^3$ is decoded into a planar position $p_i\in\mathbb R^2$ and an angular coordinate $\theta_i\in\mathbb R$ by $[p_i^\top,\theta_i]^\top =\operatorname{diag}(0.70,0.47,1.35)\,z_i$ with the corresponding orientation $R_i(\theta_i)\in\mathbb{SO}(2)$. Each type of object also has a fixed geometric shape defined in its body frame, represented as a disk, a rounded rectangle, or a combination of these shapes. 

Table~\ref{tab:task2_library} summarizes the 12 object categories, with each category showing at most once in a scene. Denote $C(r)$ for a disk of radius $r$, $R(a,b,r_c)$ for a rounded rectangle with half width $a$, half height $b$, and rounded corner radius $r_c$, and $A(x,y,r)$ for a circular affordance region of radius $r$ centered at $(x,y)$ in the object's body frame. For multiple affordance regions with the same $y$-coordinate and radius, we list their $x$-coordinates as a set. For example, $A(\{x_1,\ldots,x_m\},y,r)$ denotes circles centered at $(x_1,y),\ldots,(x_m,y)$, each with radius $r$. For the $q$th affordance region of object $i$, let $a_{iq}\in\mathbb R^2$ denote the center of the region in the body frame of object $i$, and let $r_{iq}>0$ denote its radius. A user-specified requirement is parameterized by $(s_i^o,s_i^r,\varphi_i)$, where $s_i^o$ changes the distance of the affordance region from the object, $s_i^r$ changes its radius, and $\varphi_i$ changes its direction relative to the object. Then, the center and radius expressed in the world frame are $ a_{iq}^{W}= p_i+ R(\theta_i+\varphi_i)s_i^o a_{iq}$ and $r_{iq}^{W}=s_i^r r_{iq}$, respectively. The default requirement corresponds to $(s_i^o,s_i^r,\varphi_i)=(1,1,0)$. A user may instead require an object to be accessible from another side, require more free space around the object, or combine these changes. We sample $s_i^o\in[0.98,1.12]$ and $s_i^r\in[1.04,1.24]$. For a change in access direction, $\varphi_i$ is selected from the allowed directions for that object type, with an additional perturbation in $[-\pi/22.5,\pi/22.5]$. The alternative directions are $-\pi/6$ and $\pi/6$ for drawer organizer and drawing board; $-\pi/2$ and $\pi/2$ for desk lamp, handled mug, and tissue box; $\pi$, $-\pi/2$, and $\pi/2$ for teacup, laptop, scissors, pencil case, and pen cup; and $-\pi/2$, $\pi/2$, and $\pi$ for potted plant and compact printer.
\begin{table*}[t]
\caption{Object geometry and default affordance regions for Task~2}
\label{tab:task2_library}
\centering
\small
\setlength{\tabcolsep}{8pt}
\renewcommand{\arraystretch}{1.08}
\begin{tabular}{lll}
\toprule
Object & Geometry & Affordance region(s)\\
\midrule
Drawer organizer
& $R(0.156,0.090,0.012)$
& $A(\{-0.1,0,0.1\},-0.205,0.108)$\\
Teacup
& $C(0.046)\cup(C(0.019)+(0.061,0))$
& $A(0.125,0,0.055)$\\
Desk lamp
& $C(0.06)$
& $A(0,-0.147,0.079)$\\
Laptop
& $R(0.085,0.065,0.006)$
& $A(0.155,0,0.07)$\\
Potted plant
& $C(0.078)$
& $A(0,-0.123,0.062)$\\
Handled mug
& $C(0.067)$
& $A(0,-0.141,0.054)$\\
Drawing board
& $R(0.135,0.048,0.008)$
& $A(\{-0.065,0.065\},-0.108,0.05)$\\
Scissors
& $R(0.039,0.026,0.016)$
& $A(0.083,0,0.043)$\\
Compact printer
& $R(0.099,0.07,0.009)$
& $A(0,-0.133,0.057)$\\
Pencil case
& $R(0.037,0.072,0.007)$
& $A(0.087,0,0.046)$\\
Tissue box
& $R(0.049,0.04,0.009)$
& $A(0,-0.103,0.047)$\\
Pen cup
& $C(0.034)$
& $A(0.083,0,0.039)$\\
\bottomrule
\end{tabular}
\end{table*}

In addition, training contains 4,000 scenes for each $N\in\{4,5,6\}$ and 400 validation scenes for each $N$. For every scene, $N$ different types of objects are sampled uniformly without replacement from the library. Thus, all object categories appear during training, while $N=7,8$ evaluate generalization to scenes containing more objects than those used for training. The nominal vector field is represented by a GNN shared by every agent. It is trained using \eqref{eqn:macfm_loss} without augmentation.

\subsection{Additional results for Task 2} \label{app:task2_results}
We additionally evaluate five types of requirement: the default requirements, opposite-side access, side-direction access, a larger affordance region, and combinations of these changes. Opposite-side access selects the direction farthest from the default one, and side-direction access selects one of the other non-default directions. Each type contains 50 cases, with 10 cases for each $N\in\{4,5,6,7,8\}$. As shown in Table~\ref{tab:task2_families}, Guided directly generates the scenes satisfying all 50 cases in every type of requirement without retraining the nominal vector field.
\begin{table*}[t]
\centering
\small
\setlength{\tabcolsep}{10pt}
\caption{Task~2 results with different affordance requirements.}
\label{tab:task2_families}
\begin{tabular}{lccc}
\toprule
Requirement
& Natural Nominal
& Private Nominal
& Guided\\
\midrule
Default
& $27/50$ (54\%)
& $27/50$ (54\%)
& $50/50$ (100\%)\\
Opposite-side access
& $27/50$ (54\%)
& $23/50$ (46\%)
& $50/50$ (100\%)\\
Side-direction access
& $31/50$ (62\%)
& $23/50$ (46\%)
& $50/50$ (100\%)\\
Larger affordance region
& $23/50$ (46\%)
& $22/50$ (44\%)
& $50/50$ (100\%)\\
Combined
& $27/50$ (54\%)
& $17/50$ (34\%)
& $50/50$ (100\%)\\
\bottomrule
\end{tabular}
\end{table*}
\begin{figure*}[t]
\centering
\includegraphics[width=0.4\linewidth]{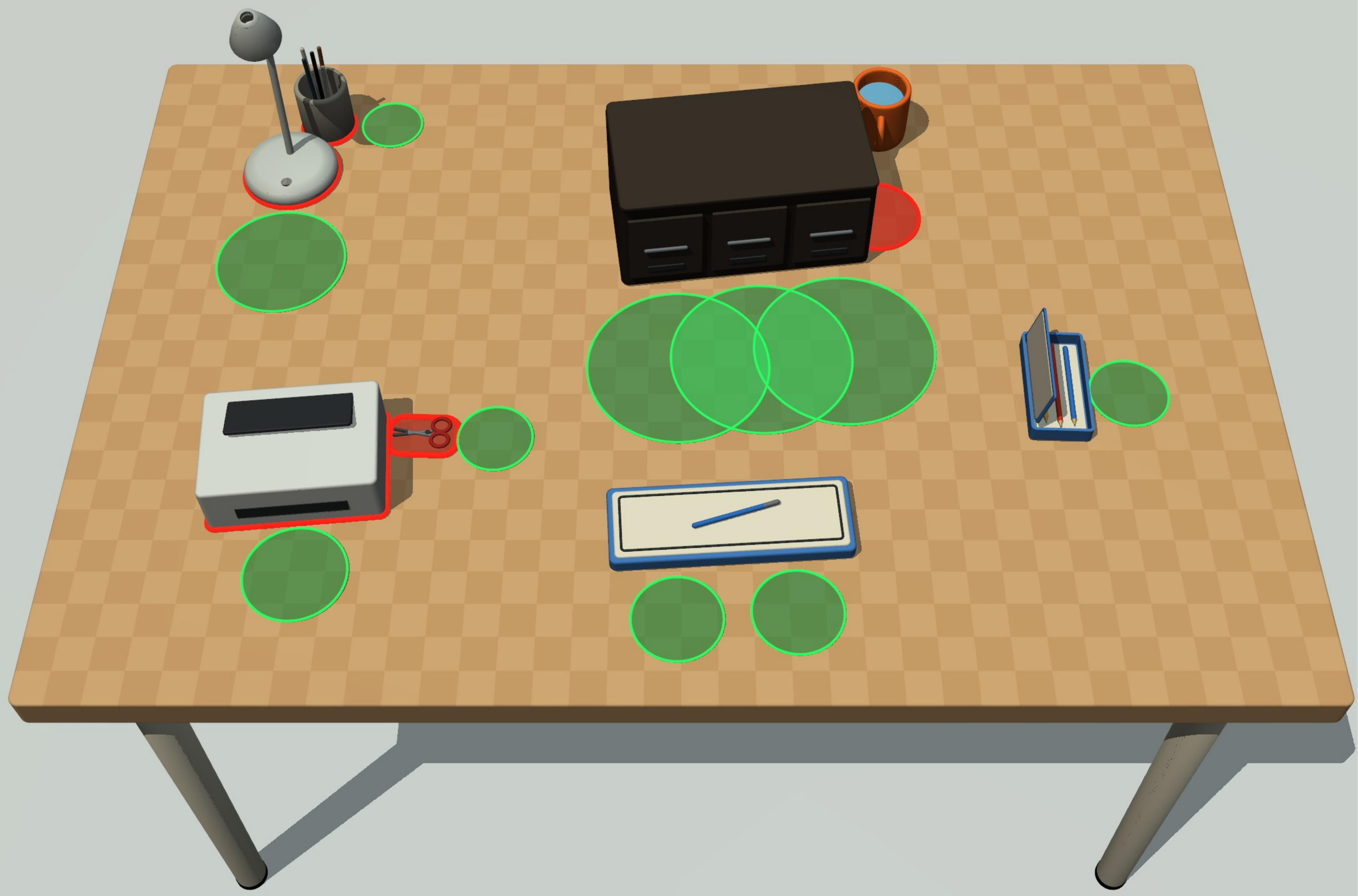}
\hspace{0.02\linewidth}
\includegraphics[width=0.4\linewidth]{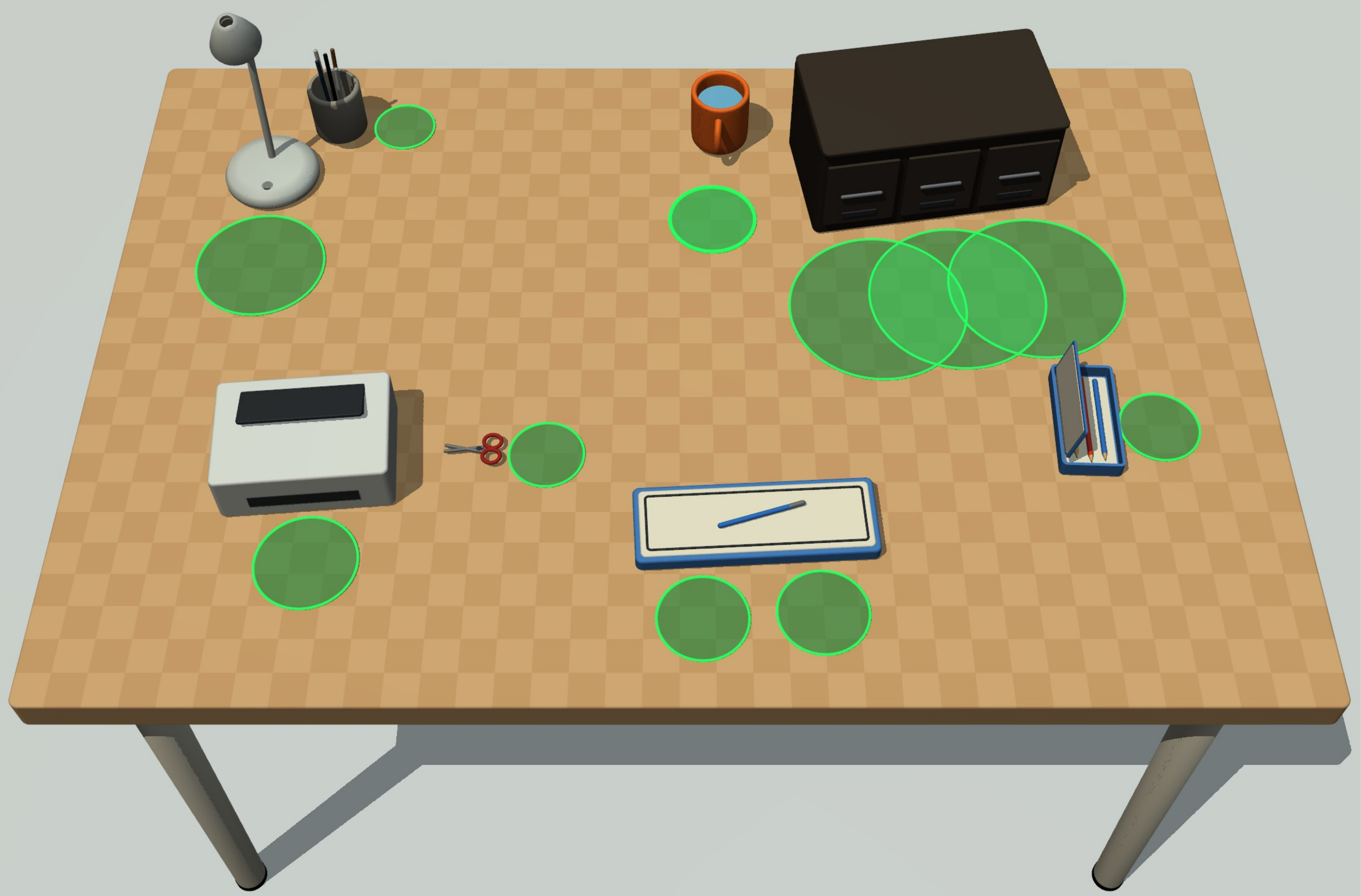}
\caption{An $N=8$ example of Task~2. Green circles denote unobstructed affordance regions, a red circle denotes an obstructed affordance region, and red object outlines indicate pairs of objects that are too close to each other. Left: the compact printer and scissors, and the desk lamp and pen cup, are too close to each other, and the drawer organizer obstructs the affordance region of the handled mug. Right: PE guidance resolves all three violations from the same initial sample without any model retraining.}
\label{fig:task2_canonical_rescue}
\end{figure*}
Figure~\ref{fig:task2_canonical_rescue} shows a complementary $N=8$ example without any user-specified change to the default requirements. The scene generated by the nominal vector field contains several violations: the compact printer and scissors are too close to each other, the desk lamp and pen cup are also too close to each other, and the drawer organizer obstructs the affordance region of the handled mug. Starting from the same initial Gaussian sample, PE guidance directly generates a scene satisfying all of these requirements, without any model retraining.

Figure~\ref{fig:task2_learning} shows the training and validation flow matching losses over three independent training seeds. The lines show the mean across seeds, and the shaded regions show $\pm$ one sample standard deviation.

We further evaluate how much PE guidance changes the distribution characterized by the nominal flow. Task~2 provides a natural setting for this evaluation as each generative process directly produces one complete scene consisting of the poses of all $N$ objects. For each fixed $N\in\{4,5,6,7,8\}$, let $M=50$ denote the number of evaluated scenes. For scene $m\in\{1,\ldots,M\}$ and object $i\in\{1,\ldots,N\}$, denote the final generated states by Nominal and Guided by $z_{m,i}^{\rm Nom}$ and $z_{m,i}^{\rm Gui}$, respectively. The corresponding joint generative states of the complete scenes are then denoted by $\mathbf z_m^{\rm Nom}=[(z_{m,1}^{\rm Nom})^\top, \ldots, (z_{m,N}^{\rm Nom})^\top]^{\top}$ and $\mathbf z_m^{\rm Gui}=[(z_{m,1}^{\rm Gui})^\top, \ldots, (z_{m,N}^{\rm Gui})^\top]^{\top}$, respectively. Thus, the two sets $\{\mathbf z_m^{\rm Nom}\}_{m=1}^{M}$ and $\{\mathbf z_m^{\rm Gui}\}_{m=1}^{M}$ approximate the corresponding Nominal and Guided scene distributions. First, we evaluate the distance directly in the generative state space. For two scene states $\mathbf z=[z_1^\top,\ldots,z_N^\top]^\top$ and $\mathbf z'=[(z_1')^\top,\ldots,(z_N')^\top]^\top$, denote the normalized Euclidean distance as $d_z^2(\mathbf z,\mathbf z') = \frac{1}{N} \sum_{i=1}^{N} \|z_i-z_i'\|_2^2$. Using the $M$ generated scenes, we approximate the Wasserstein distance between the Nominal and Guided joint distributions as
\begin{equation*}
\widehat W_{2,z}^{\,2}(N)
=
\min_{\sigma\in\mathfrak S_M}
\frac{1}{M}
\sum_{m=1}^{M}
d_z^2
\left(
\mathbf z_m^{\rm Nom},
\mathbf z_{\sigma(m)}^{\rm Gui}
\right),
\end{equation*}
where $\mathfrak S_M$ denotes the set of all permutations of $\{1,\ldots,M\}$. Thus, $\sigma$ specifies a optimal transport assignment between the $M$ Nominal scenes and the $M$ Guided scenes.

We also evaluate the same distributional deviation after decoding into object poses using $[(p_{m,i}^{b})^\top,\theta_{m,i}^{b}]^\top =
\operatorname{diag}(0.70,0.47,1.35)\,z_{m,i}^{b}$, with $b\in\{{\rm Nom},{\rm Gui}\}$ representing Nominal and Guided, and $g_{m,i}^{b} \in\mathbb{SE}(2)$. Denote the complete decoded scene as $\mathbf g_m^{b} = \left(g_{m,1}^{b}, \ldots, g_{m,N}^{b} \right)$. For two object poses $g_i$ and $g_i'$, we use the geodesic \citep{park1995distance}
\begin{equation*}
d_{\mathbb{SE}(2)}^2(g_i,g_i')
=
\|p_i-p_i'\|_2^2
+
\lambda_\theta^2
\Delta\theta_i^2,
\end{equation*}
where $\Delta\theta_i = \operatorname{atan2} \left(\sin(\theta_i-\theta_i'), \cos(\theta_i-\theta_i') \right) \in[-\pi,\pi]$, and we choose $\lambda_\theta=0.1$, which places translational and rotational differences on a common length scale comparable to the sizes of the objects in Table~\ref{tab:task2_library}. For two decoded scenes $\mathbf g=(g_1,\ldots,g_N)$ and $\mathbf g'=(g_1',\ldots,g_N')$, define the scene distance by
\begin{equation*}
d_{\rm scene}^2(\mathbf g,\mathbf g')
=
\frac{1}{N}
\sum_{i=1}^{N}
d_{\mathbb{SE}(2)}^2(g_i,g_i').
\end{equation*}
Then the corresponding approximation of the Wasserstein distance is
\begin{equation*}
\widehat W_{2,g}^{\,2}(N)
=
\min_{\sigma\in\mathfrak S_M}
\frac{1}{M}
\sum_{m=1}^{M}
d_{\rm scene}^2
\left(
\mathbf g_m^{\rm Nom},
\mathbf g_{\sigma(m)}^{\rm Gui}
\right).
\end{equation*}

The magnitude of a Wasserstein distance is difficult to interpret without a reference scale, so we compare it with the root mean square (RMS) pairwise distance among the $M$ complete scenes generated by the nominal vector field. In the generative state space, define
\begin{equation*}
S_{{\rm Nom},z}(N)
=
\left(\!
\frac{2}{M(M-1)}
\sum_{1\le m<n\le M}\!\!
d_z^2
\left(
\mathbf z_m^{\rm Nom},
\mathbf z_n^{\rm Nom}
\right)
\!\right)^{\!\!\frac{1}{2}}\!.
\end{equation*}
Similarly, in the decoded physical space, define
\begin{equation*}
S_{{\rm Nom},g}(N)
=
\left(\!\!
\frac{2}{M(M-1)}\!
\sum_{1\le m<n\le M}\!\!\!
d_{\rm scene}^2
\left(
\mathbf g_m^{\rm Nom},
\mathbf g_n^{\rm Nom}
\right)
\!\!\right)^{\!\!\!\frac{1}{2}}\!\!.
\end{equation*}

We then plot the two ratios $\frac{\widehat W_{2,z}(N)}{S_{{\rm Nom},z}(N)}$ and $\frac{\widehat W_{2,g}(N)}{S_{{\rm Nom},g}(N)}$ in Figure~\ref{fig:task2_w2}, where the first ratio measures the Wasserstein distance between the Nominal and Guided distributions in the generative state space relative to the RMS pairwise distance among the $M$ complete scenes generated by the nominal vector field, and the second ratio measures the one in the decoded pose space. For example, a value of $0.3$ means that the Wasserstein distance between the Nominal and Guided distributions is $0.3$ times the RMS pairwise distance among the $M$ complete scenes generated by the nominal vector field. As the number of objects increases, both curves increase gradually, meaning that guidance makes larger modifications when there are more objects and thus more coupled requirements. As shown in Figure~\ref{fig:task2_w2}, in the generative state space, $\widehat W_{2,z}(N)/S_{{\rm Nom},z}(N)$ increases from $20.6\%$ for $N=4$ to $34.5\%$ for $N=8$. In the decoded pose space, $\widehat W_{2,g}(N)/S_{{\rm Nom},g}(N)$ increases from $20.5\%$ to $33.7\%$. Thus, even for $N=8$, the Wasserstein distance between the Nominal and Guided distributions is only about one third of the RMS pairwise distance among the complete scenes generated by the nominal vector field that measures how spread out the generated scenes are. The results show that guidance introduce a distributional deviation that is much smaller than the variation among scenes generated by the nominal vector field.

We additionally examine whether guidance reduces how spread the generated objects are. Define
\begin{equation*}
S_{{\rm Gui},z}(N)
=
\left(
\frac{2}{M(M-1)}
\sum_{1\le m<n\le M}
d_z^2
\left(
\mathbf z_m^{\rm Gui},
\mathbf z_n^{\rm Gui}
\right)
\right)^{\frac{1}{2}},
\end{equation*}
and
\begin{equation*}
S_{{\rm Gui},g}(N)
=
\left(\!
\frac{2}{M(M-1)}
\sum_{1\le m<n\le M}\!\!
d_{\rm scene}^2
\left(
\mathbf g_m^{\rm Gui},
\mathbf g_n^{\rm Gui}
\right)
\!\right)^{\!\!\frac{1}{2}}\!\!.
\end{equation*}
These quantities are defined in exactly the same way as $S_{{\rm Nom},z}(N)$ and $S_{{\rm Nom},g}(N)$, but using the $M$ complete scenes generated with guidance. Across $N=4,\ldots,8$, $S_{{\rm Gui},z}(N)/S_{{\rm Nom},z}(N)$ ranges from $98.3\%$ to $99.9\%$, and $S_{{\rm Gui},g}(N)/S_{{\rm Nom},g}(N)$ ranges from $99.1\%$ to $100.5\%$, meaning that the RMS pairwise distance among Guided scenes is almost the same as that among scenes generated by the nominal vector field. 

\balance

These results show that the generated objects with PE guidance satisfy the desired hard requirements while leading to only a moderate deviation of the distribution resulting from the nominal vector field, and approximately preserving the spread of the generated scenes.
\begin{figure}[htpb]
\centering
\includegraphics[width=0.7\linewidth]{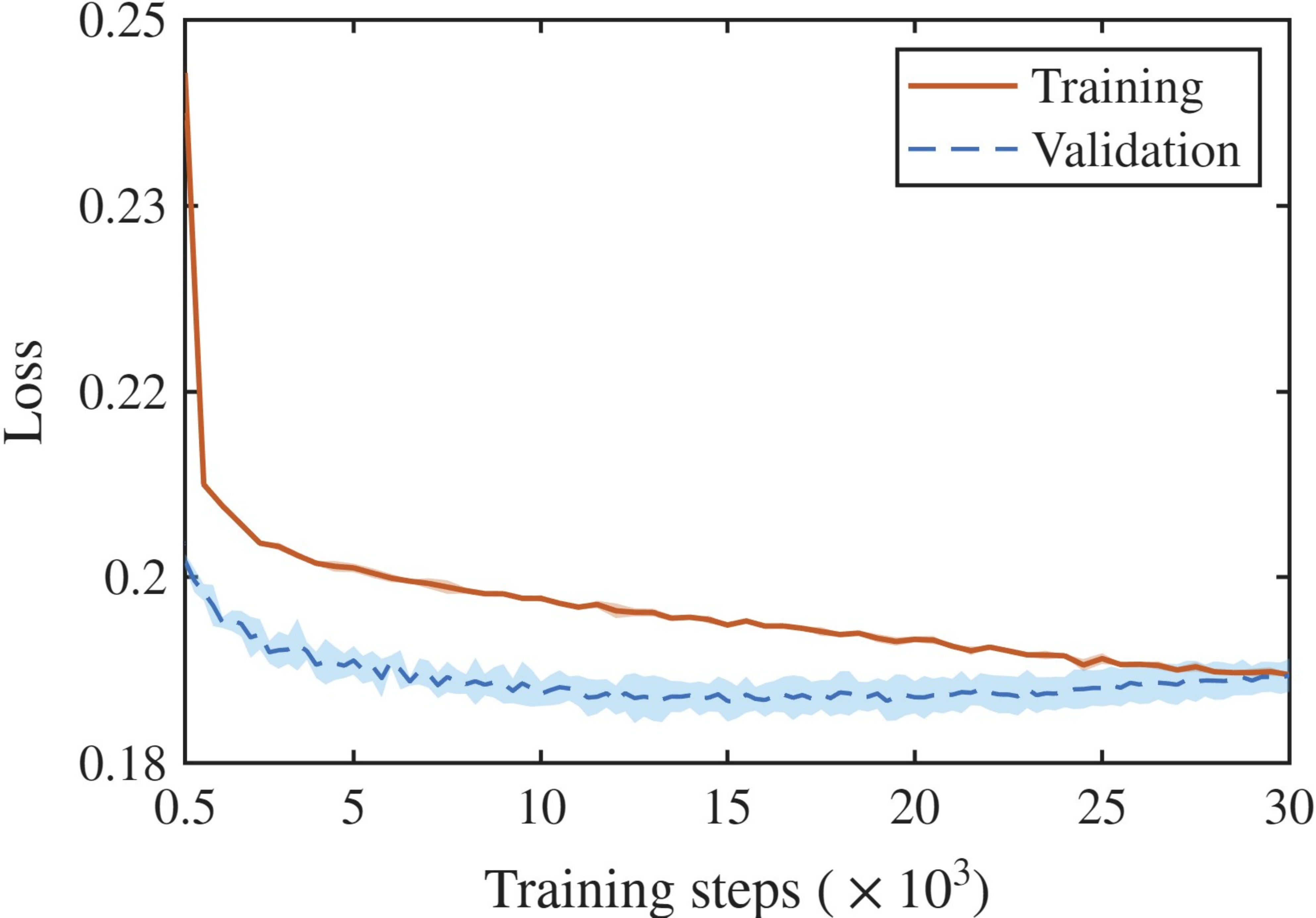}
\caption{Training and validation flow matching losses over three independent seeds for the nominal vector field in Task~2. Shaded regions represent $\pm$ one sample standard deviation.}
\label{fig:task2_learning}
\end{figure}
\begin{figure}[htbp]
\centering
\includegraphics[width=0.63\linewidth]{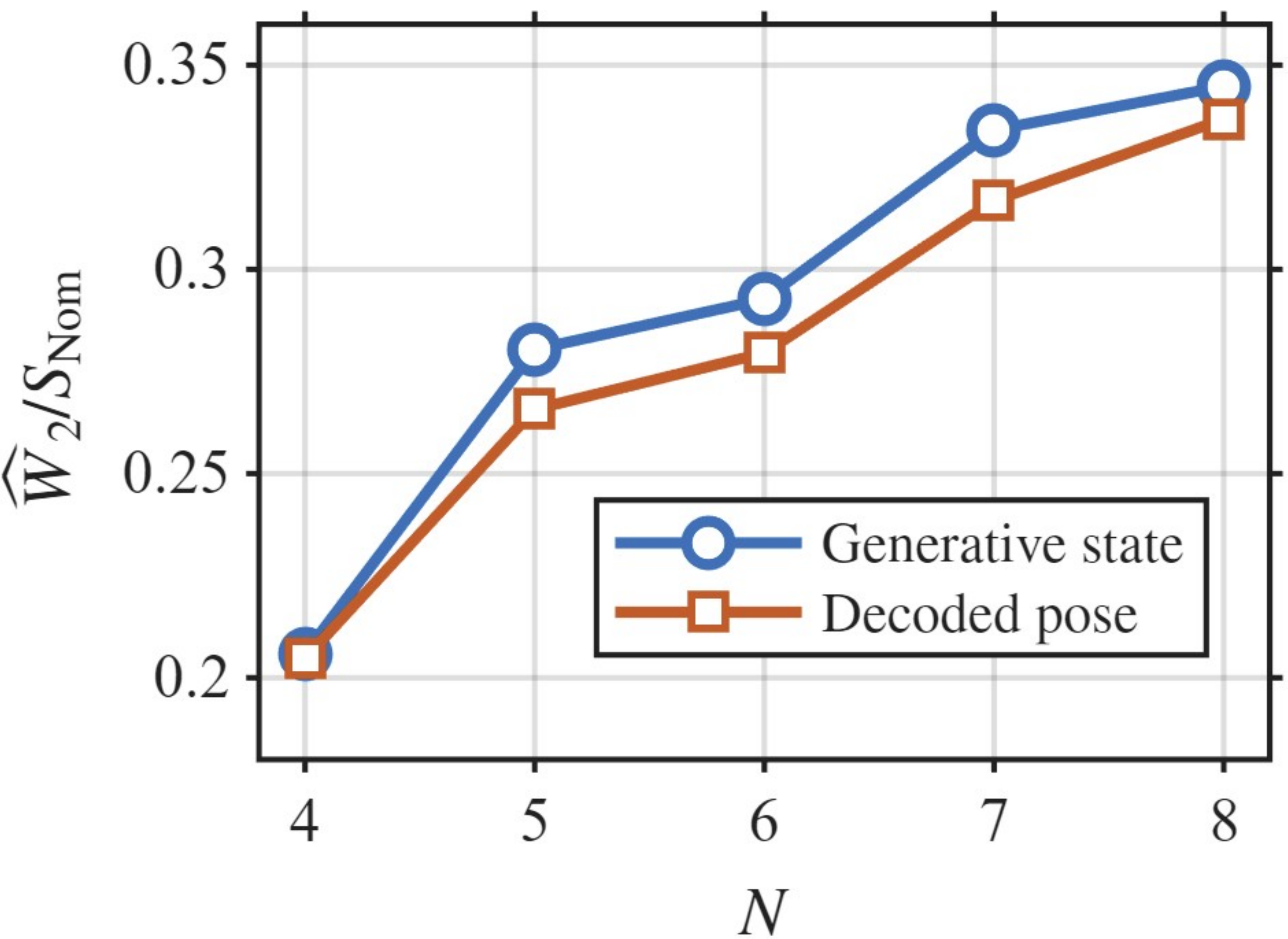}
\caption{Approximation of the Wasserstein distance between the Nominal and Guided distributions in Task~2 with respect the total number of generated objects, normalized by the RMS pairwise distance among the $M=50$ scenes generated by the nominal vector field. The blue and orange curves use the distance metrics in the generative state space and the decoded pose space, respectively.}
\label{fig:task2_w2}
\end{figure}

\end{document}